\newtheorem{theorem}{Theorem}
\newcommand{\Real}{\mathbb{R}}
\newcommand{\Sphere}{\mathbb{S}}
\newcommand{\Ind}{\mathds{I}}
\newcommand{\E}{\mathbb{E}}
\newcommand{\aw}{\mathbf{a}}
\newcommand{\bias}{\mathbf{b}}
\newcommand{\f}{\mathbf{f}}
\newcommand{\g}{\mathbf{g}}
\newcommand{\uu}{\mathbf{u}}
\newcommand{\vv}{\mathbf{v}}
\newcommand{\w}{\mathbf{w}}
\newcommand{\x}{\mathbf{x}}
\newcommand{\y}{\mathbf{y}}
\newcommand{\z}{\mathbf{z}}
\newcommand{\comment}[1]{}
\newcommand{\rb}[1]{\textcolor{blue}{[Ronen: #1]}}
\newcommand{\dwj}[1]{\textcolor{magenta}{[David: #1]}}
\title{The Convergence Rate of Neural Networks for Learned Functions of Different Frequencies} %A Frequency Analysis of the Convergence Rate of Neural Networks}
\author{Ronen Basri$^1$ \hspace{1cm} David Jacobs$^2$ \hspace{1cm} Yoni Kasten$^1$ \hspace{1cm} Shira Kritchman$^1$\\~\\
$^1$Department of Computer Science, Weizmann Institute of Science, Rehovot, Israel\\
$^2$Department of Computer Science, University of Maryland, College Park, MD}
\begin{document}

\maketitle

\begin{abstract}
We study the relationship between the frequency of a function and the speed at which a neural network learns it.  We build on recent results that show that the dynamics of overparameterized neural networks trained with gradient descent can be well approximated by a linear system.  When normalized training data is uniformly distributed on a hypersphere, the eigenfunctions of this linear system are spherical harmonic functions.  We derive the corresponding eigenvalues for each frequency after introducing a bias term in the model.  This bias term had been omitted from the linear network model without significantly affecting previous theoretical results.  However, we show theoretically and experimentally that a shallow neural network without bias cannot represent or learn simple, low frequency functions with odd frequencies.  Our results lead to specific predictions of the time it will take a network to learn functions of varying frequency.  These predictions match the empirical behavior of both shallow and deep networks.
\end{abstract}

\section{Introduction}

Neural networks have proven effective even though they often contain a large number of trainable parameters that far exceeds the training data size.  This defies conventional wisdom that such overparameterization would lead to overfitting and poor generalization.  The dynamics of neural networks trained with gradient descent can help explain this phenomenon.  If networks explore simpler solutions before complex ones, this would explain why even overparameterized networks settle on simple solutions that do not overfit.  It will also imply that early stopping can select simpler solutions that generalize well, \cite{goodfellow2016deep}. This is demonstrated in Figure \ref{fig:killer}-left.

We analyze the dynamics of neural networks using a frequency analysis (see also \cite{rahaman2018spectral,Xu2018,Xu2019,farnia2018spectral}, discussed in Section \ref{sec:prior}).  Building on \cite{xie2017diverse,du2018gradient,arora2019fine} (and under the same assumptions) we show that when a network is trained with a regression loss to learn a function over data drawn from a uniform distribution, it learns the low frequency components of the function significantly more rapidly than the high frequency components (see Figure \ref{fig:twosines}).

\begin{figure}[tb]
\centering
\includegraphics[height=2.7cm]{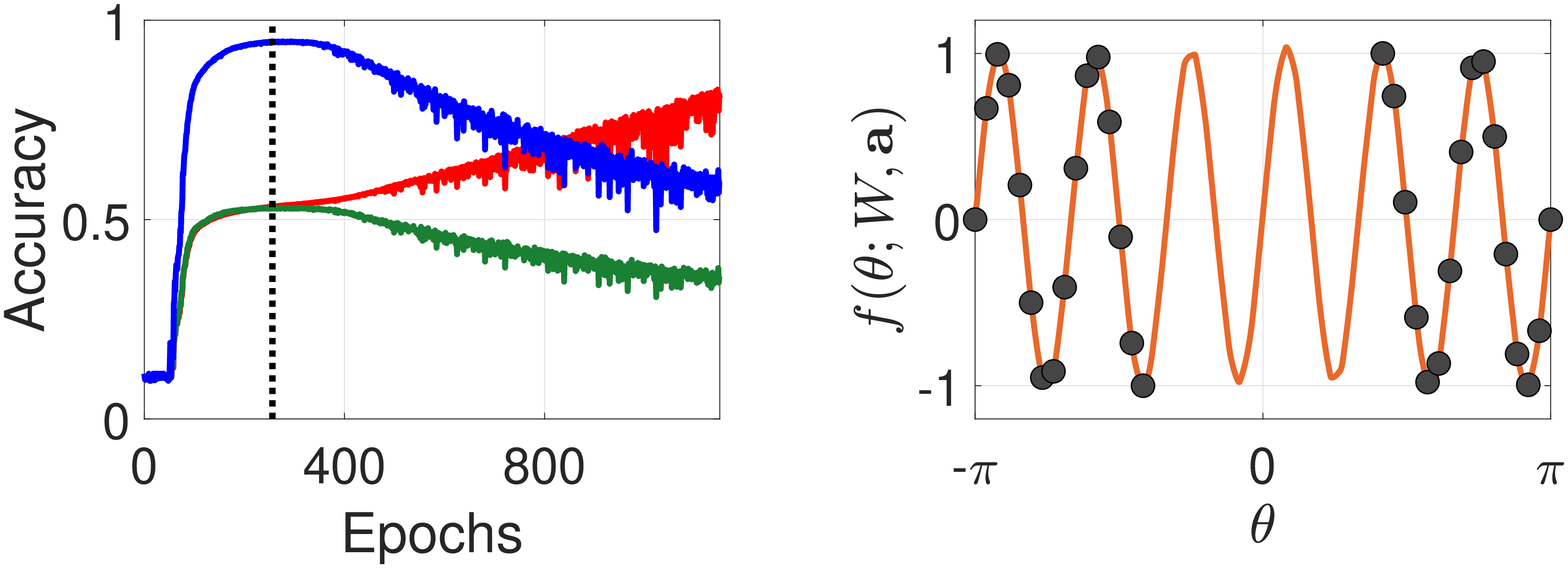}
\caption{\small Left: We train a CNN on MNIST data with 50\% of the labels randomly changed.  As the network trains, accuracy on uncorrupted test data (in blue) first improves dramatically, %reaching a level of 95.3\% at epoch 256 (marked by the dotted vertical line),
suggesting that the network first successfully fits the uncorrupted data. Test accuracy then decreases as the network memorizes the incorrectly labeled data. The green curve shows accuracy on test data with mixed correctly/incorrectly labeled data, while the red curve shows training accuracy. (Other papers also mention this phenomenon, e.g., \cite{li2019gradient})
Right: Given the 1D training data points ($\x_1,...,\x_{32}\in\Sphere^1$) marked in black, a two layer network learns the function represented by the orange curve, interpolating the missing data to form an approximate sinusoid of low frequency.}
\label{fig:killer}
%\end{figure}
%
%\begin{figure}[tb]
\centering
\includegraphics[height=2.8cm]{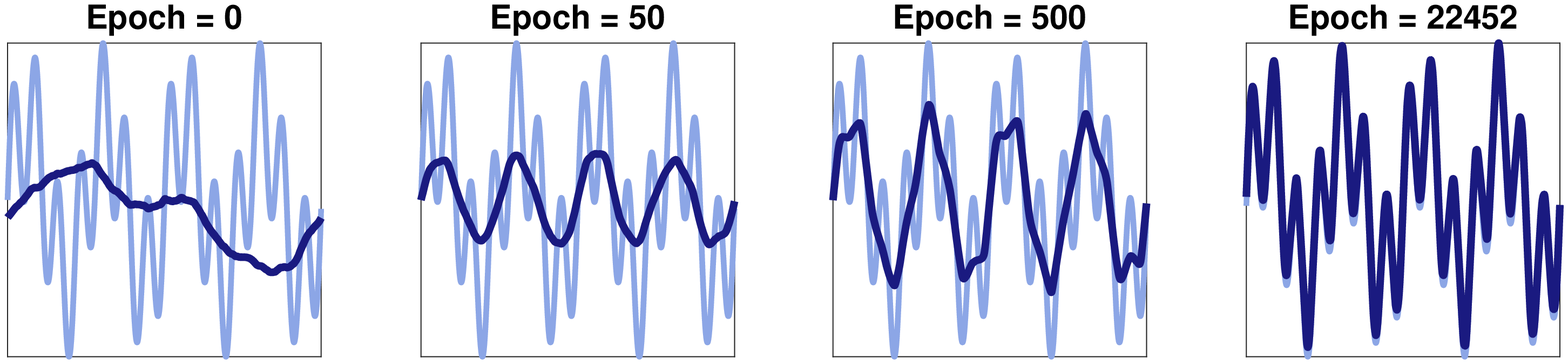}
\caption{\small Network prediction (dark blue) for a superposition of two sine waves with frequencies $k=4,14$ (light blue).  The network
%(3 fully connected hidden layers with 256 units)
fits the lower frequency component of the function after 50 epochs, while fitting the full function only after $\sim$22K epochs.}
\label{fig:twosines}
\end{figure}

Specifically, \cite{du2018gradient,arora2019fine} show that the time needed to learn a function, $f$, is determined by the projection of $f$ onto the eigenvectors of a matrix $H^\infty$, and their corresponding eigenvalues.   \cite{xie2017diverse} had previously noted that for uniformly distributed training data, the eigenvectors of this matrix are spherical harmonic functions (analogs to the Fourier basis on hyperspheres).  This work makes a number of strong assumptions.  They analyze shallow, massively overparameterized networks with no bias.  Data is assumed to be normalized.

Building on these results, we compute the eigenvalues of this linear system.  % The target function can be decomposed into spherical harmonics;
Our computation allows us to make specific predictions about how quickly each frequency of the target function will be learned. For example, for the case of 1D functions, we show that a function of frequency $k$ can be learned in time that scales as $k^2$.  We show experimentally that this prediction is quite accurate, not only for the simplified networks we study analytically, but also for realistic deep networks.

Bias terms in the network may be neglected without affecting previous theoretical results.  However, we show that without bias, two-layer neural networks cannot learn or even represent functions with odd frequencies.  % Consequently, the corresponding eigenvalues are 0.
This means that in the limit of large data, the bias-free networks studied by \cite{xie2017diverse,du2018gradient,arora2019fine} cannot learn certain simple, low-frequency functions.  We show experimentally that a real shallow network with no bias cannot learn such functions in practice.  We therefore modify the model to include bias.  We show that with bias added, the eigenvectors remain spherical harmonics, and that odd frequencies can be learned at a rate similar to even frequencies.

Our results show that essentially a network first fits the training data with low frequency functions and then gradually adds higher and higher frequencies to improve the fit.
Figure \ref{fig:killer}-right shows a rather surprising consequence of this.  A deep network is trained on the black data points.  The orange curve shows the function the network learns.  Notice that where there is data missing, the network interpolates with a low frequency function, rather than with a more direct curve.  This is because a more straightforward interpolation of the data, while fairly smooth, would contain some high frequency components.  The function that is actually learned is almost purely low frequency\footnote{
\cite{finn2017model} show a related figure.  In the context of meta-learning they show that a network trained to regress to sine waves can learn a new sine wave from little training data.  Our figure shows a different phenomenon, that, when possible, a generic network will fit data with low-frequency sine waves.}.

This example is rather extreme.  In general, our results help to explain why networks generalize well and don't overfit.  Because networks learn low frequency functions faster than high frequency ones, if there is a way to fit the data with low-frequency, the network will do this instead of overfitting with a complex, high-frequency function.

% Include example w/ a few points, and maybe Shira's experiments.

\section{Prior Work}
\label{sec:prior}
Some prior work has examined the way that the dynamics or architecture of neural networks is related to the frequency of the functions they learn.
\cite{rahaman2018spectral} bound the Fourier transform of the function computed by a deep network and of each gradient descent (GD) update. Their method makes the strong assumption that the network produces zeros outside a bounded domain. A related analysis for shallow networks is presented in \cite{Xu2018,Xu2019}. Neither paper makes an explicit prediction of the speed of convergence.
\cite{farnia2018spectral} derive bounds that show that for band limited functions two-layer networks converge to a generalizable solution. \cite{montufar2014number,telgarsky2016benefits,eldan2016power} show that deeper networks can learn high frequency functions that cannot be learned by shallow networks with a comparable number of units.  \cite{shalev2017weight} analyzes the ability of networks to learn based on the frequency of functions computed by their components.
%\dwj{Ohad Shamir, deep networks learn high frequency functions more efficiently than shallow (Ohad Shamir and Eldan, Telgarsky), look at discussion in Weight Sharing is Crucial to Succesful Optimization}

Recent papers study the relationship between the dynamics of gradient descent and the ability to generalize. \cite{soudry2018implicit} shows that in logistic regression gradient descent leads to max margin solutions for linearly separable data. \cite{Brutzkus2018} shows that with the hinge loss a two layer network provably finds a generalizeable solution for linearly separable data. \cite{gunasekar2018implicit,ji2018risk} provide related results.  \cite{ji2018gradient} studies the effect of gradient descent on the alignment of the weight matrices for linear neural networks.  \cite{arora2019fine} uses the model discussed in this paper to study generalization.

%Much theoretical work on deep networks considers networks without non-linearities \cite{saxe2013exact,hardt2016identity,ji2018gradient,arora2018optimization}.
It has been shown that the weights of heavily overparameterized networks change little during training, allowing them to be accurately approximated by linear models that capture the nonlinearities caused by ReLU at initialization \cite{xie2017diverse,du2018gradient,arora2019fine}. These papers and others analyze neural networks without an explicit bias term \cite{zou2018stochastic,li2018learning,ghorbani2019linearized,allen2018convergence}.  As \cite{allen2018convergence} points out, bias can be ignored without loss of generality for these results, because a constant value can be appended to the training data after it is normalized.  However, we show that bias has a significant effect on the eigenvalues of these linear systems.

Some recent work (e.g., \cite{Chizat2019}, \cite{ghorbani2019linearized}) raises questions about the relevance of this {\em lazy training} to practical systems.  Interestingly, our experiments indicate that our theoretical predictions, based on lazy training, fit the behavior of real, albeit simple, networks.  The relevance of results based on lazy training to large-scale real-world systems remains an interesting topic for future research.

\comment{
@article{oymak2018overparameterized,
  title={Overparameterized Nonlinear Learning: Gradient Descent Takes the Shortest Path?},
  author={Oymak, Samet and Soltanolkotabi, Mahdi},
  journal={arXiv preprint arXiv:1812.10004},
  year={2018}
}

Difan Zou, Yuan Cao, Dongruo Zhou, and Quanquan Gu. Stochastic gradient descent
optimizes over-parameterized deep ReLU networks. arXiv preprint arXiv:1811.08888, 2018.

Yuanzhi Li and Yingyu Liang. Learning overparameterized neural networks via stochastic
gradient descent on structured data. In Advances in Neural Information Processing Systems,
pages 8167–8176, 2018.

Zeyuan Allen-Zhu, Li Yuanzhi, and Liang Yingyu. A convergence theory for deep learning
via over-parameterization. arXiv preprint arXiv:1811.04918, 2018.

This paper gives a nice summary of prior work on overparameterized networks:
Generalization Error Bounds of Gradient Descent for
Learning Over-parameterized Deep ReLU Networks
Yuan Cao∗ and Quanquan Gu†
}

\comment{
\begin{itemize}
\item Du et al.
\item Arora et al.  Just mention
\item Coifman and Lafon paper (every kernel is Gaussian)  NO
\item Other papers Du references that use the H matrix  -- Ronen\\
Xie et al. \cite{Xie2017} analyze normalized data w/o bias. Presents the $H^\infty$ kernel and discusses the uniform case. Mentions that the eigenfunctions in the uniform case are the spherical harmonics. Focuses mostly on bounding the smallest eigenvalue.
\item Other papers on frequency (Bengio, and other one)  Ronen\\
\cite{Rahaman19} bound the Fourier transform of the function computed by a deep network and of each GD update. Their results however do not make explicit prediction to the speed of convergence. Additionally, their method makes the stringent assumption that the network produces zeros outside a bounded domain.\\
\cite{Farnia19} derive bounds that show that for band limited functions 2-layer networks converge to a generalizeable solution.

\item Recent arxiv papers  -- Both
\item Look at papers that cite Du  -- David
\item Other papers Du refers to on convergence rates of NNs.  Ronen
\item Neural networks without bias?  The effect of bias.  David
\item Shai Shalev-Schwartz  Ronen
\item Ohad Shamir, Shai Shalev-Schwartz, the difficulty of training high frequency  Ronen
\item https://arxiv.org/pdf/1805.08522.pdf ? Ronen
\end{itemize}
}

\comment{
\dwj{Here are some papers that look interesting, but do not need to be cited by us:
Small nonlinearities in activation functions create
bad local minima in neural networks
Chulhee Yun chulheey@mit.edu
Suvrit Sra suvrit@mit.edu
Ali Jadbabaie jadbabai@mit.edu
This paper seems to show that bad local minima exist in non-linear neural networks, with very reasonable assumptions on the input data.
}
}

\section{Background}

\subsection{A Linear Dynamics Model}

We begin with a brief review of \cite{du2018gradient,arora2019fine}'s linear dynamics model. We consider a network with two layers, implementing the function
\begin{equation}  \label{eq:network}
    f(\x; W,\aw) = \frac{1}{\sqrt{m}} \sum_{r=1}^m a_r\sigma(\w_r^T \x),
\end{equation}
where $\x \in \Real^{d+1}$ is the input and $\|\x\|=1$ (denoted $\x \in \Sphere^d$), $W=[\w_1,...,\w_m] \in \Real^{(d+1) \times m}$ and $\aw=[a_1,...,a_m]^T \in \Real^m$ respectively are the weights of the first and second layers, and $\sigma$ denotes the ReLU function, $\sigma(x)=\max(x,0)$. This model does not explicitly include bias. Let the training data consist of $n$ pairs $\{\x_i,y_i\}_{i=1}^n$, $\x_i \in \Sphere^d$ and $y_i \in \Real$.  Gradient descent (GD) minimizes the $L_2$ loss
\begin{equation}
\Phi(W)=\frac{1}{2} \sum_{i=1}^n (y_i - f(\x_i;W,\aw))^2,
\end{equation}
where we initialize the network with $\w_r(0) \sim {\cal N}(0,\kappa^2 I)$. We further set $a_r \sim \text{Uniform}\{-1,1\}$ and maintain it fixed throughout the training.

For the dynamic model we define the $(d+1)m \times n$ matrix
\begin{equation}
    Z = \frac{1}{\sqrt{m}}
        \left( \begin{array}{cccc}
         a_1 \Ind_{11} \x_1 & a_1 \Ind_{12} \x_2 & ... & a_1 \Ind_{1n} \x_n \\
         a_2 \Ind_{21} \x_1 & a_2 \Ind_{22} \x_2 & ... & a_2 \Ind_{2n} \x_n \\ ... &&& ... \\
         a_m \Ind_{m1} \x_1 & a_m \Ind_{m2} \x_2 & ... & a_m \Ind_{mn} \x_n
    \end{array}\right),
\end{equation}
where the indicator $\Ind_{ij}=1$ if $\w_i^T\x_j \ge 0$ and zero otherwise. Note that this indicator changes from one GD iteration to the next, and so $Z=Z(t)$. The network output over the training data can be expressed as $\uu(t)=Z^T \w \in \Real^n$, where $\w = (\w_1^T,...,\w_m^T)^T$. We further define the $n \times n$ Gram matrix $H=H(t)=Z^TZ$ with $H_{ij}=\frac{1}{m}\x_i^T\x_j \sum_{r=1}^m \Ind_{ri}\Ind_{rj}$.

Next we define the main object of analysis, the $n \times n$ matrix $H^\infty$, defined as the expectation of $H$ over the possible initializations. Its entries are given by
\begin{equation}  \label{eq:Hinf}
    H^\infty_{ij} = \mathbb{E}_{\w \sim {\cal N}(0,\kappa^2I)}H_{ij} = \frac{1}{2\pi}\x_i^T\x_j(\pi - \arccos(\x_i^T\x_j)).
\end{equation}
Thm.\ 4.1 in \cite{arora2019fine} relates the convergence of training a shallow network with GD to the eigenvalues of $H^\infty$. For a network with $m=\Omega\left(\frac{n^7}{\lambda_0^4 \kappa^2 \epsilon^2 \delta}\right)$ units,  $\kappa=O\left(\frac{\epsilon\delta}{\sqrt{n}}\right)$ and learning rate $\eta=O\left(\frac{\lambda_0}{n^2}\right)$ ($\lambda_0$ denotes the minimal eigenvalue of $H^\infty$), then with probability $1-\delta$ over the random initializations
\begin{equation}
\label{eq:convergence}
    \|\y-\uu(t)\|_2=\left(\sum_{i=1}^n \left(1-\eta\lambda_i\right)^{2t} \left(\vv_i^T\y\right)^2 \right)^{1/2} \pm \epsilon,
\end{equation}
where $\vv_1,...,\vv_n$ and $\lambda_1,...,\lambda_n$ respectively are the eigenvectors and eigenvalues of $H^\infty$.

\subsection{The Eigenvectors of $H^\infty$ for Uniform Data}

As is noted in \cite{xie2017diverse}, when the training data distributes uniformly on a hypersphere the eigenvectors of $H^\infty$ are the spherical harmonics. In this case $H^\infty$ forms a convolution matrix. A convolution on a hypersphere is defined by
\begin{equation}  \label{eq:convolution}
    K * f(\uu) = \int_{\Sphere^d} K(\uu^T \vv) f(\vv) d\vv,
\end{equation}
where the kernel $K(\uu,\vv)=K(\uu^T\vv)$ is  measureable and absolutely integrable on the hypersphere. It is straightforward to verify that in $\Sphere^1$ this definition is consistent with the standard 1-D convolution with a periodic (and even) kernel, since $K$ depends through the cosine function on the angular difference between $\uu$ and $\vv$. For $d>1$ this definition requires the kernel to be rotationally symmetric around the pole. This is essential in order for its rotation on $\Sphere^d$ to make sense.
We formalize this observation in a theorem.
\begin{theorem}  \label{thm:convolution}
Suppose the training data $\{\x_i\}_{i=1}^n$ is distributed uniformly in $\Sphere^d$, then $H^\infty$ forms a convolution matrix in $\Sphere^d$.
\end{theorem}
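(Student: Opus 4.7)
The plan is to identify a scalar kernel $K$ on $[-1,1]$ that reproduces the entries of $H^\infty$ as $K(\x_i^T\x_j)$, then check that $K$ satisfies the technical hypotheses required by the convolution definition in (\ref{eq:convolution}), and finally argue that the uniform sampling of the $\x_i$ turns the matrix $H^\infty$ into a (discrete) convolution matrix in the sense that its action on sampled functions converges to the continuous convolution operator.

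First I would define $K(t) = \frac{1}{2\pi}\,t\,(\pi - \arccos(t))$ for $t \in [-1,1]$, so that (\ref{eq:Hinf}) yields the compact form $H^\infty_{ij} = K(\x_i^T\x_j)$. Since $\arccos$ is continuous on $[-1,1]$ and $t \mapsto t(\pi - \arccos(t))$ is bounded there, $K$ is continuous and bounded, hence measurable and absolutely integrable on $\Sphere^d$ for every $d \ge 1$. Moreover, because $K$ depends on $\uu,\vv \in \Sphere^d$ only through $\uu^T\vv$, the corresponding kernel $K(\uu,\vv) := K(\uu^T\vv)$ is automatically invariant under simultaneous rotation of $\uu$ and $\vv$, which in particular means that for any chosen pole the kernel is rotationally symmetric around it. This takes care of the extra symmetry requirement that (\ref{eq:convolution}) imposes when $d>1$.

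Next I would interpret the phrase ``$H^\infty$ forms a convolution matrix'' as meaning that its action on sampled functions is a discretization of the continuous convolution in (\ref{eq:convolution}). Concretely, for any function $f:\Sphere^d \to \Real$, writing $\f = (f(\x_1),\ldots,f(\x_n))^T$, the $i$-th component of $H^\infty \f$ is $\sum_{j=1}^n K(\x_i^T\x_j)f(\x_j)$. Since $\{\x_j\}_{j=1}^n$ is i.i.d.\ uniform on $\Sphere^d$, the law of large numbers gives
\begin{equation}
\frac{1}{n}\sum_{j=1}^n K(\x_i^T\x_j)\,f(\x_j) \;\xrightarrow[n\to\infty]{}\; \frac{1}{|\Sphere^d|}\int_{\Sphere^d} K(\x_i^T\vv)\,f(\vv)\,d\vv \;=\; \frac{1}{|\Sphere^d|}\,(K * f)(\x_i),
\end{equation}
so up to the fixed volume factor, $H^\infty$ is the matrix of the convolution operator with kernel $K$ evaluated on the sample. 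Equivalently, for uniformly placed sample points one can regard $H^\infty$ as a Riemann-sum approximation of the integral operator in (\ref{eq:convolution}), and in the idealized limit (or on a discrete rotationally invariant sampling set) this is exactly a circulant-like convolution structure indexed by pairs of sample points.

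The only real subtlety is this last step of explaining what ``convolution matrix'' means for a finite sample: the kernel $K$ is a genuine convolution kernel on $\Sphere^d$, but $H^\infty$ is a finite matrix, so strictly speaking one must either pass to the large-$n$ limit, or view $H^\infty$ as the Galerkin-type restriction of the convolution operator to the sample. Either reading suffices for the applications that follow, because in both cases the eigenfunctions of the associated convolution operator are the spherical harmonics, which is the conclusion the subsequent sections will build on.
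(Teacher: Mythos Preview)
Your proposal is correct and follows essentially the same approach as the paper: define the kernel $K^\infty(t)=\frac{1}{2\pi}t(\pi-\arccos t)$ so that $H^\infty_{ij}=K^\infty(\x_i^T\x_j)$, observe that it depends only on the inner product and is therefore rotationally symmetric, and argue that for uniformly distributed samples the normalized matrix-vector product converges to the spherical convolution integral in~(\ref{eq:convolution}). You are somewhat more careful than the paper in checking the integrability hypothesis and in invoking the law of large numbers explicitly, but the substance of the argument is the same.
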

\begin{proof}
Let $f:\Sphere^d\rightarrow\Real$ be a scalar function, and let $\f\in\Real^n$ be a vector whose entries are the function values at the training points, i.e., $f_i=f(\x_i)$. Consider the application of $H^\infty$ to $\f$,
%\begin{equation}
    $g_i = \frac{A(\Sphere^d)}{n} \sum_{j=1}^n H^\infty_{ij}f_j$,
%\end{equation}
where $A(\Sphere^d)$ denotes the total surface area of $\Sphere^d$. As $n\rightarrow\infty$ this sum approaches the integral
%\begin{equation}
    $g(\x_i) = \int_{\Sphere^d} K^\infty(\x_i^T\x_j)f(\x_j)d\x_j$,
%\end{equation}
where $d\x_j$ denotes a surface element of $\Sphere^d$. Let the kernel $K^\infty$ be defined as in \eqref{eq:Hinf}, i.e., $K^\infty(\x_i,\x_j)=\frac{1}{2\pi}\x_i^T\x_j(\pi-\arccos(\x_i^T\x_j))$. Clearly, $K^\infty$ is rotationally symmetric around $\x_i$, and therefore $g=K^\infty*f$. $H^\infty$ moreover forms a discretization of $K^\infty$, and its rows are phase-shifted copies of each other.
\end{proof}

Theorem~\ref{thm:convolution} implies that for uniformly distributed data the eigenvectors of $H^\infty$ are the Fourier series in $\Sphere^1$ or, using the Funk-Hecke Theorem (as we will discuss), the spherical harmonics in $\Sphere^d$, $d>1$. We first extend the dynamic model to allow for bias, and then derive the eigenvalues for both cases.
%Below we analyze these cases and provide explicit expressions for the corresponding eigenvalues. We first extend the dynamic model to allow for bias, and then derive the eigenvalues both for the case of $d=1$ and the case of arbitrary hyperspheres of dimension $d>1$.

\section{Harmonic Analysis of $H^\infty$}  \label{sec:harmonic}

These results in the previous section imply that we can determine how quickly a network can learn functions of varying frequency by finding the eigenvalues of the eigenvectors that correspond to these frequencies. In this section we address this problem both theoretically and experimentally\footnote{Code for experiments shown in this paper can be found at https://github.com/ykasten/Convergence-Rate-NN-Different-Frequencies.}. Interestingly, as we establish in Theorem \ref{thm:noodds} below, the bias-free network defined in \eqref{eq:network} is not universal as it cannot represent functions that contain odd frequencies greater than one.  As a consequence the odd frequencies lie in the null space of the kernel $K^\infty$ and cannot be learned -- a significant deficiency in the model of \cite{du2018gradient,arora2019fine}.  We have the following:
\begin{theorem}  \label{thm:noodds}
In the harmonic expansion of $f(\x)$ in \eqref{eq:network}, the coefficients corresponding to odd frequencies $k \ge 3$ are zero.
\end{theorem}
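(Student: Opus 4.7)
My plan is to exploit the elementary identity $\sigma(t) = \tfrac{1}{2}(t + |t|)$, which rewrites each hidden unit's contribution as
\[
\sigma(\w_r^T \x) \;=\; \tfrac{1}{2}\, \w_r^T \x \;+\; \tfrac{1}{2}\, |\w_r^T \x|.
\]
I would then analyze the harmonic content of each piece on $\Sphere^d$ separately and sum across $r$.

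The linear piece $\tfrac{1}{2}\w_r^T \x$, restricted to $\Sphere^d$, is a homogeneous degree-one polynomial and is itself a first-degree spherical harmonic, so it contributes only to the $k = 1$ component. For the absolute-value piece, I would observe that $|\w_r^T \x|$ is an \emph{even} function of $\x$ on the sphere, since it is invariant under $\x \mapsto -\x$. Combining this with the standard parity relation $Y_k(-\x) = (-1)^k Y_k(\x)$ for spherical harmonics of degree $k$, which is inherited from their realization as homogeneous harmonic polynomials of degree $k$, an even function must have vanishing inner product with every odd-degree harmonic. Hence $|\w_r^T \x|$ expands purely in the even frequencies $k = 0, 2, 4, \ldots$.

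Summing over the $m$ hidden units preserves the set of supported frequencies, so $f(\x; W, \aw)$ lies entirely in the span of the $k = 1$ harmonic together with the even-degree harmonics. Consequently, every odd frequency $k \geq 3$ has zero coefficient, which is the claim. The proof has essentially no difficult step: the conceptual insight is picking the decomposition $\sigma(t) = (t + |t|)/2$ that isolates the ``bad'' even part of ReLU from its odd linear part, after which the parity argument is immediate. The only minor care needed is recalling the spherical-harmonic parity relation, which is standard.
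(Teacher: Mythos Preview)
Your argument is correct and takes a genuinely different route from the paper. The paper fixes $\w$ at the north pole so that $g(\x)=\sigma(\w^T\x)$ is zonal, writes down the Gegenbauer integral for its zonal coefficients, and for odd $k$ uses the antisymmetry of $P_{k,d}$ to convert the half-line integral into the inner product of the degree-$1$ harmonic $t$ with the degree-$k$ harmonic, which vanishes by orthogonality; it then extends to general $\w$ by rotation. You instead split $\sigma(t)=\tfrac{1}{2}(t+|t|)$ and apply the parity relation $Y_k(-\x)=(-1)^kY_k(\x)$ directly to $|\w_r^T\x|$. Your approach is more elementary: it avoids the Funk--Hecke/Gegenbauer machinery, requires no special-casing of the pole direction or separate treatment of $d=1$ versus $d\ge 2$, and isolates the conceptual reason (ReLU is linear-plus-even) in one line. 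The paper's computation, on the other hand, is not wasted effort: the same zonal-integral setup is exactly what is needed later in Section~\ref{sec:harmonic} to derive the explicit eigenvalue formulas \eqref{eq:evnobias}--\eqref{eq:evwithbias}, so it front-loads machinery that will be reused, whereas your parity argument, while cleaner for this theorem alone, gives no quantitative information about the nonzero coefficients.
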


\begin{proof}
We show this for $d \ge 2$. The theorem also applies to the case that $d=1$ with a similar proof. Consider the output of one unit, $g(\x)=\sigma(\w^T \x)$ and assume first that $w=(0,...,0,1)^T$. In this case $g(\x)=\max\{x_{d+1},0\}$ and it is a linear combination of just the zonal harmonics. The zonal harmonic coefficients of $g(x)$ are given by
\begin{equation}  \label{eqapp:unitzonal}
    g_k = Vol(\Sphere^{d-1}) \int_{-1}^1 \max\{t,0\} P_{k,d}(t) (1-t^2)^{\frac{d-2}{2}} dt,
\end{equation}
where $Vol(\Sphere^{d-1})$ denotes the volume of the hypersphere $S^{d-1}$ and $P_{k,d}(t)$ denotes the Gegenbauer polynomial, given by the formula:
\begin{equation}  \label{eq:Gegenbauer}
    P_{k,d}(t) = \frac{(-1)^k}{2^k}\frac{\Gamma(\frac{d}{2})}{\Gamma(k+\frac{d}{2})}\frac{1}{(1-t^2)^{\frac{d-2}{2}}}\frac{d^k}{dt^k}(1-t^2)^{k+\frac{d-2}{2}}.
\end{equation}
$\Gamma$ is Euler's gamma function.
Eq.~\eqref{eqapp:unitzonal} can be written as
\begin{equation}
    g_k = Vol(\Sphere^{d-1}) \int_0^1 t P_{k,d}(t) (1-t^2)^{\frac{d-2}{2}} dt.
\end{equation}
For odd $k$, $P_{k,d}(t)$ is antisymmetric. Therefore, for such $k$
\begin{equation}
    g_k = \frac{1}{2} Vol(\Sphere^{d-1}) \int_1^1 t P_{k,d}(t) (1-t^2)^{\frac{d-2}{2}} dt.
\end{equation}
This is nothing but the (scaled) inner product of the first order harmonic $t$ with a harmonic of degree $k$, and due to the orthogonality of the harmonic functions this integral vanishes for all odd values of $k$ except $k=1$. This result remains unchanged if we use a general weight vector for $\w$, as it only rotates $g(\x)$, resulting in a phase shift of the first order harmonic. Finally, $f$ is a linear combination of single unit functions, and consequently its harmonic coefficients at odd frequencies $k \ge 3$ are zero.
\end{proof}

In Figure~\ref{fig:odd-freq} we use a bias-free, two-layer network to fit data drawn from the function $\cos(3\theta)$. Indeed, as the network cannot represent odd frequencies $k \ge 3$ it fits the data points perfectly with combinations of even frequencies, hence yielding poor generalization.

\begin{figure}[tb]
\centering
\includegraphics[height=3.0cm]{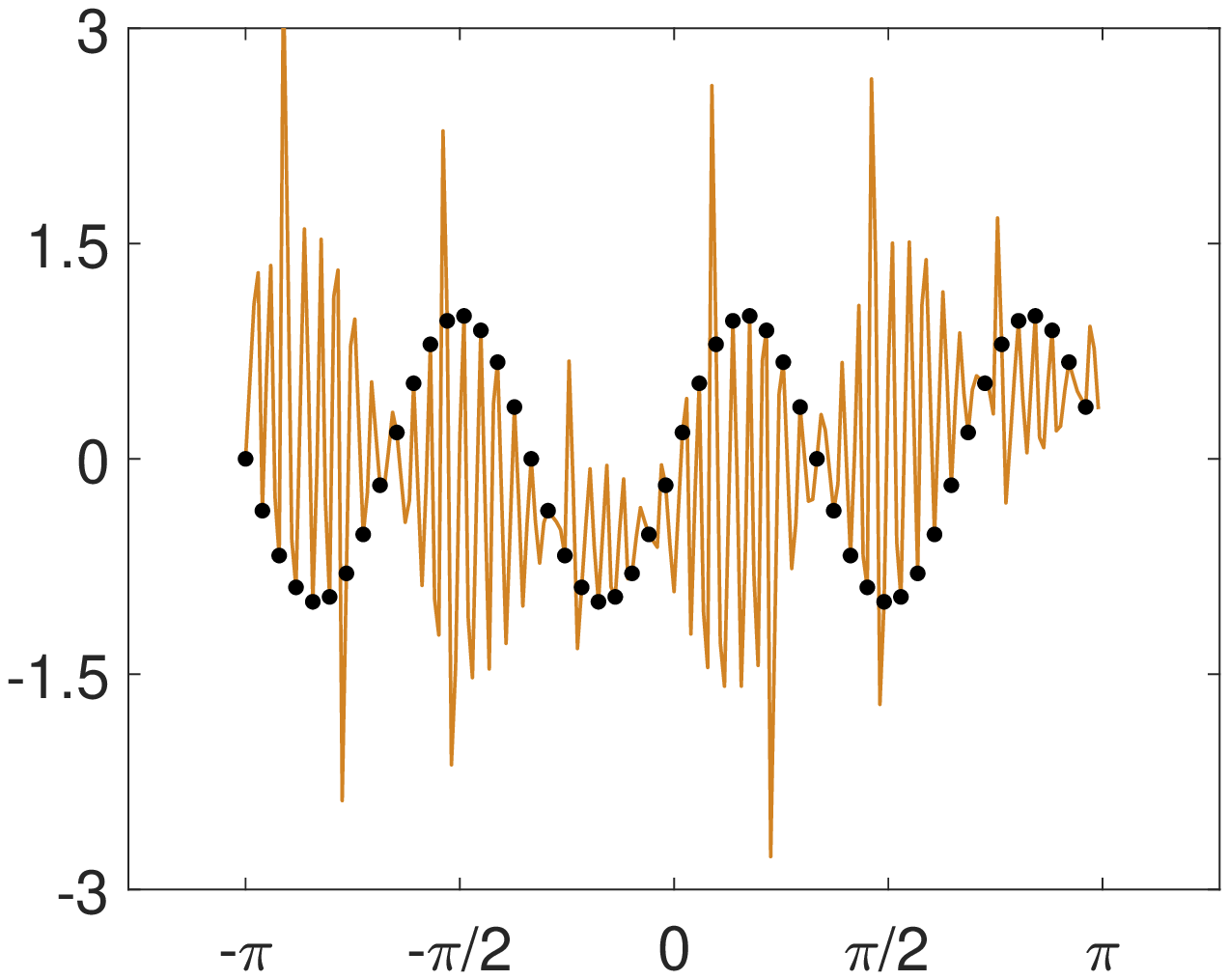}
\includegraphics[height=3.0cm]{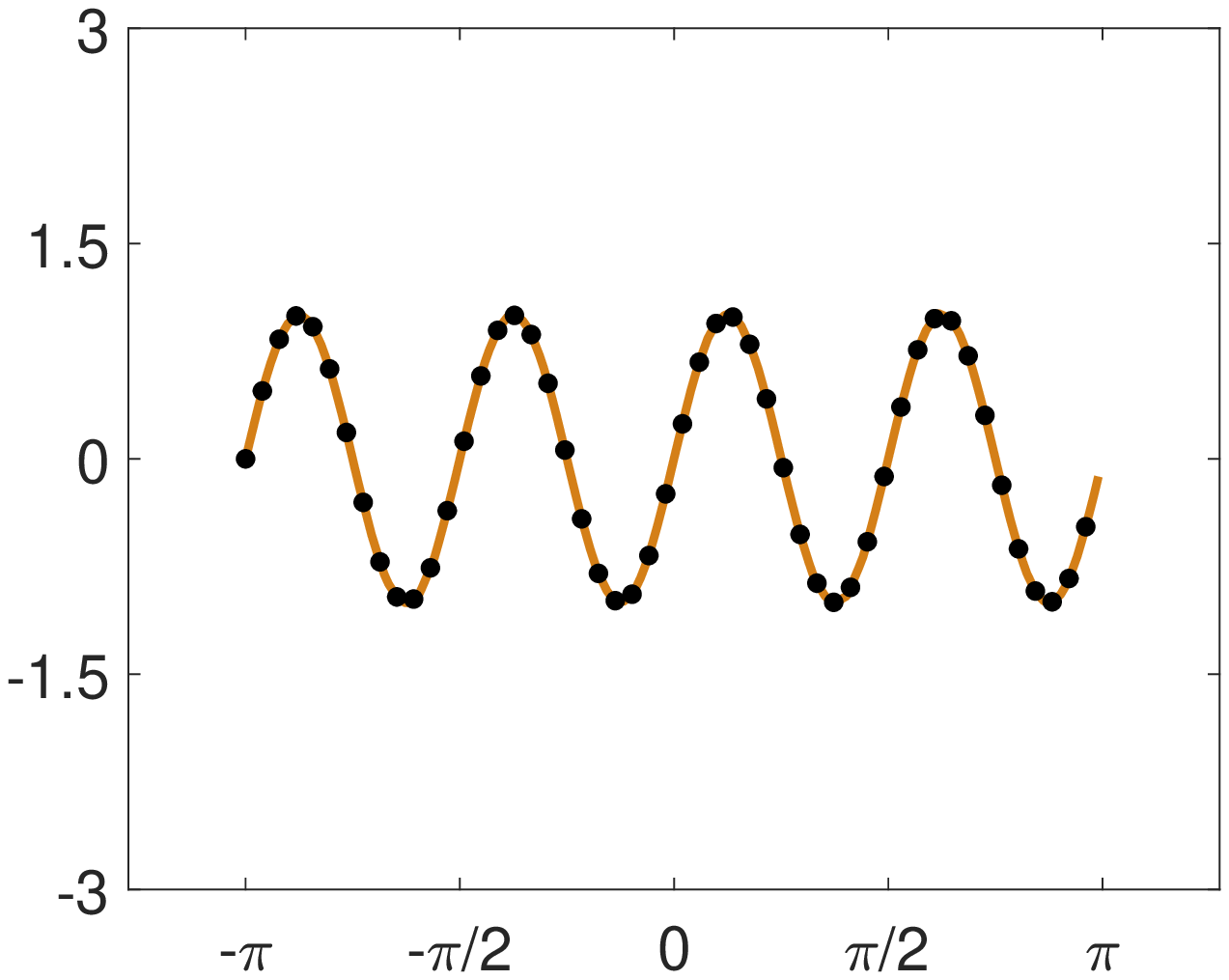}
\caption{\small Left: Fitting a bias-free two-layer network (with 2000 hidden units) to training data comprised of 51 points drawn from $f(\theta)=\cos(3\theta)$ (black dots). The orange, solid curve depicts the network output. Consistent with Thm.~\ref{thm:noodds}, the network fits the data points perfectly with just even frequencies, yielding poor interpolation between data points. The right panel shows in comparison fitting the network (solid line) to training data points (black dots) drawn from $f(\theta)=\cos(4\theta)$. Fit was achieved by fixing the first layer weights at their random (Gaussian) initialization and optimizing over the second layer weights.}
\label{fig:odd-freq}
\end{figure}

This can be overcome by extending the model to use homogeneous coordinates, which introduce bias.  For a point $\x\in\Sphere^d$ we denote $\bar\x=\frac{1}{\sqrt{2}}(\x^T,1)^T \in \Real^{d+2}$, and apply \eqref{eq:network} to $\bar\x$. Clearly, since $\|\x\|=1$ also $\|\bar\x\|=1$.
We note that the proofs of \cite{du2018gradient,arora2019fine} directly apply when both the weights and the biases are initialized using a normal distribution with the same variance.  It is also straightforward to modify these theorems to account for bias initialized at zero, as is common in many practical applications. We assume bias is initialized at 0, and construct the corresponding $\bar H^\infty$ matrix. This matrix takes the form
\begin{equation}  \label{eq:Hbar}
    \bar H^\infty_{ij} = \mathbb{E}_{\w \sim {\cal N}(0,\kappa^2I)}\bar H_{ij} = \frac{1}{4\pi}(\x_i^T\x_j+1)(\pi - \arccos(\x_i^T\x_j)).
\end{equation}

Finally note that the bias adjusted kernel $\bar K^\infty(\x_i^T\x_j)$, defined as in \eqref{eq:Hbar}, also forms a convolution on the original (non-homogeneous) points. Therefore, since we assume that in $\Sphere^d$ the data is distributed uniformly, the eigenfunctions of $\bar K^\infty$ are also the spherical harmonics.

\comment{
Below we extend  \cite{du2018gradient,arora2019fine}'s model to include bias. We further show that the new model enjoys the same convergence and generalization properties proved for the bias-free model.

The bias-adjusted network implements
\begin{equation}  \label{eq:netwithbias}
    \bar f(\x; \bar W,\aw) = \frac{1}{\sqrt{m}} \sum_{r=1}^m a_r\sigma(\w_r^T \x+b_r),
\end{equation}
where $\bar W=\left[\begin{array}{ccc}
     \w_1,& ..., & \w_m \\
     b_1,& ..., & b_m
\end{array} \right]$ and $b_1,...,b_m$ denote the bias terms. We initialize the bias terms to zero and let them change in training. The dynamic model is modified accordingly to
\begin{equation}
    \bar Z(t) = \left( \begin{array}{cccc}
         a_1 \bar\Ind_{11} \bar\x_1 & a_1 \bar\Ind_{12} \bar\x_2 & ... & a_1 \bar\Ind_{1n} \bar\x_n \\
         ... &&& ... \\
         a_m \bar\Ind_{m1} \bar\x_1 & a_m \bar\Ind_{m2} \bar\x_2 & ... & a_m \bar\Ind_{mn} \bar\x_n
    \end{array}\right),
\end{equation}
where $\bar\x_i \in \Real^{d+2}$ denotes $\x_i$ in homogeneous coordinates and $\bar\Ind_{ij}=1$ if $\w_i^T\x_j+b_i \ge 0$ and zero otherwise. Let $\bar\w$ denote the column stack vector of $\bar W$, the network predictions for the training data at time $t$ is given by $\bar\uu(t)=\bar Z^T \bar\w$. Now $\bar H=\bar H(t)=\bar Z^T \bar Z$, where $\bar H_{ij}=\frac{1}{m}(\x_i^T\x_j+1) \sum_{r=1}^m \bar\Ind_{ri}\bar\Ind_{rj}$, and subsequently, its expectation over the possible initializations is given by
\begin{equation}
    \bar H^\infty_{ij} = \mathbb{E}_{\w \sim {\cal N}(0,\kappa^2I)}\bar H_{ij} = \frac{1}{2\pi}(\x_i^T\x_j+1)(\pi - \arccos(\x_i^T\x_j)).
\end{equation}
The corresponding kernel, $\bar K^\infty$, defined by the entries of $\bar H^\infty_{ij}$ with $\x_i$ and $\x_j$ sampled uniformly on the hypersphere, is also a convolution kernel (since it is function of the inner product $\x_i^T\x_j$) and so its eigenvectors too include the spherical harmonics.

With minor changes of scaling factors, \cite{du2018gradient,arora2019fine}'s convergence and generalization theorems extend also to include bias as follows.
\begin{theorem}
%(Convergence Rate of Gradient Flow with bias)
Suppose for all $i,j \in [n]$ $\|x_i\|=1$, $\x_i \not\parallel \x_j$, and $y_i \le C$ for some constant $C$. Then if we set the number of hidden nodes $m=\Omega\left(\frac{n^6}{\bar\lambda_0^4\delta^3}\right)$ with $\bar\lambda_0>0$ denotes the smallest eigenvalue of $\bar H^\infty$, and we i.i.d. initialize $\w_r \sim {\cal N}(0,I)$, $b_r=0$, and $a_r \sim \text{Uniform}\{-1,1\}$ for $r \in [m]$, then with probability at least $1-\delta$ over the initialization, we have
\begin{equation}
    \|\uu(t)-\y\|^2 \le \exp(-\bar\lambda_0t)\|\uu(0)-\y\|^2.
\end{equation}
\end{theorem}

\begin{theorem}  \label{thm:speed}
Let $\bar H^\infty=\sum_{i=1}^n \bar\lambda_i\bar\vv_i\bar\vv_i^T$, where $\bar\vv_1,...,\bar\vv_n$ are orthogonal eigenvectors of $\bar H^\infty$. Suppose $\bar\lambda_0=\bar\lambda_{\min}(\bar H^\infty)>0$, $\kappa=O(\frac{\epsilon \delta}{\sqrt{n}})$, $m=\omega(\frac{n^7}{\lambda_0^4\kappa^2\delta^4\epsilon^2})$ and $\eta=O(\frac{\bar\lambda_0}{n^2})$. Then with probability at least $1-\delta$ over the random initialization, for all $t=0,1,2...$ we have
\begin{equation}
\label{eq:convergence}
    \|\y-\bar\uu(t)\|_2 = \sqrt{ \sum_{i=1}^n \left(1-\eta\bar\lambda_i\right)^{2t} (\bar\vv_i^T\y)^2 } \pm \epsilon.
\end{equation}
\end{theorem}

\begin{theorem}
Fix an error parameter $\epsilon>0$ and failure probability $\delta \in (0,1)$. Suppose our data $S=\{(\bar\x_i,y_i\}_{i=1}^n$ are i.i.d.\ samples from a $(\bar\lambda_0,\delta/3,n)$-non-degenerate distribution ${\cal D}$, and $\kappa=O(\frac{\epsilon\bar\lambda_0\delta}{\sqrt{n}})$, $m \ge \kappa^{-2}\mathrm{poly}(n,\bar\lambda_0^{-1},\delta^{-1},\epsilon^{-1})$. Consider any loss function $\ell:\Real\times\Real\rightarrow[0,1]$ that is 1-Lipschitz in the first argument such that $\ell(y,y)=0$. Then with probability at least $1-\delta$ over the random initialization and the training samples, the two layer neural network (with bias) $f_{\bar W(t),\aw}$ trained by GD for $t \ge \Omega(\frac{1}{\eta\bar\lambda_0})\log\frac{1}{\delta\epsilon}$ iterations has population loss $L_{\cal D}(f_{\bar W(t),\aw})=\Ex_{(\bar \x,y)\sim{\cal D}}[\ell(f_{\bar W(t),\aw}(\x),y)]$ bounded as
\begin{equation}
    L_{\cal D}(f_{\bar W(t),\aw}) \le 2\sqrt{\frac{\y^T(H^\infty)^{-1}\y}{n}} + 3\sqrt{\frac{\log(6/\delta)}{2n}} + \epsilon.
\end{equation}
\end{theorem}

The proofs are given in the supplementary material.
}

We next analyze the eigenfunctions and eigenvalues of % both the bias free and bias adjusted kernels,
$K^\infty$ and $\bar K^\infty$. We first consider data distributed uniformly over the circle $\Sphere^1$ and subsequently discuss data in arbitrary dimension.

\subsection{Eigenvalues in $\Sphere^1$}

Since both $K^\infty$ and $\bar K^\infty$ form convolution kernels on the circle, their eigenfunctions include the Fourier series. For the bias-free kernel, $K^\infty$, the eigenvalues for frequencies $k \ge 0$ are derived using $a_k^1=\frac{1}{z_k}\int_{-\pi}^\pi K^\infty(\theta) \cos(k\theta)d\theta$ where $z_0=2\pi$ and $z_k=\pi$ for $k>0$. (Note that since $K^\infty$ is an even function its integral with $\sin(\theta)$ vanishes.) This yields
%\begin{eqnarray} \label{eq:ak1}
%    a_k^1 &=&
%\left\{ \begin{array}{ll}
%     \frac{2}{\pi} & k=0\\
%     \frac{\pi}{4} & k=1\\
%     \frac{2(k^2+1)}{\pi (k^2-1)^2}& k \ge 2 ~~\rm{even}  \\
%     0 & k \ge 2 ~~\rm{odd}
%\end{array}\right.
%\end{eqnarray}
\begin{eqnarray} \label{eq:ak1}
    a_k^1 &=&
\left\{ \begin{array}{ll}
     \frac{1}{\pi^2} & k=0\\[0.05cm]
     \frac{1}{4} & k=1\\
     \frac{2(k^2+1)}{\pi^2 (k^2-1)^2}& k \ge 2 ~~\rm{even}  \\
     0 & k \ge 2 ~~\rm{odd}
\end{array}\right.
\end{eqnarray}
$H^\infty$ is a discrete matrix that represents convolution with $K^\infty$. It is circulant symmetric (when constructed with points sampled with uniform spacing) and its eigenvectors are real. Each frequency except the DC is represented by two eigenvectors, one for $\sin (k\theta)$ and the other $\cos (k\theta)$.
%Each frequency is represented in two eigenvectors of $H^\infty$, one includes $\sin k\theta$ and the other $\cos k\theta$, except for $k=0$ which is represented once. Since $H^\infty$ is symmetric its eigenvectors are real.

\eqref{eq:ak1} allows us to make two predictions. First, the eigenvalues for the even frequencies $k$ shrink at the asymptotic rate of $1/k^2$. This suggests, as we show below, that high frequency components are quadratically slower to learn than low frequency components. Secondly, the eigenvalues for the odd frequencies (for $k \ge 3$) vanish. A network without bias cannot learn or even represent these odd frequencies.  Du et al.'s convergence results critically depend on
% the assumption that
the fact that for a finite discretization $H^\infty$ is positive definite.
In fact, $H^\infty$ does contain eigenvectors with small eigenvalues that match the odd frequencies on the training data, as shown in Figure~\ref{fig:triplet}, which shows the numerically computed eigenvectors of $H^\infty$.  The leading eigenvectors include $k=1$ followed by the low even frequencies, whereas the eigenvectors with smallest eigenvalues include the low odd frequencies.  However, a bias-free network can only represent those functions as a combination of even frequencies. These match the odd frequencies on the training data, but have wild behavior off the training data (see Fig.~\ref{fig:odd-freq}).  In fact, our experiments show that a network cannot even learn to fit the training data when labeled with odd frequency functions with $k \ge 3$.

%This suggests that such frequencies will be impractically slow to train. Note that Du et al.'s convergence results critically depend on the assumption that the minimal eigenvalue of $H^\infty$ is positive. Indeed, with a finite discretization, $H^\infty$ is strictly positive definite. However, as the number of samples grow, not only does $\lambda_0$ approach 0, but it does so for very simple (eigen-) functions with low but odd frequencies, making these functions impossible to learn in practice. This is seen clearly in Figure \ref{fig:triplet}, which shows the numerically computed eigenvectors of $H^\infty$. The leading eigenvectors include $k=1$ followed by the low even frequencies, whereas the eigenvectors with smallest eigenvalues include the low odd frequencies. The same phenomenon is observed in higher dimensions, as we will prove below.

%It can be rectified by introducing bias to the model. In the next section we modify \cite{du2018gradient,arora2019fine}'s model to include bias. We further prove that the new model enjoys the same convergence and generalization properties of the bias-free model.

\begin{figure}[tb]
\centering
\includegraphics[width=1.4cm]{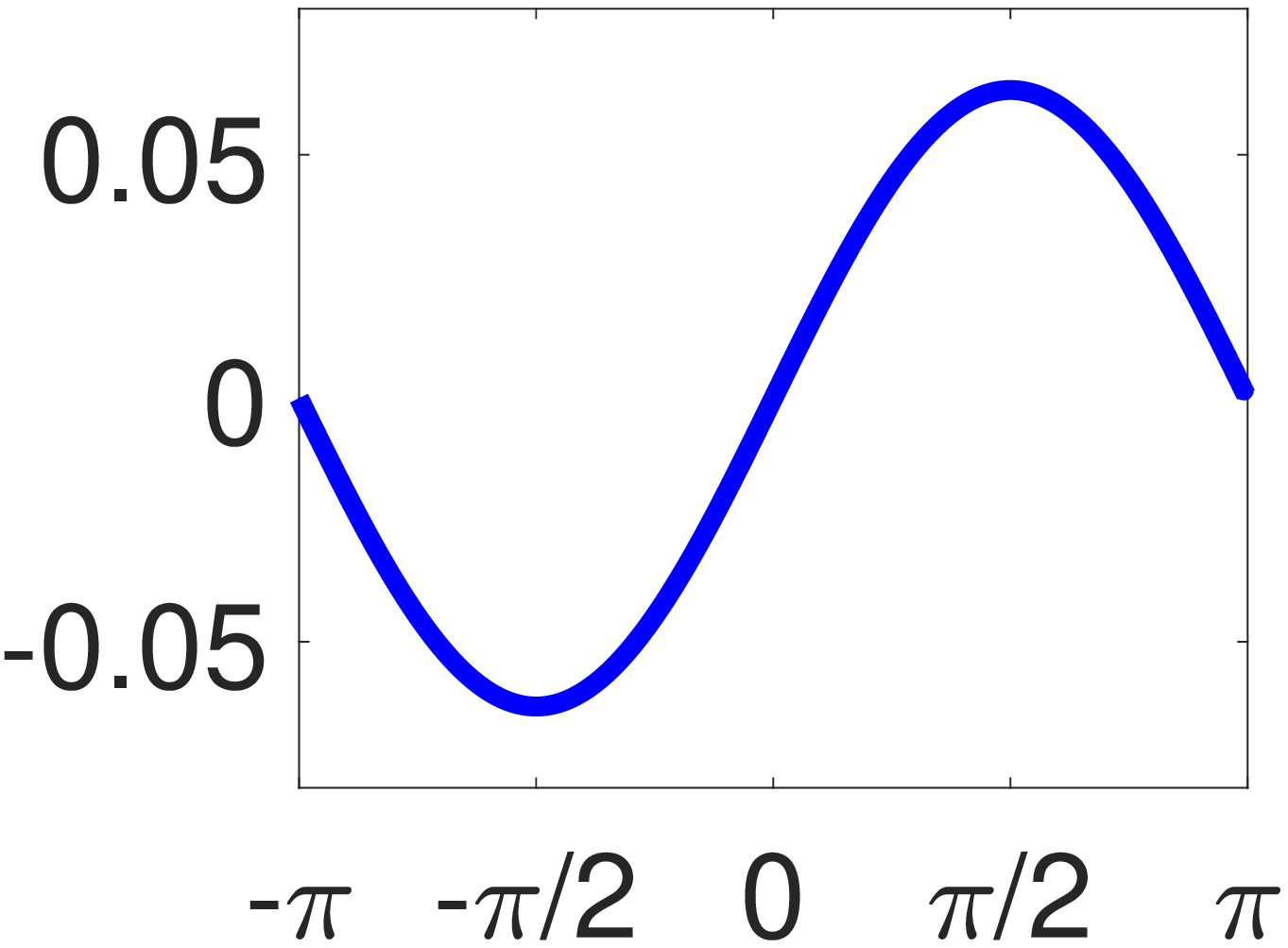}
\includegraphics[width=1.4cm]{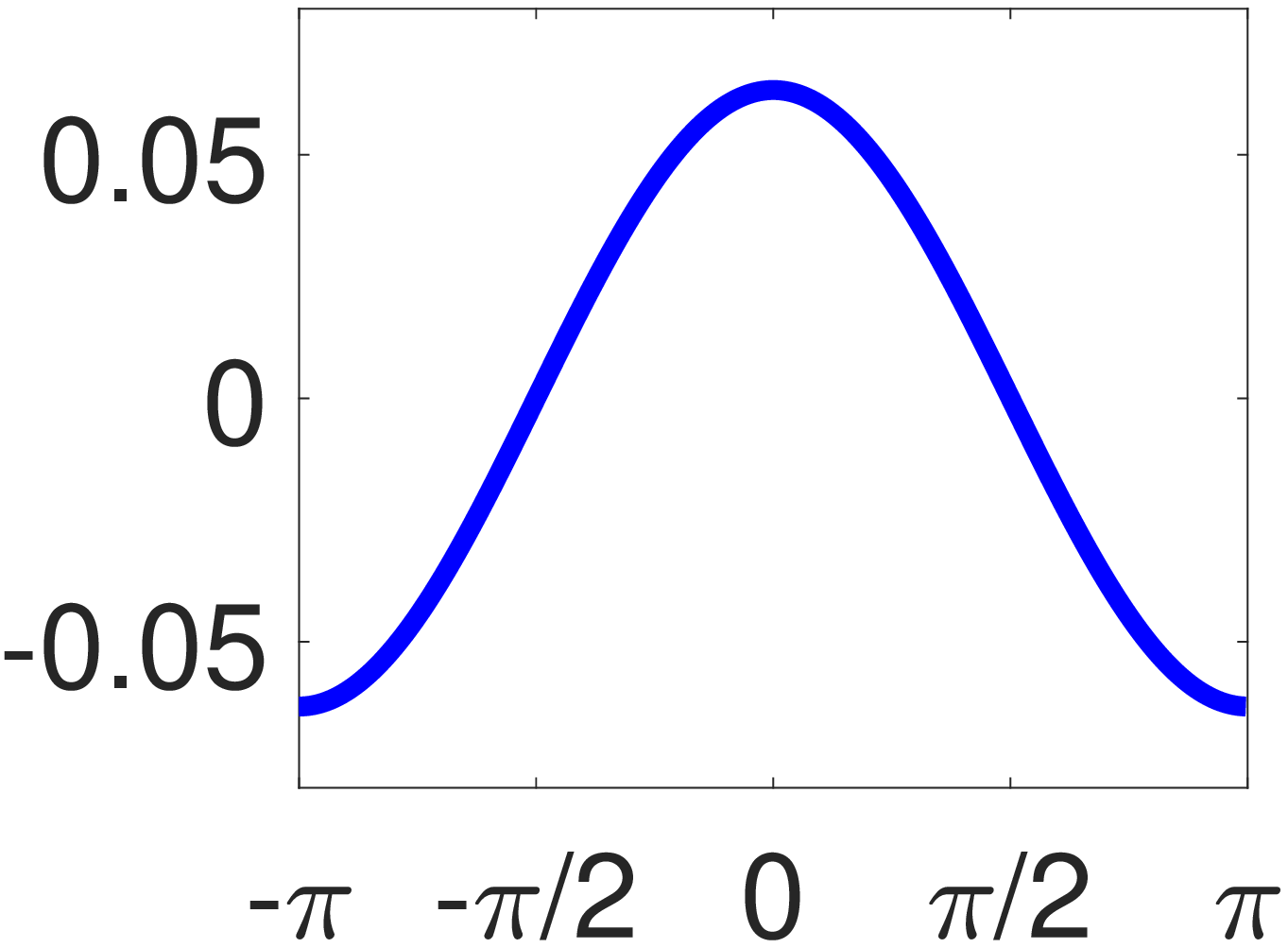}
\includegraphics[width=1.4cm]{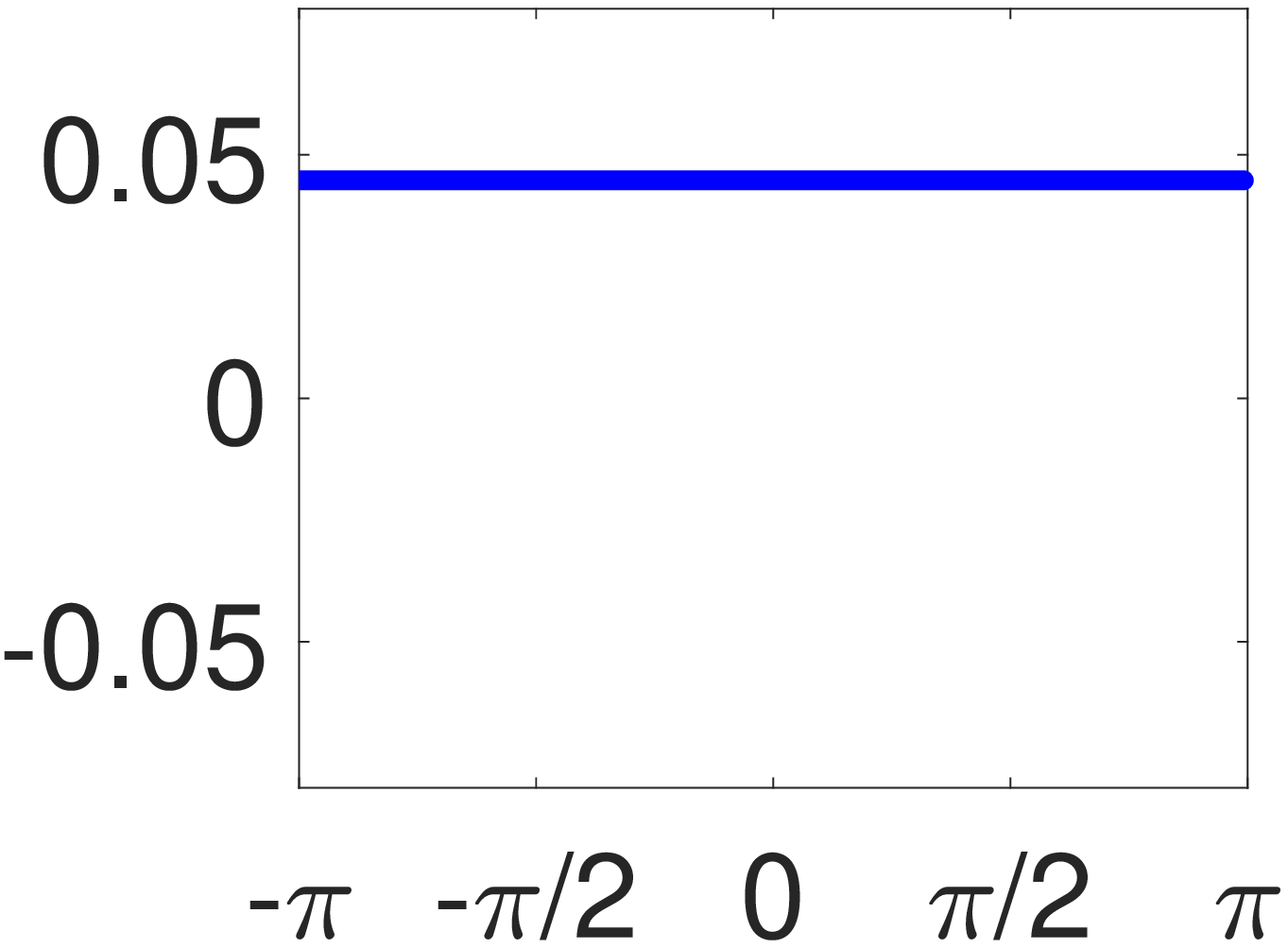}
\includegraphics[width=1.4cm]{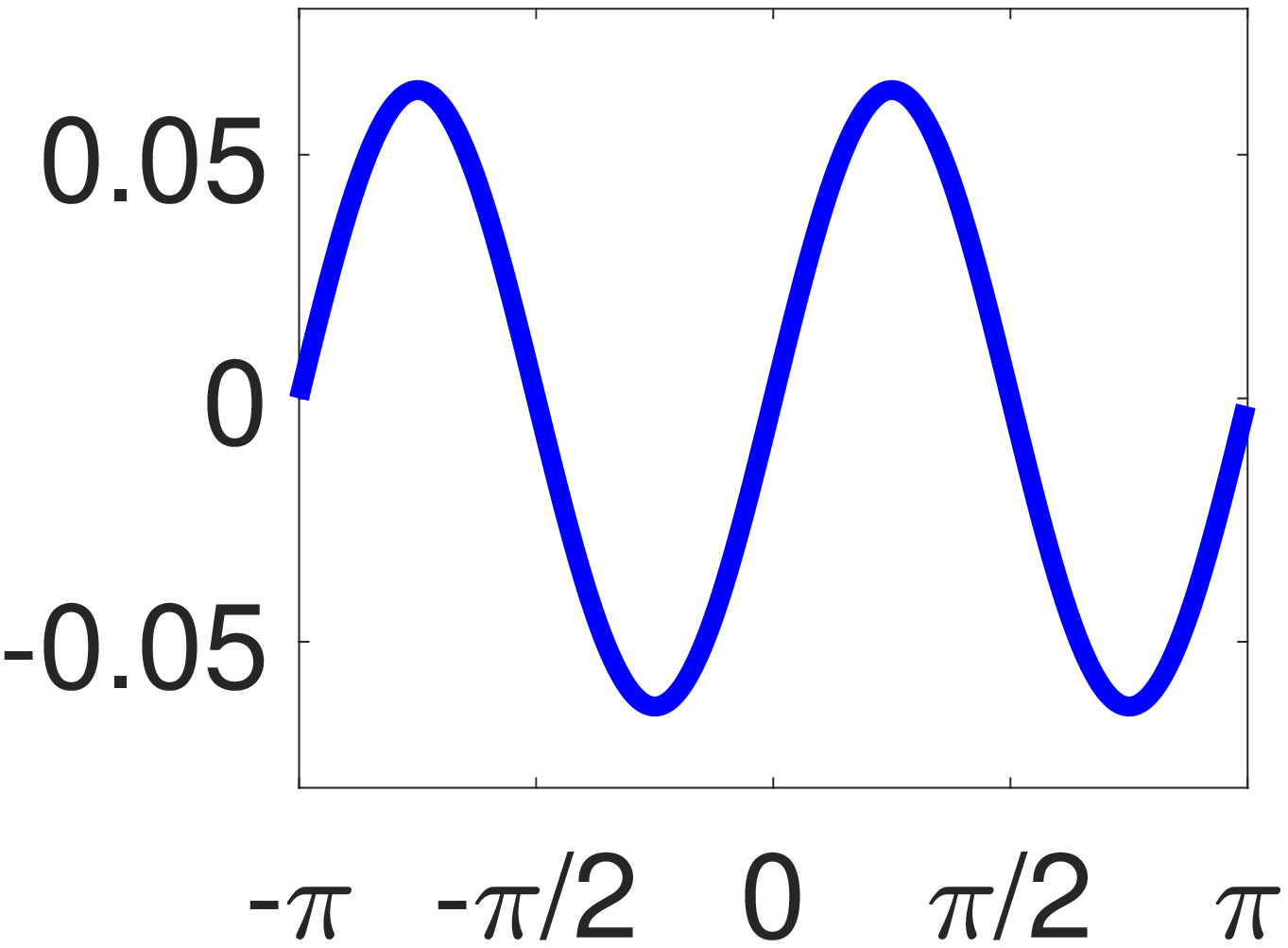}
\includegraphics[width=1.4cm]{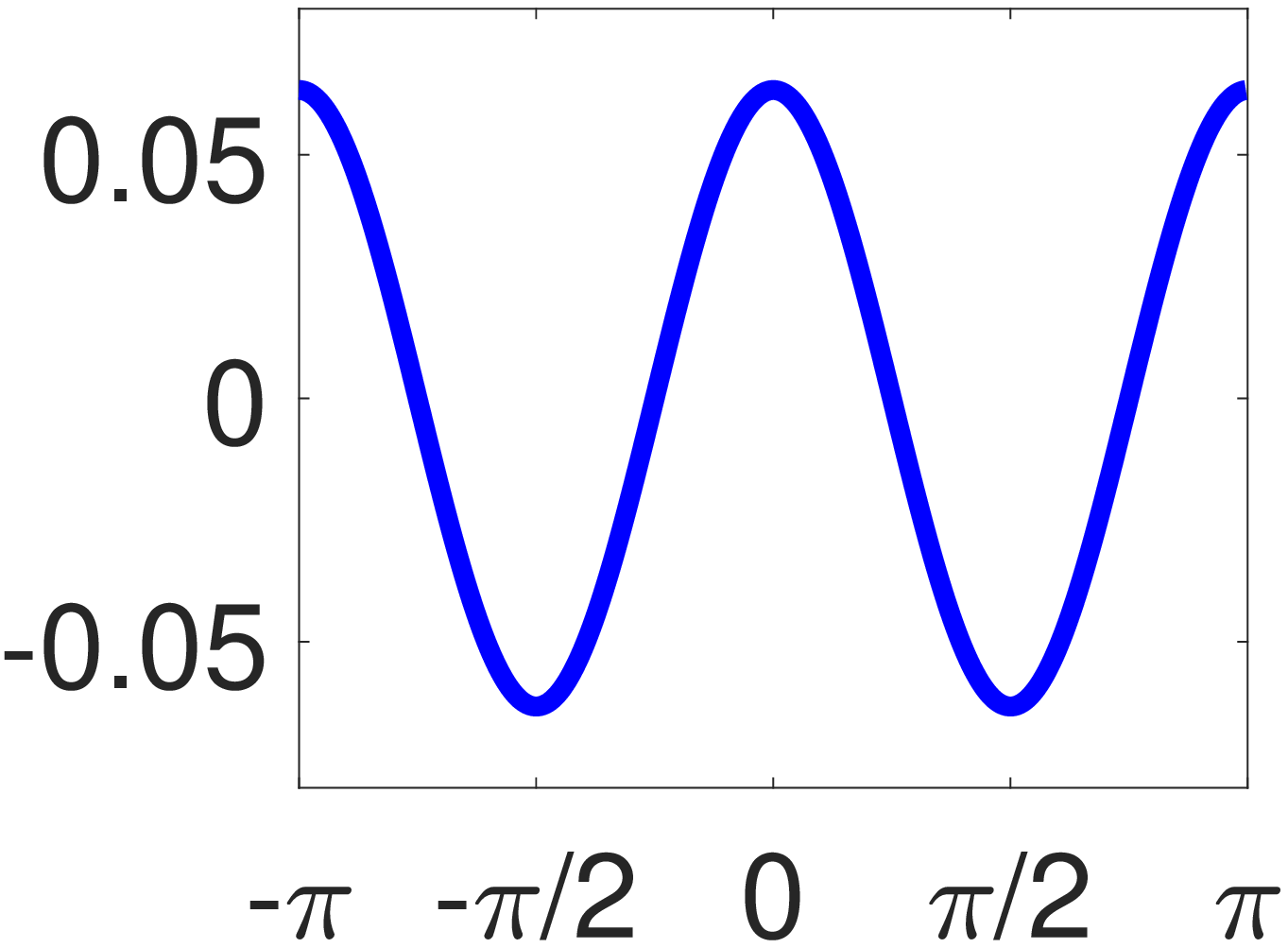}
\includegraphics[width=1.4cm]{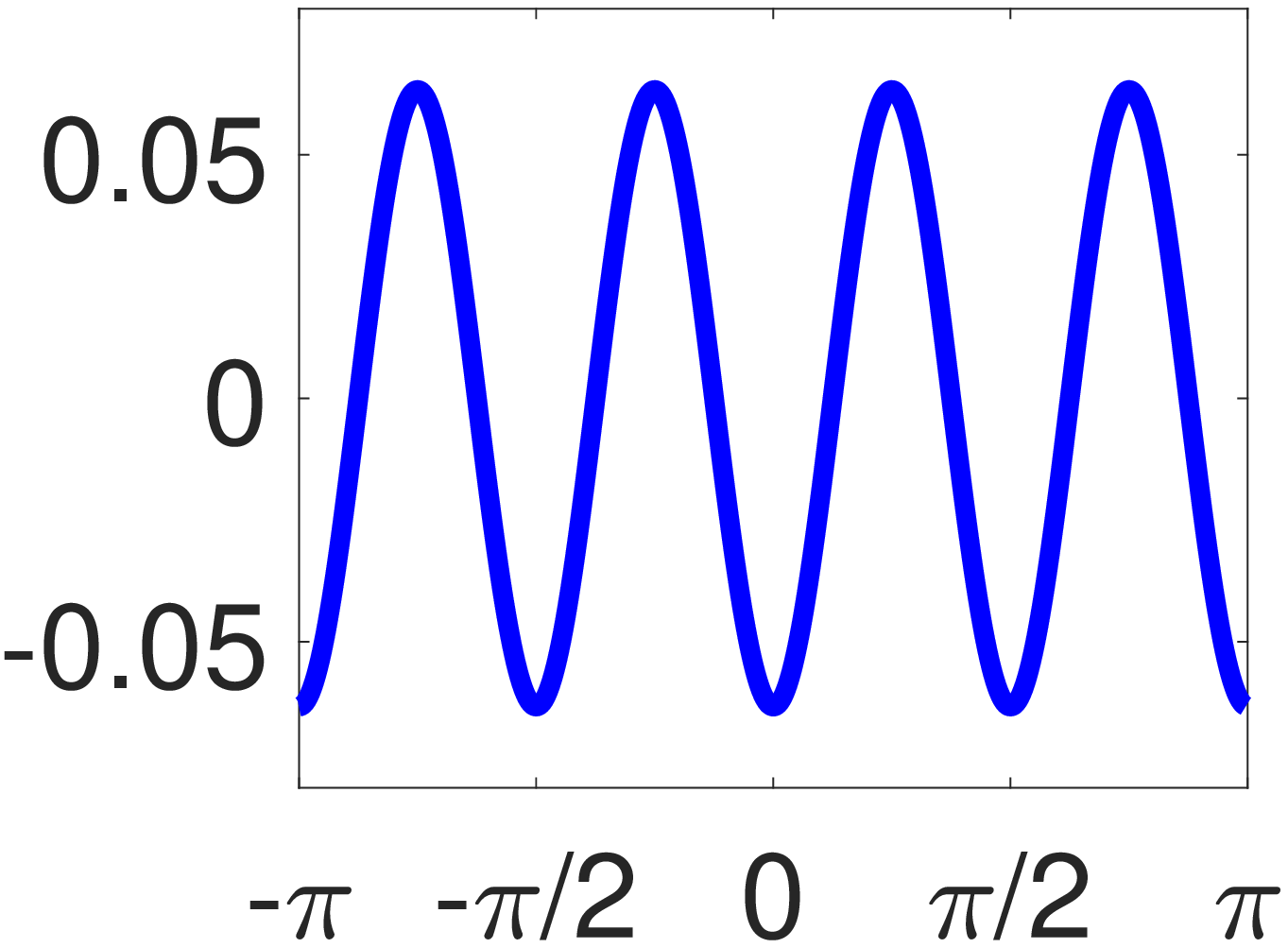}
$\cdots$
\includegraphics[width=1.4cm]{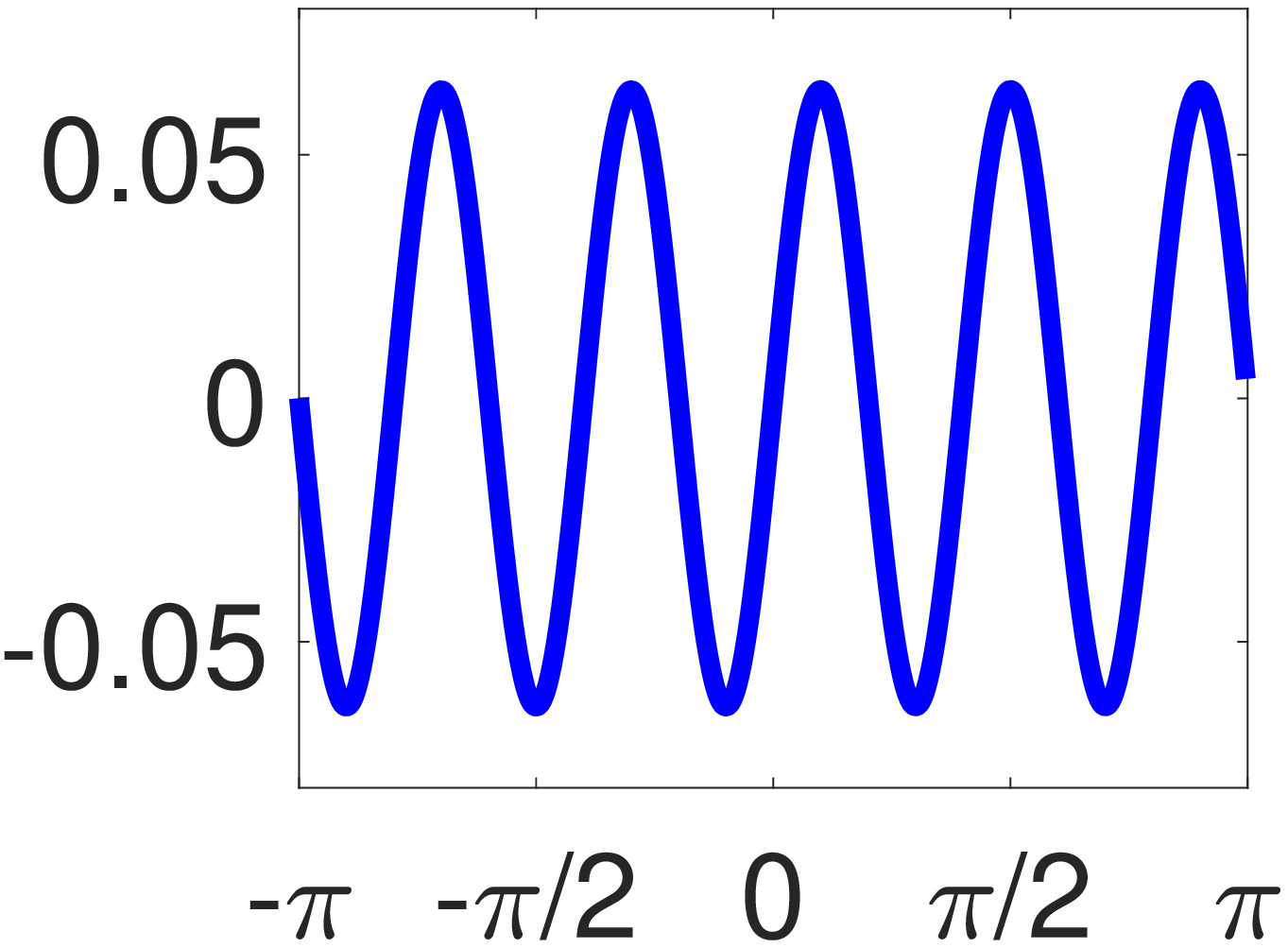}
\includegraphics[width=1.4cm]{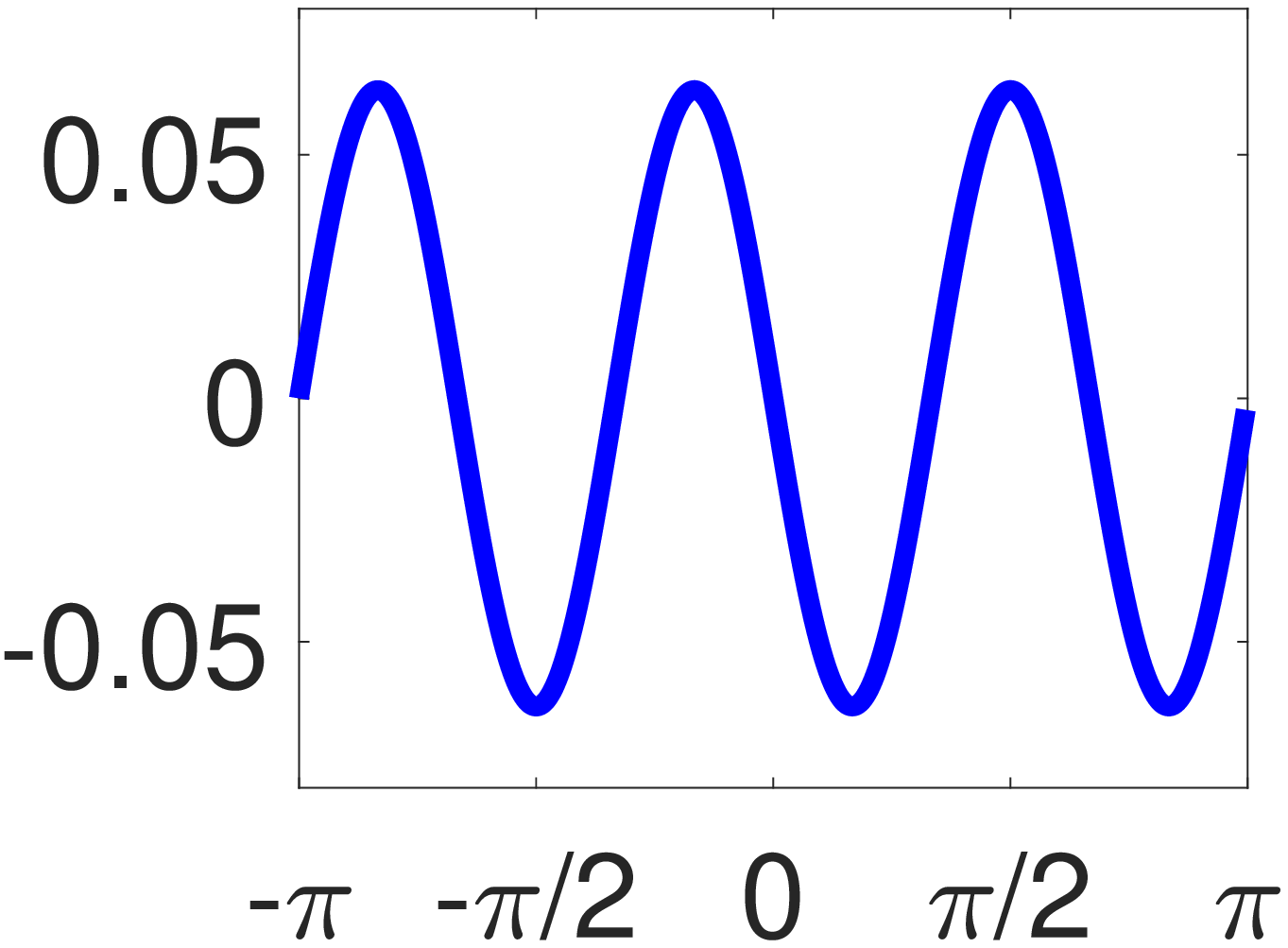}
\includegraphics[width=1.4cm]{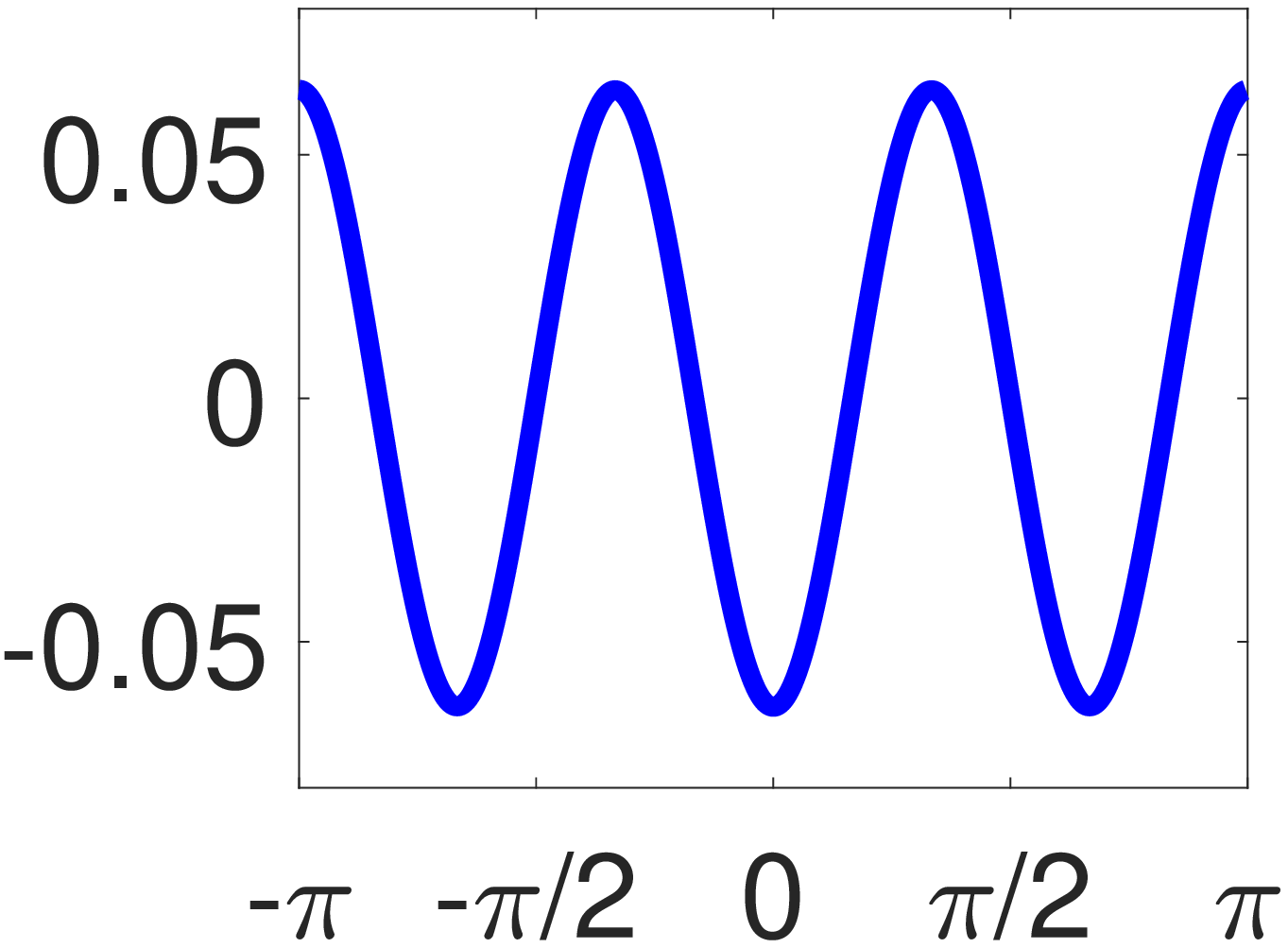}
\caption{\small The six leading eigenvectors %($k=1,1,0,2,2,4$)
and three least significant eigenvectors
%($k=5,3,3$)
of the bias-free $H^\infty$ in descending order of eigenvalues. Note that the least significant eigenvectors resemble low odd frequencies.}
\label{fig:triplet}
%\end{figure}
%\begin{figure}[tb]
\centering
\includegraphics[width=1.4cm]{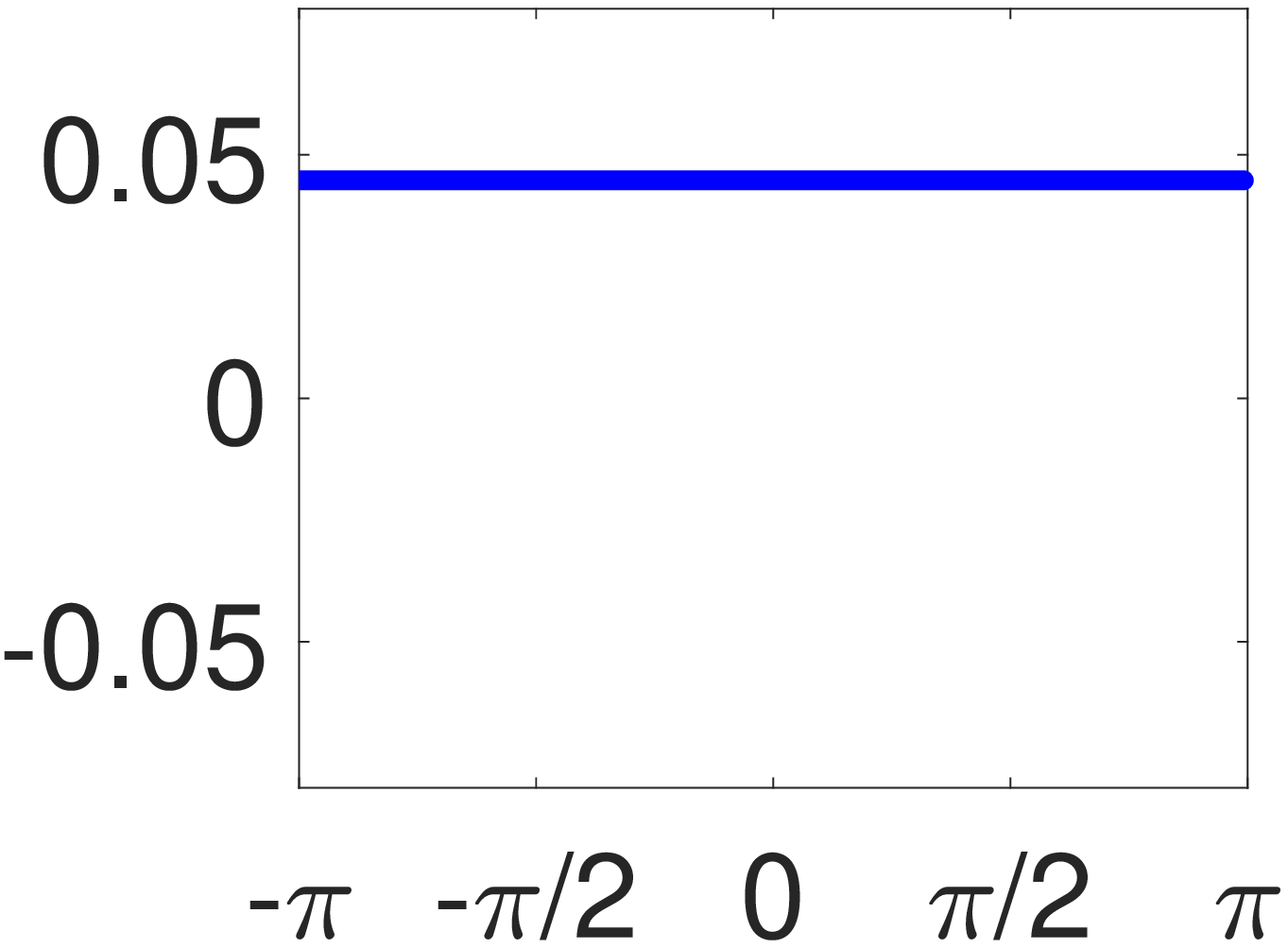}
\includegraphics[width=1.4cm]{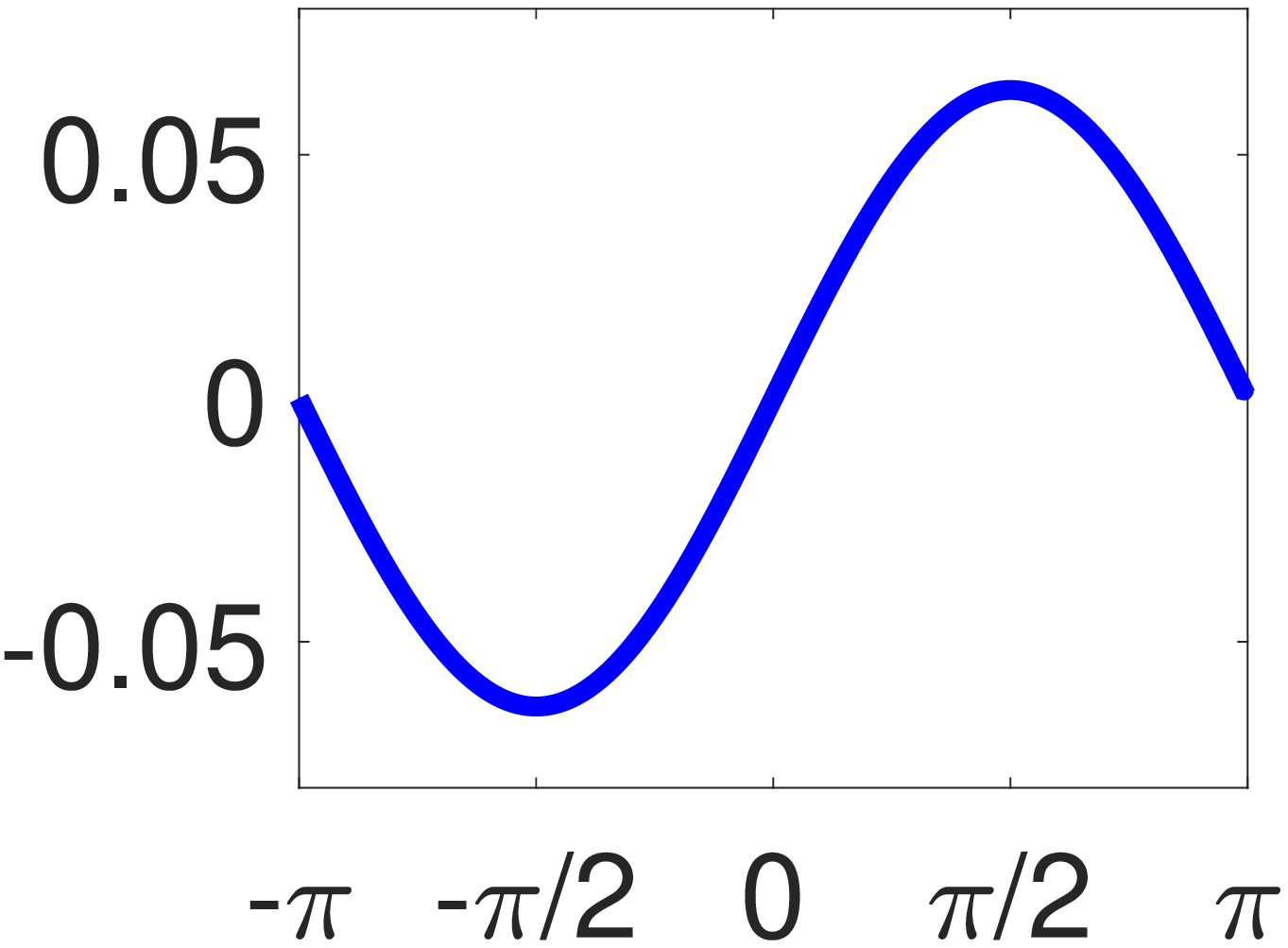}
\includegraphics[width=1.4cm]{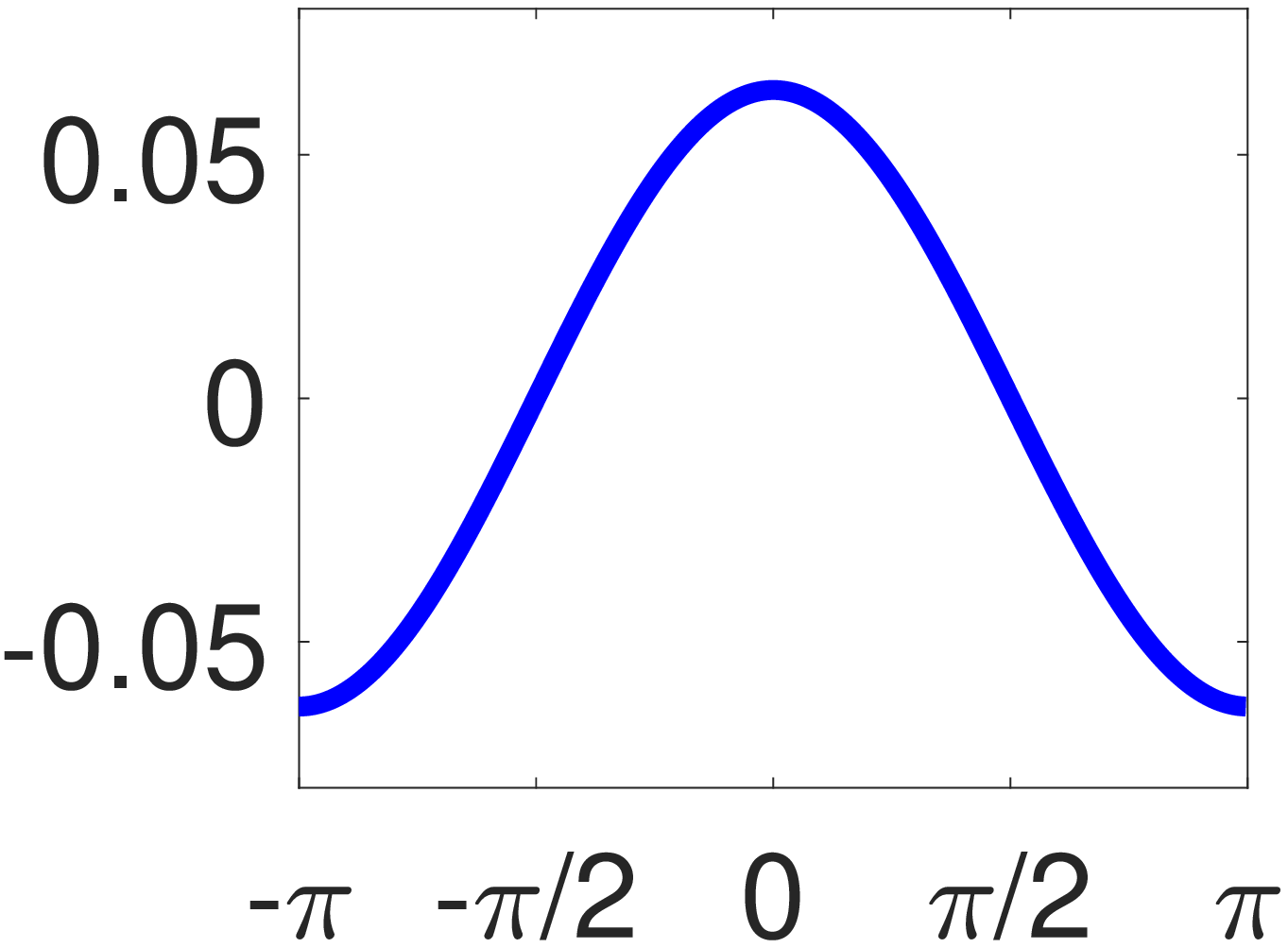}
\includegraphics[width=1.4cm]{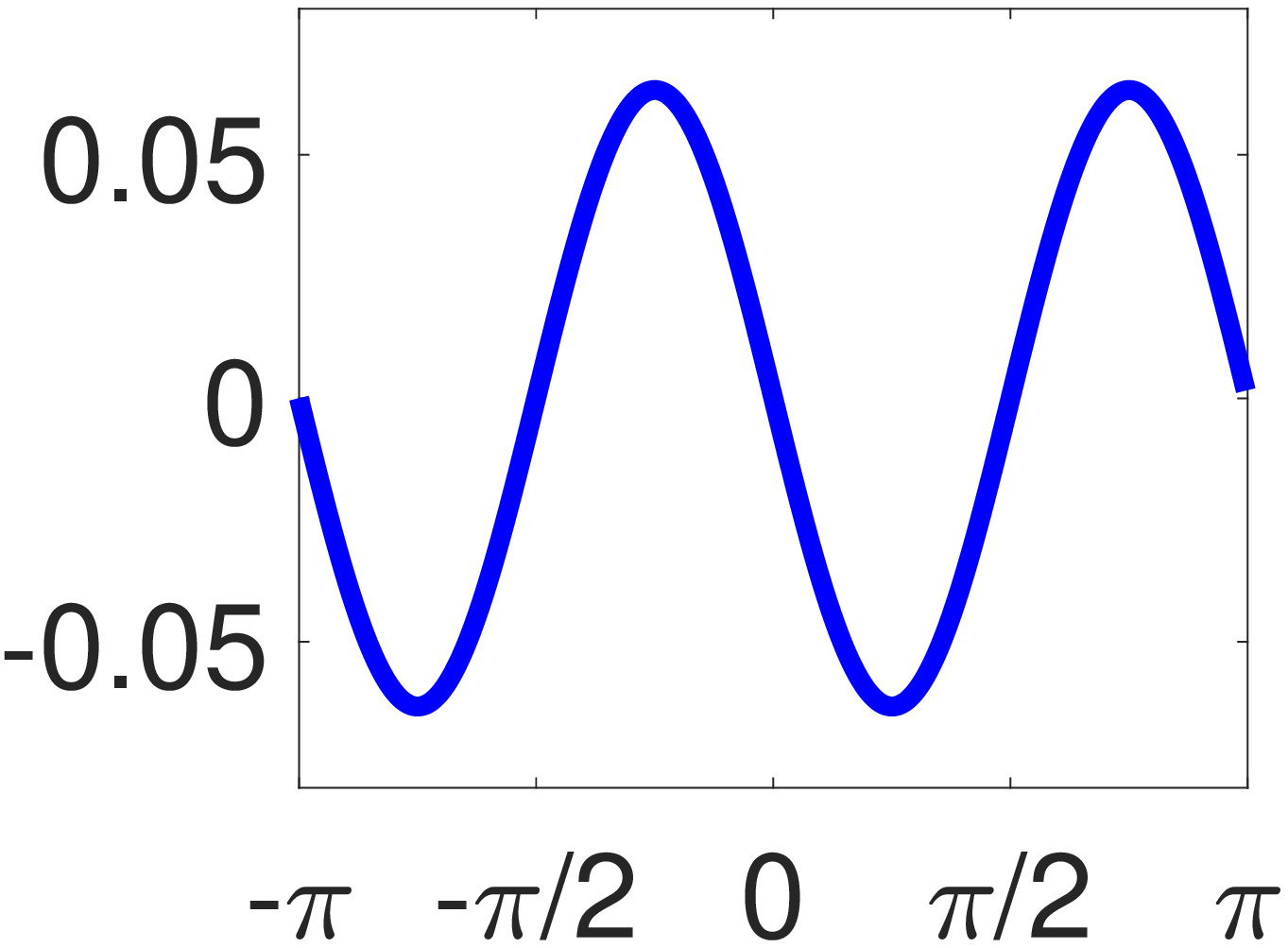}
\includegraphics[width=1.4cm]{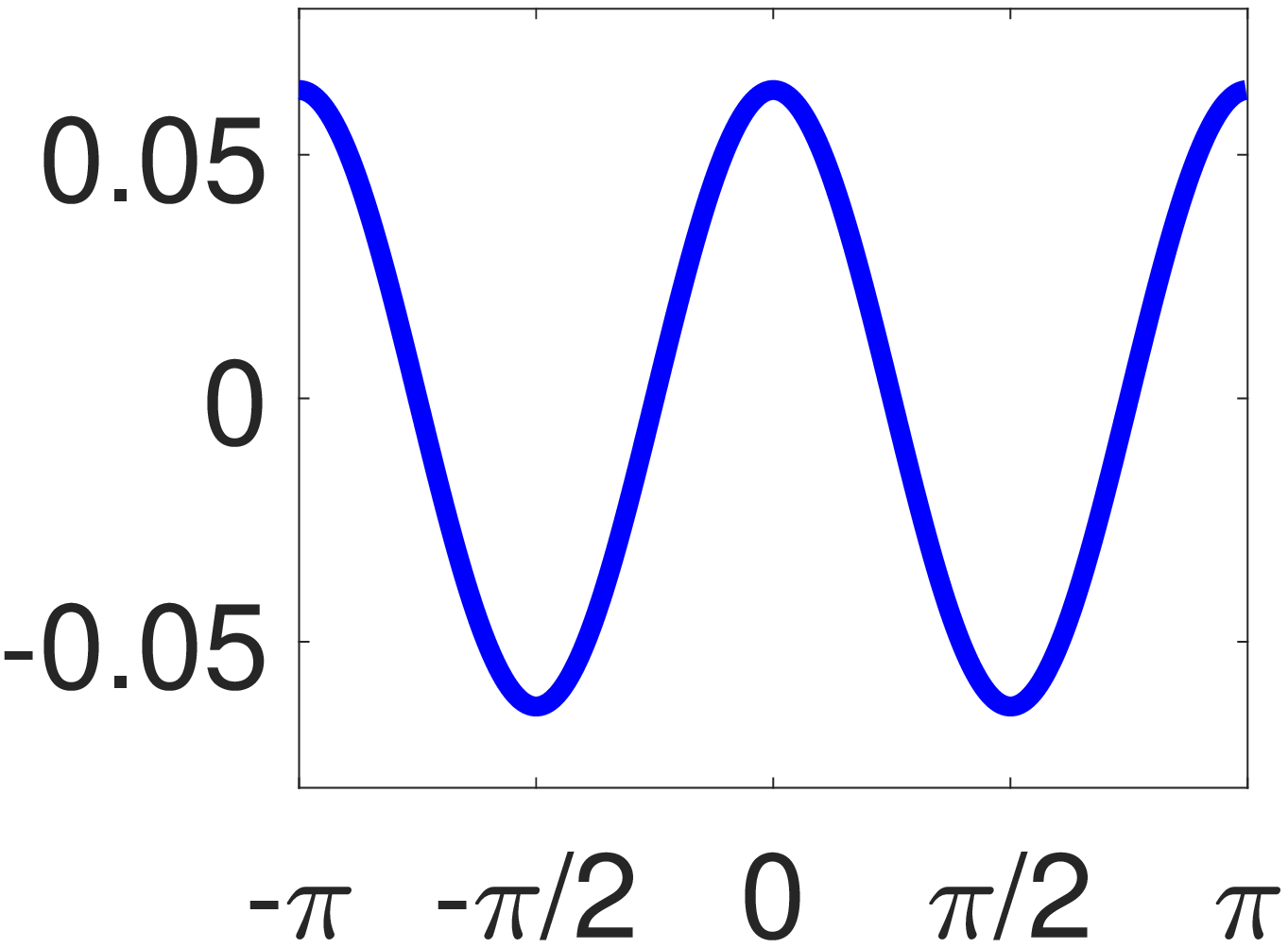}
\includegraphics[width=1.4cm]{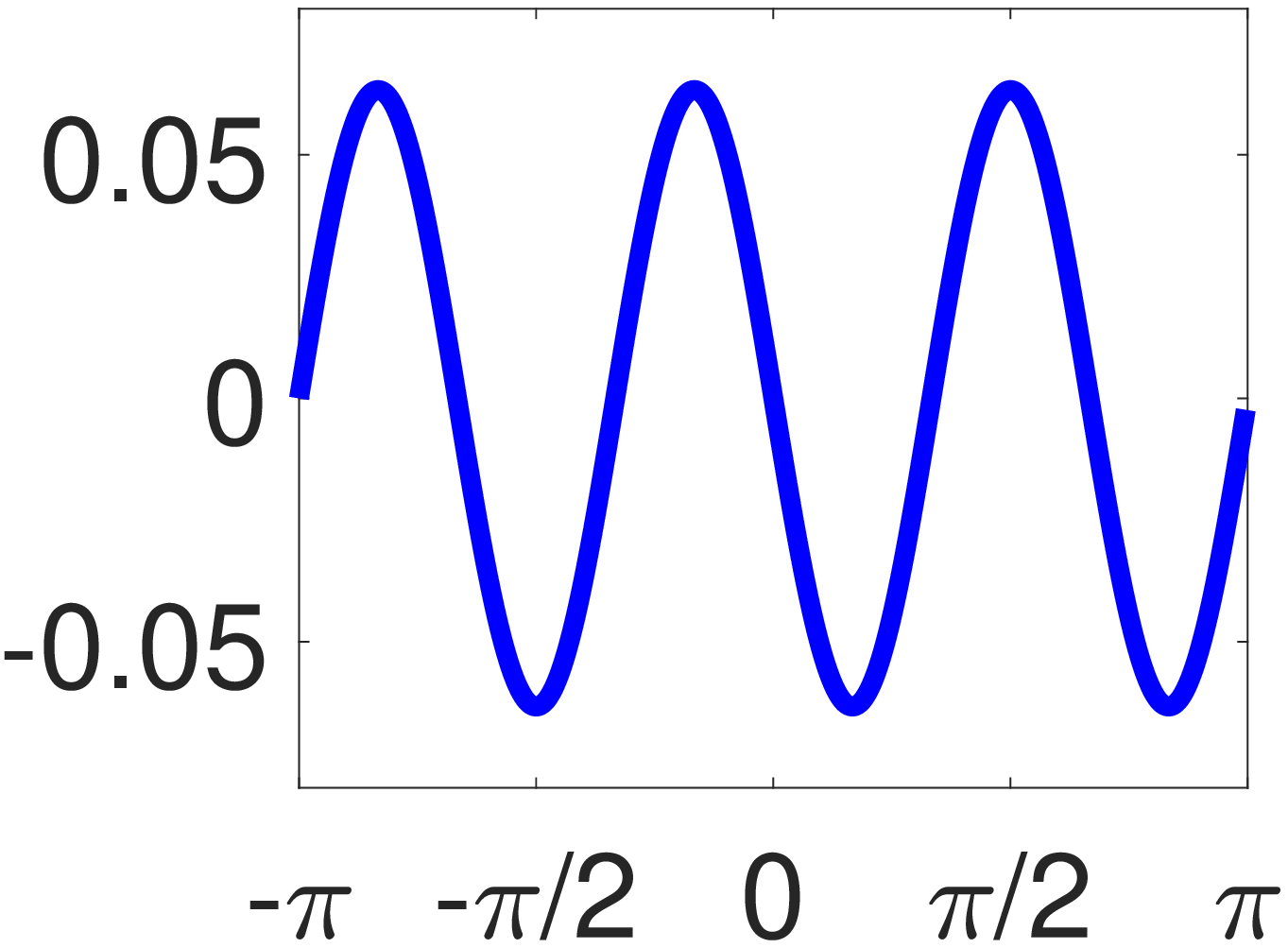}
\includegraphics[width=1.4cm]{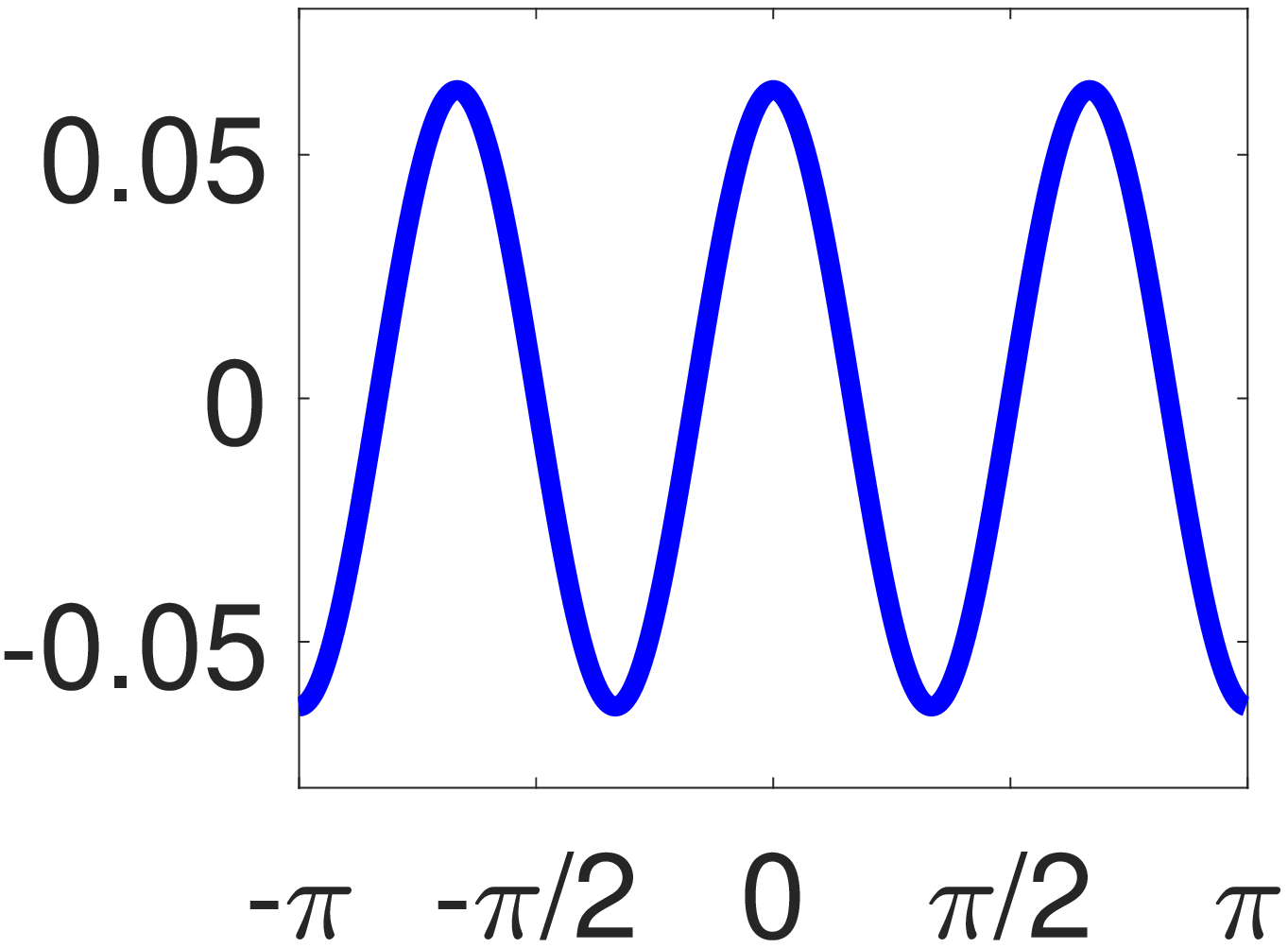}
\includegraphics[width=1.4cm]{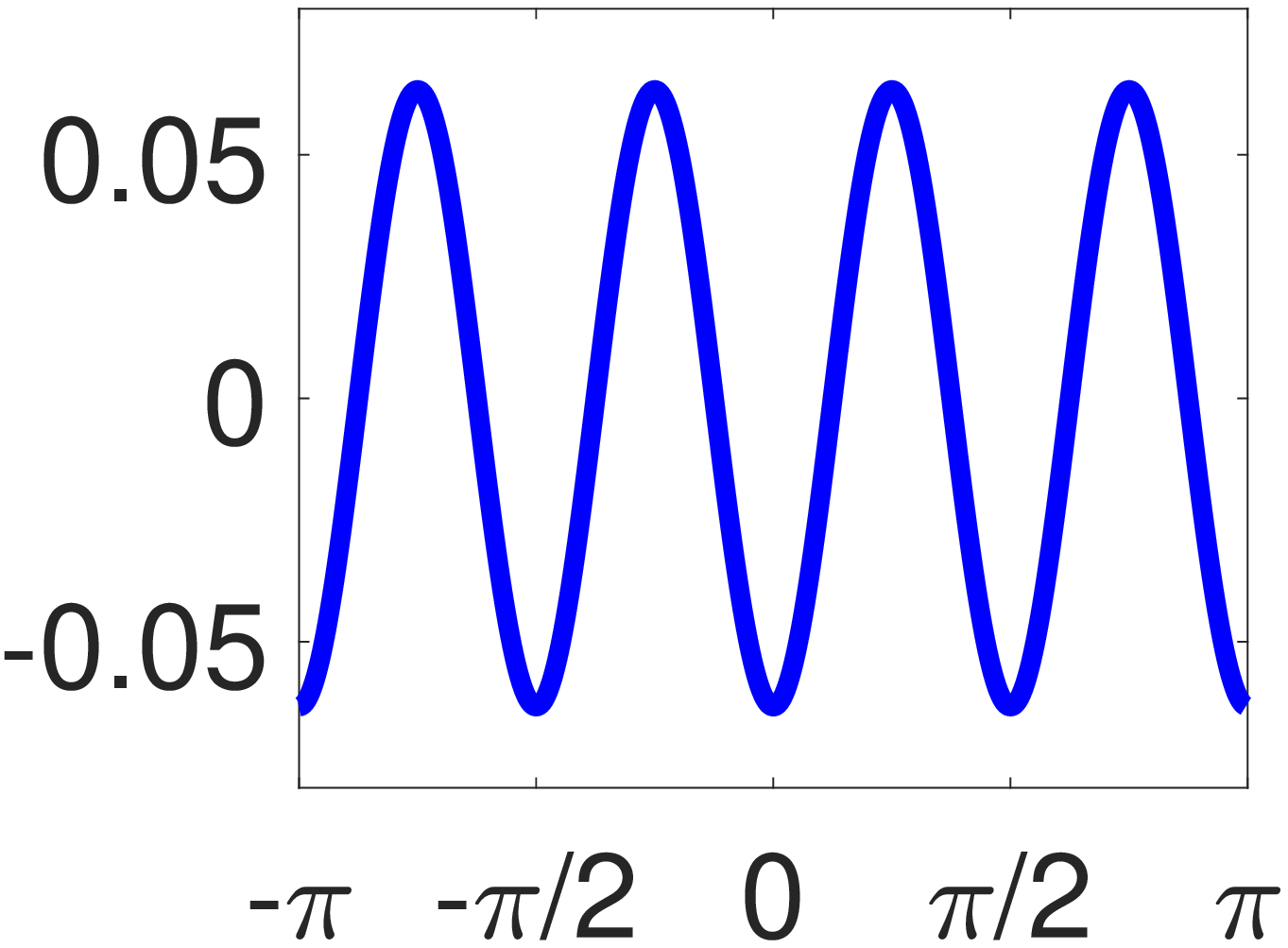}
\includegraphics[width=1.4cm]{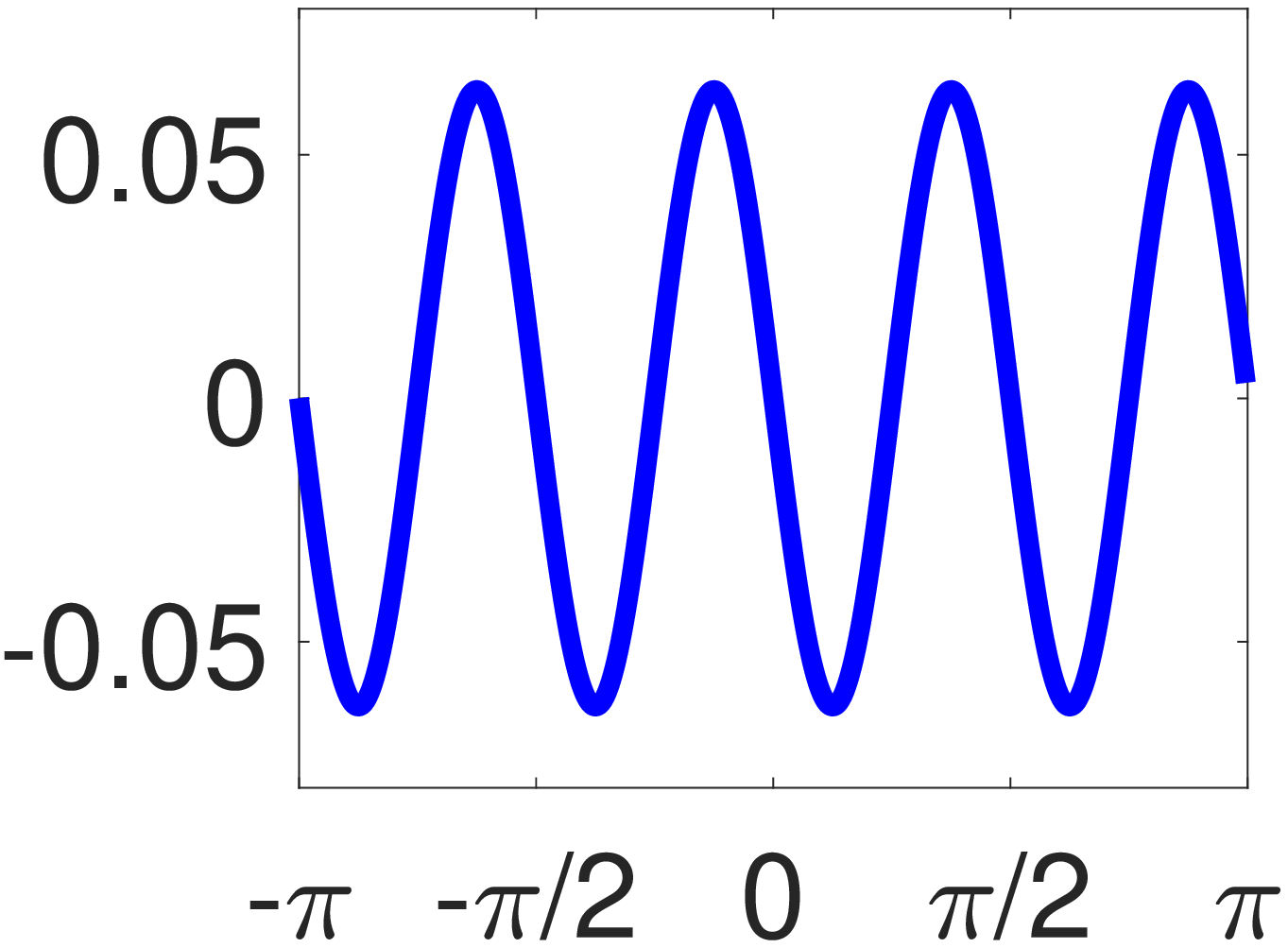}
\caption{\small The nine leading eigenvectors ($k=0,...,4$) of $\bar H^\infty$ in descending order of eigenvalues. Note that now the leading eigenvectors include both the low even and odd frequencies.}
\label{fig:evnobias}
\end{figure}

\begin{figure}[tb]
\centering
\includegraphics[height=2.8cm]{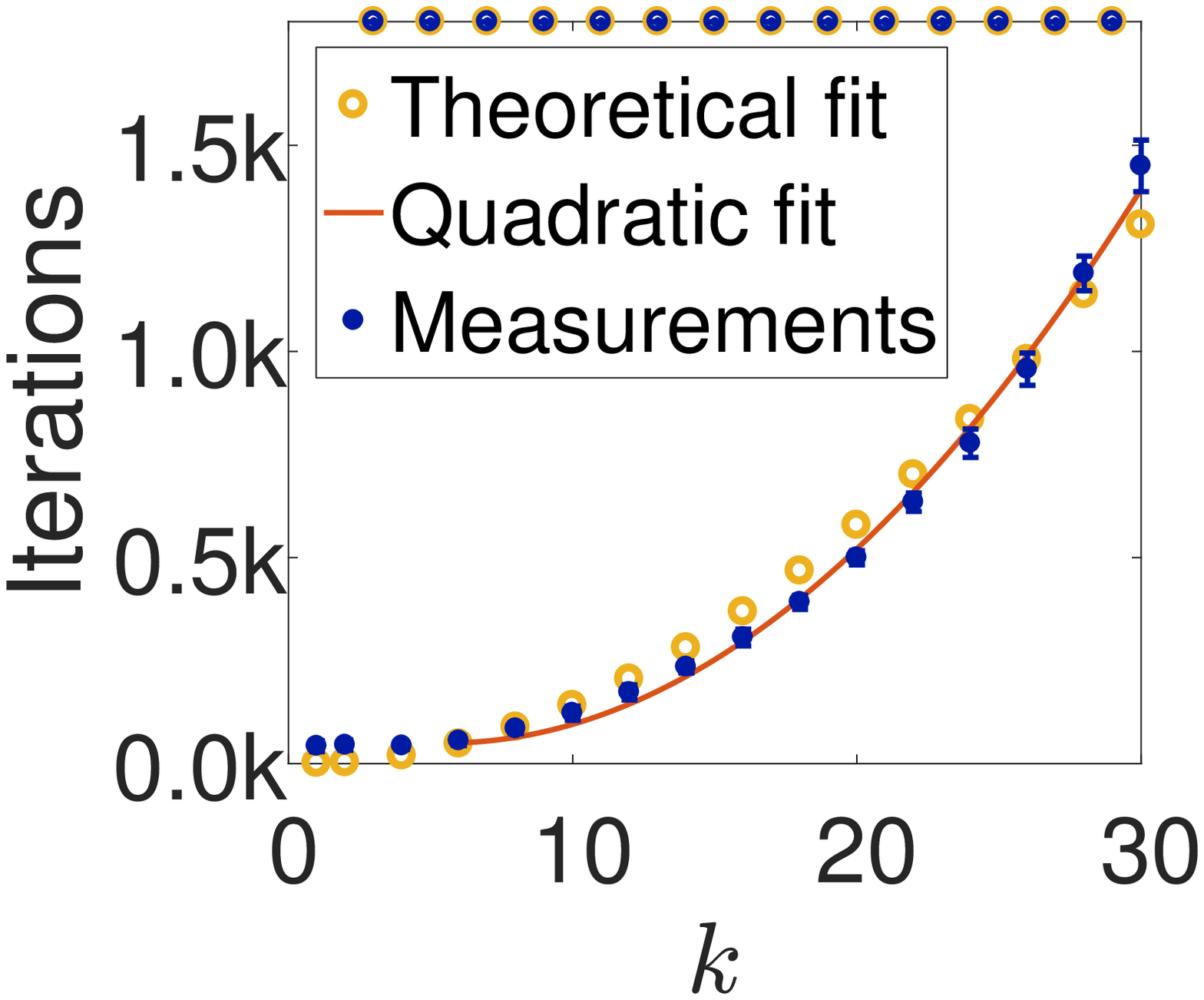} \ \
\includegraphics[height=2.8cm]{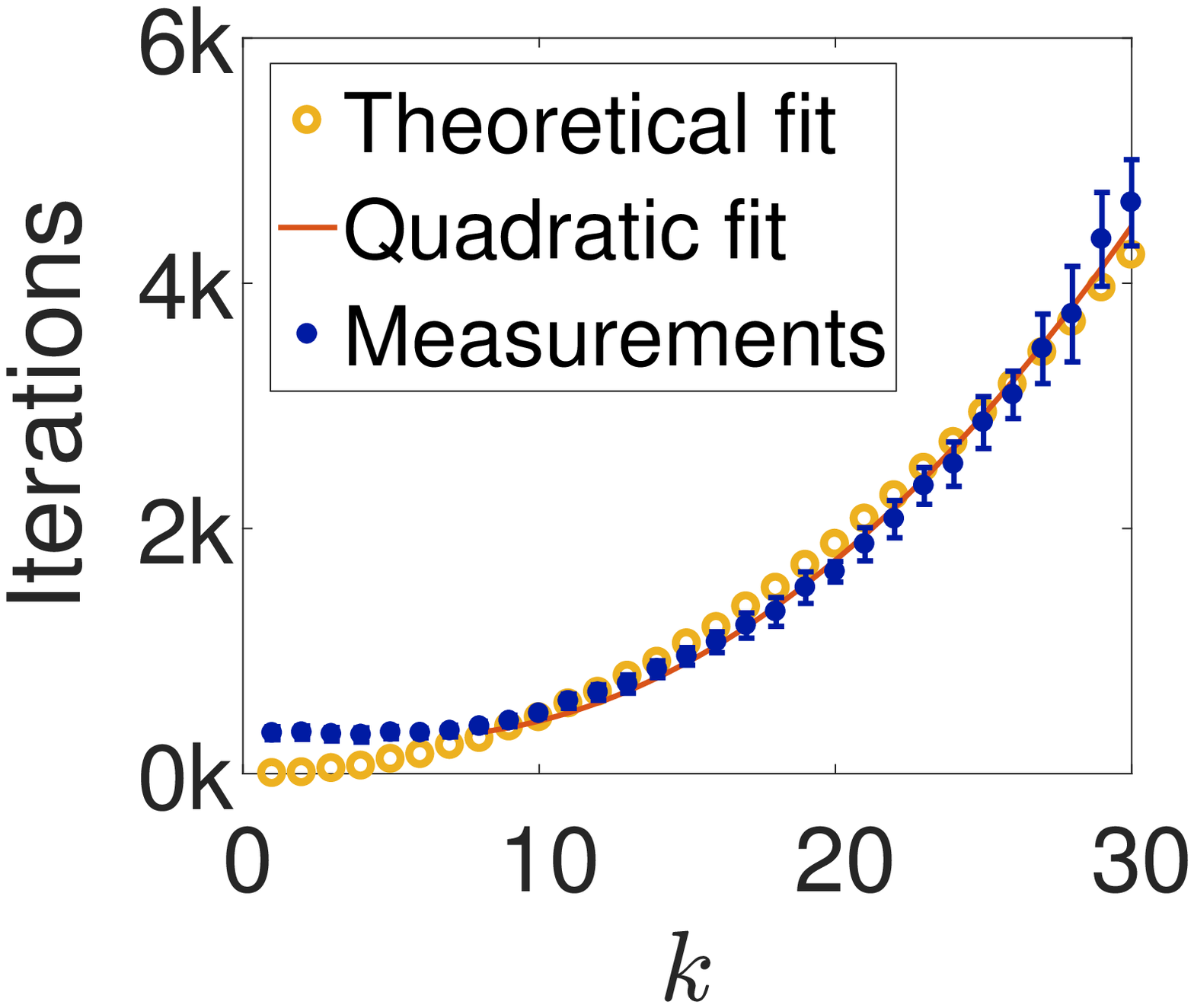} \ \
\includegraphics[height=2.8cm]{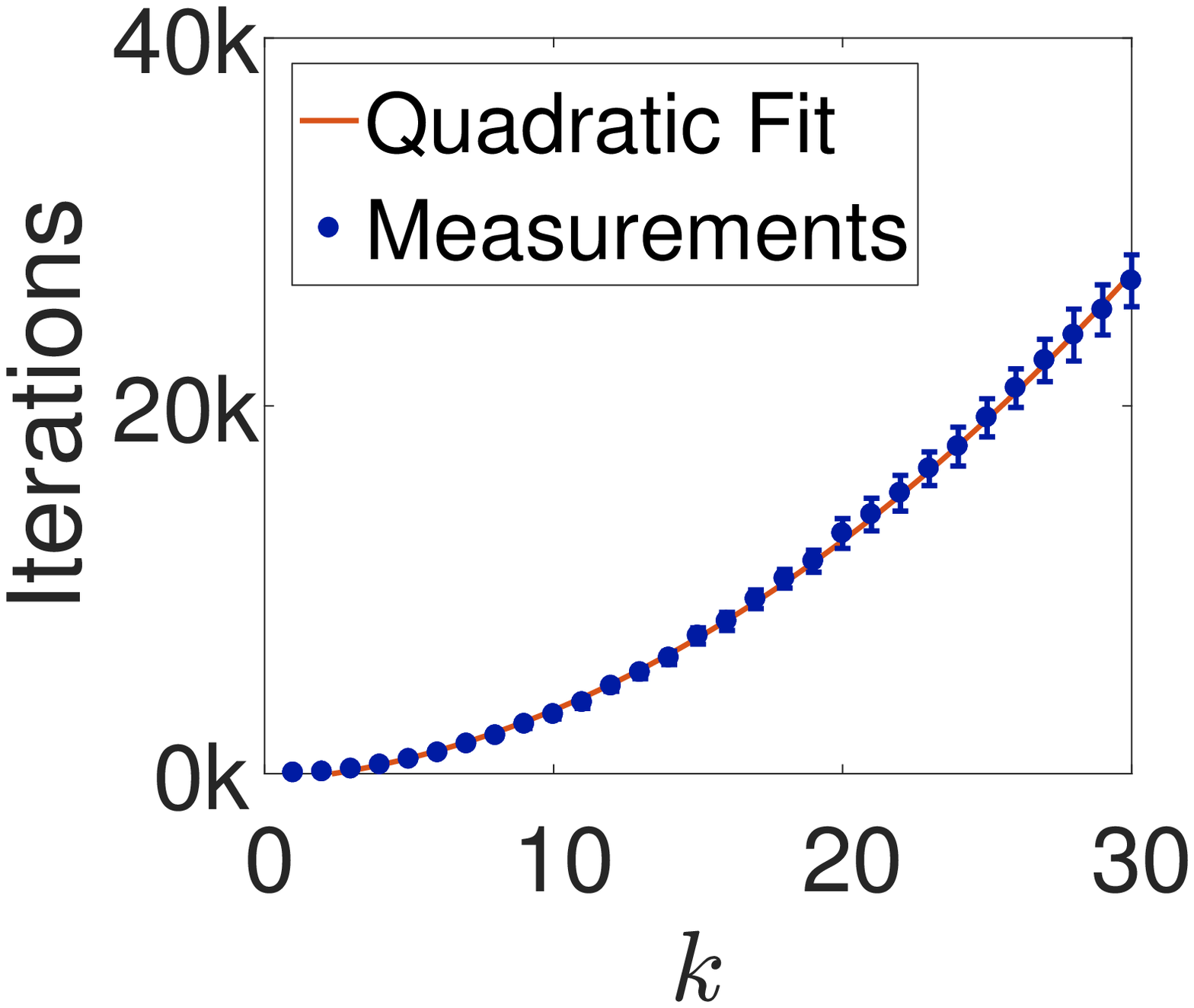}
\includegraphics[height=2.8cm]{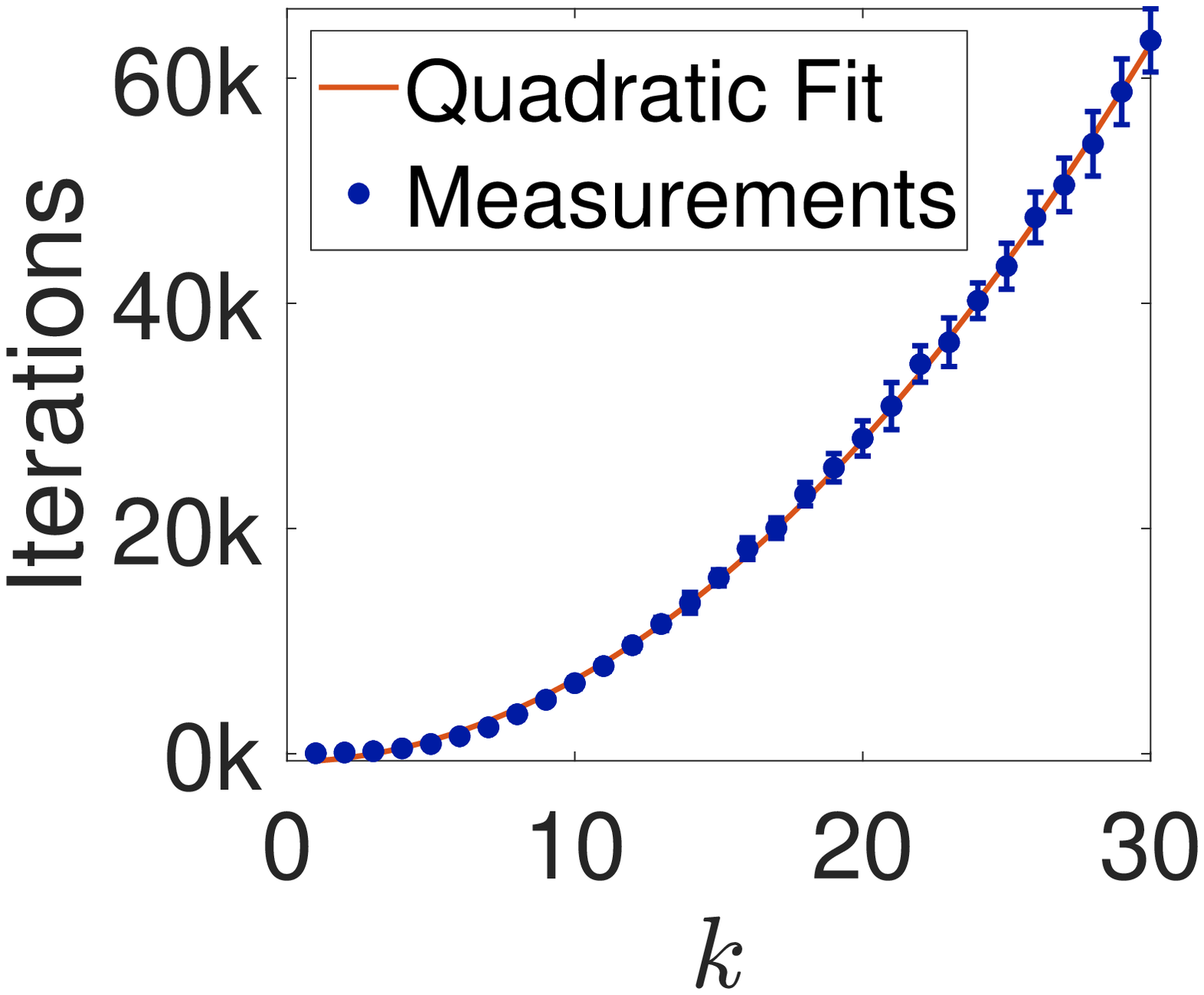}
\caption{\small Convergence times as a function of frequency. Left: $\Sphere^1$ no bias ($m=4000$, $n=1001$, $\kappa=1$, $\eta=0.01$; training odd frequencies was stopped after 1800 iterations had no significant effect on error). Left-center: $\Sphere^1$ with bias ($m=4000$, $n=1001$, $\kappa=2.5$, $\eta=0.01$).  Right-center: deep net (5 hidden layers with bias, $m=256$, $n=1001$, $\eta=0.05$, weight initialized as in \cite{he2015}, bias - uniform). Right: deep residual network (10 hidden layers with same parameters except $\eta=0.01$).  The data lies on a 1D circle embedded in $\Real^{30}$ at a random rotation. We estimate the growth in these graphs, from left, as $O(k^{2.15}), O(k^{1.93}), O(k^{1.94}), O(k^{2.11})$. Theoretical predictions (in orange) were scaled by a multiplicative constant to fit the measurements. This constant reflects the length of each gradient step (e.g., due to the learning rate and size of training set). Convergence is declared when a 5\% fitting error is obtained.}
\label{fig:speed}
\end{figure}

With bias, the kernel $\bar K^\infty$ passes all frequencies, and the odd frequencies no longer belong to its null space.  The Fourier coefficients for this kernel are
% We multiplied by 1/2
%\begin{eqnarray}  \label{eq:ck1}
%    c_k^1 &=&
%\left\{ \begin{array}{ll}
%     \frac{1}{\pi} + \frac{\pi}{4} & k=0\\
%     \frac{\pi}{8} + \frac{1}{\pi} & k=1\\
%     \frac{k^2+1}{\pi (k^2-1)^2}& k \ge 2 ~~\rm{even}  \\
%     \frac{1}{\pi k^2} & k \ge 2 ~~\rm{odd}
%\end{array}\right.
%\end{eqnarray}
\begin{eqnarray}  \label{eq:ck1}
    c_k^1 &=&
\left\{ \begin{array}{ll}
     \frac{1}{2\pi^2} + \frac{1}{8} & k=0\\[0.05cm]
     \frac{1}{\pi^2} + \frac{1}{8} & k=1\\
     \frac{k^2+1}{\pi^2 (k^2-1)^2}& k \ge 2 ~~\rm{even}  \\
     \frac{1}{\pi^2 k^2} & k \ge 2 ~~\rm{odd}
\end{array}\right.
\end{eqnarray}
Figure \ref{fig:evnobias} shows that with bias, the highest eigenvectors include even and odd frequencies.

Thm.~4.1 in \cite{arora2019fine} tells us how fast a network learning each Fourier component should converge, as a function of the eigenvalues computed in \eqref{eq:ck1}. Let $\y_i$ be an eigenvector of $\bar H^\infty$ with eigenvalue $\bar\lambda_i$ and denote by $t_i$ the number of iterations needed to achieve an accuracy $\bar\delta$. Then, according to \eqref{eq:convergence}, $(1-\eta\bar\lambda_i)^{t_i} < \bar\delta + \epsilon$. Noting that since $\eta$ is small, $\log(1-\eta\bar\lambda_i) \approx -\eta\bar\lambda_i$, we obtain that
$ %\begin{equation}
    t_i>\frac{-\log(\bar\delta+\epsilon)}{\eta\bar\lambda_i}.
$ % \end{equation}
Combined with \eqref{eq:ck1} we get that asymptotically in $k$ the convergence time should grow quadratically for {\em all} frequencies. %In contrast, for the bias-free model such a quadratic convergence would be obtained only for the even frequencies. while the odd frequencies will be prohibitively slow to learn.

We perform experiments to compare theoretical predictions to empirical behavior.  We generate uniformly distributed, normalized training data, and assign labels from a single harmonic function.  We then train a neural network until the error is reduced to 5\% of its original value, and count the number of epochs needed.  For odd frequencies and bias-free 2-layer networks we halt training when the network fails to significantly reduce the error in a large number of epochs.  We run experiments with shallow networks and with deep fully connected networks and deep networks with skip connections.  We primarily use an $L_2$ loss, but in supplementary material we show results with a cross-entropy loss. Quadratic behavior is observed in all these cases, see Figure \ref{fig:speed}.
The actual convergence times may vary with the details of the architecture and initialization. For very low frequencies the run time is affected more strongly by the initialization, yielding slightly slower convergence times than predicted.

Thm. 5.1 in \cite{arora2019fine} further allows us to bound the generalization error incurred when learning band limited functions. Suppose $\y=\sum_{k=0}^{\bar k} \alpha_k e^{2\pi ikx}$. According to this theorem, and noting that the eigenvalues of $(\bar H^\infty)^{-1} \approx \pi k^2$, with sufficiently many iterations the population loss $L_{\cal D}$ computed over the entire data distribution is bounded by
\begin{equation}
    L_{\cal D} \lessapprox \sqrt{\frac{2\y(\bar H^\infty)^{-1}\y}{n}} \approx \sqrt{\frac{2\pi \sum_{k=1}^{\bar k} \alpha_k^2 k^2}{n}}.
\end{equation}
As expected, the lower the frequency is, the lower the generalization bound is. For a pure sine wave the bound increases linearly with frequency $k$.

\subsection{Eigenvalues in $\Sphere^d, d \ge 2$}

To analyze the eigenvectors of $H^\infty$ when the input is higher dimensional, we must make use of  generalizations of the Fourier basis and convolution  to functions on a high dimensional hypersphere.  Spherical harmonics provide an appropriate generalization of the Fourier basis (see \cite{gallier2009notes} as a reference for the following discussion).  As with the Fourier basis, we can express functions on the hypersphere as linear combinations of spherical harmonics.  Since the kernel is rotationally symmetric, and therefore a function of one variable, it can be written as a linear combination of the {\em zonal} harmonics.  For every frequency, there is a single zonal harmonic which is also a function of one variable.  The zonal harmonic is given by the Gegenbauer polynomial, $P_{k,d}$ where $k$ denotes the frequency, and $d$ denotes the dimension of the hypersphere.

We have already defined convolution in \eqref{eq:convolution} in a way that is general for convolution on the hypersphere. The Funk-Hecke theorem provides a generalization of the convolution theorem for spherical harmonics, allowing us to perform a frequency analysis of the convolution kernel.  It states:
\begin{theorem}
(Funk-Hecke) Given any measurable function $K$ on $[-1,1]$, such that the integral:
$\int_{-1}^1 \|K(t)\|(1-t^2)^{\frac{d-2}{2}}dt < \infty$,
%\dwj{In a new book (https://www.seas.upenn.edu/~jean/diffgeom-spr-II.pdf) they explain that "makes sense" means $<\infty$}
%\rb{fix this: is L1 (see http://www.spm.uem.br/bspm/pdf/vol37-3/13.pdf)}
for every spherical harmonic $H(\sigma)$ of frequency $k$, we have:
\begin{equation*}
    \int_{\Sphere^d} K(\sigma \cdot \xi)H(\xi)d\xi = \left(\mathrm{Vol}(\Sphere^{d-1}) \int_{-1}^1 K(t)P_{k,d}(t)(1-t^2)^{\frac{d-2}{2}}dt\right) H(\sigma).
\end{equation*}
\end{theorem}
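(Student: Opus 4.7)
The plan is to view the map $T_K: f \mapsto \int_{\Sphere^d} K(\sigma \cdot \xi) f(\xi)\, d\xi$ as a linear operator on functions on $\Sphere^d$ and to show that it acts as a scalar multiple of the identity on each space $\mathcal{H}_k$ of degree-$k$ spherical harmonics; the Funk-Hecke formula then amounts to identifying that scalar.

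First I would verify that $T_K$ commutes with rotations. Writing $(R \cdot f)(\xi) := f(R^{-1}\xi)$ for $R \in SO(d+1)$, the change of variable $\xi = R\eta$, together with the $SO(d+1)$-invariance of both the uniform measure on $\Sphere^d$ and of the inner product, yields $T_K(R \cdot f) = R \cdot (T_K f)$. Since $\mathcal{H}_k$ is a (classically known) irreducible representation of $SO(d+1)$, Schur's lemma forces $T_K|_{\mathcal{H}_k} = \lambda_k \cdot \mathrm{Id}$, so that $T_K H = \lambda_k H$ for every $H \in \mathcal{H}_k$ and some scalar $\lambda_k$ depending only on $K$ and $k$.

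To pin down $\lambda_k$, I would test against a zonal harmonic. For fixed $\sigma \in \Sphere^d$, set $H_\sigma(\xi) := P_{k,d}(\sigma \cdot \xi)$, which lies in $\mathcal{H}_k$ and, in the normalization of \eqref{eq:Gegenbauer}, satisfies $H_\sigma(\sigma) = P_{k,d}(1) = 1$. Evaluating $T_K H_\sigma$ at $\sigma$, I would parameterize $\xi \in \Sphere^d$ by the latitude $t = \sigma \cdot \xi \in [-1,1]$ together with an angular coordinate on the $(d-1)$-sphere of directions orthogonal to $\sigma$ at that latitude; the surface element factors as $d\xi = (1-t^2)^{(d-2)/2}\, dt\, d\omega$. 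Integrating out $d\omega$ contributes the factor $\mathrm{Vol}(\Sphere^{d-1})$ and leaves precisely the one-dimensional integral appearing in the statement. Comparing the result with $\lambda_k H_\sigma(\sigma) = \lambda_k$ identifies $\lambda_k$, and substituting back into $T_K H = \lambda_k H$ gives the formula for arbitrary $H \in \mathcal{H}_k$.

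The main obstacle is the representation-theoretic step: one must appeal to the nontrivial classical fact that $\mathcal{H}_k$ is irreducible under $SO(d+1)$ in order to reduce the claim from an identity of functions to the determination of a single scalar. The remaining ingredients, namely the commutativity of $T_K$ with rotations, the latitude decomposition of the surface measure, and the value $P_{k,d}(1) = 1$ (a short computation from the Rodrigues expression \eqref{eq:Gegenbauer}), are routine. If one prefers to bypass Schur's lemma, an alternative is to expand $K$ in the Gegenbauer basis and use the addition formula $P_{k,d}(\sigma \cdot \xi) = c_{k,d} \sum_j Y_{k,j}(\sigma)\overline{Y_{k,j}(\xi)}$, integrating term by term against $H$ via orthogonality of spherical harmonics; this route trades abstract machinery for careful bookkeeping of normalization constants. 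In either case, the integrability hypothesis on $K$ against the weight $(1-t^2)^{(d-2)/2}$ is exactly what ensures $T_K$ is a well-defined operator, mirroring the Jacobian in the latitude decomposition.
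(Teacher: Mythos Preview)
The paper does not supply its own proof of this statement: Funk--Hecke is quoted as a classical fact (with the reader pointed to \cite{gallier2009notes}) and then used as a tool, so there is nothing to compare your argument against on the paper's side.

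That said, your sketch is a correct and standard proof of Funk--Hecke. The rotation-equivariance of $T_K$ plus irreducibility of $\mathcal{H}_k$ under $SO(d+1)$ (Schur) gives $T_K H = \lambda_k H$, and evaluating on the zonal harmonic $H_\sigma(\xi) = P_{k,d}(\sigma\cdot\xi)$ at $\sigma$ with the latitude decomposition $d\xi = (1-t^2)^{(d-2)/2}\,dt\,d\omega$ identifies $\lambda_k$. The check $P_{k,d}(1)=1$ in the paper's Rodrigues normalization \eqref{eq:Gegenbauer} indeed goes through by a short Leibniz-rule limit computation, so that step is fine. Your alternative via the addition formula is also a legitimate route; both are standard in the spherical-harmonics literature. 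The only caveat is that you are importing the irreducibility of $\mathcal{H}_k$ as a black box, which is itself a classical but nontrivial result; since the paper treats Funk--Hecke as background anyway, this is appropriate.
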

Here $\mathrm{Vol}(\Sphere^{d-1})$ denotes the volume of $\Sphere^{d-1}$ and $P_{k,d}(t)$ denotes the Gegenbauer polynomial defined in \eqref{eq:Gegenbauer}. This tells us that the spherical harmonics are the eigenfunctions of convolution.  The eigenvalues can be found by taking an inner product between $K$ and the zonal harmonic of frequency $k$.  Consequently, we see  that for uniformly distributed input, in the limit for $n \rightarrow \infty$, the eigenvectors of $H^\infty$ are the spherical harmonics in $\Sphere^d$.

Similar to the case of $\Sphere^1$, in the bias free case the odd harmonics with $k \ge 3$ lie in the null space of $K^\infty$. This is proved in the following theorem.
\begin{theorem}  \label{thm:oddvanish}
The eigenvalues of convolution with $K^\infty$ vanish when they correspond to odd harmonics with $k \ge 3$.
\end{theorem}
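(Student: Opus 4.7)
The plan is to reduce the claim to a one-dimensional integral via the Funk-Hecke theorem and then exploit a parity argument that mirrors the proof of Theorem~\ref{thm:noodds}. First I would apply Funk-Hecke to express the eigenvalue at frequency $k$ as
\[ \lambda_k \;=\; \mathrm{Vol}(\Sphere^{d-1}) \int_{-1}^{1} K^\infty(t)\,P_{k,d}(t)\,(1-t^2)^{(d-2)/2}\,dt, \]
with $K^\infty(t) = \tfrac{1}{2\pi}\, t\,(\pi - \arccos t)$.

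Next I would decompose $\pi - \arccos t$ into its symmetric and antisymmetric parts about $t=0$. The identity $\arccos(-t) = \pi - \arccos t$ gives $\pi - \arccos t = \tfrac{\pi}{2} + q(t)$, where $q(t) := \tfrac{\pi}{2} - \arccos t$ is an odd function of $t$. The kernel therefore splits as $K^\infty(t) = \tfrac{t}{4} + \tfrac{t\,q(t)}{2\pi}$: a symmetric degree-one piece plus an antisymmetric tail.

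Then I would use the standard parity of the Gegenbauer polynomial: for odd $k$, $P_{k,d}(t)$ is odd in $t$, so $t\,P_{k,d}(t)$ is even, and the weight $(1-t^2)^{(d-2)/2}$ is also even. Multiplying by the odd factor $q(t)$ therefore produces an odd integrand on $[-1,1]$ whose integral vanishes. Only the symmetric half of $K^\infty$ survives, leaving
\[ \lambda_k \;=\; \frac{\mathrm{Vol}(\Sphere^{d-1})}{4} \int_{-1}^{1} t\,P_{k,d}(t)\,(1-t^2)^{(d-2)/2}\,dt. \]

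Finally, since $P_{1,d}(t)$ is proportional to $t$, the remaining integral is (up to a constant) the Gegenbauer inner product of $P_{1,d}$ with $P_{k,d}$ under the weight $(1-t^2)^{(d-2)/2}$. Orthogonality of the Gegenbauer polynomials forces it to vanish for every $k \neq 1$, in particular for all odd $k \ge 3$, which is the claim. The only delicate step is the parity decomposition of $\arccos$; once that is in hand the conclusion falls out of orthogonality, exactly in the spirit of the argument used to establish Theorem~\ref{thm:noodds}.
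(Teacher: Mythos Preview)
Your argument is correct and is in fact more direct than the paper's own proof of this theorem. (There is a harmless slip of terminology: in the decomposition $K^\infty(t)=\tfrac{t}{4}+\tfrac{t\,q(t)}{2\pi}$, the piece $\tfrac{t}{4}$ is \emph{odd} in $t$ and $\tfrac{t\,q(t)}{2\pi}$ is \emph{even}, the reverse of how you label them; but the subsequent parity bookkeeping is carried out correctly, so the conclusion stands.)

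The paper takes a genuinely different route. It factors the kernel as $K^\infty(\x_i,\x_j)=\int_{\Sphere^d}\z(\w,\x_i)^T\z(\w,\x_j)\,d\w$ with $\z(\w,\x)=\Ind(\w^T\x>0)\,\x$, applies this operator to an odd harmonic $y$ of order $k\ge 3$, and observes that each coordinate of $y(\x)\,\x$ is a combination of even harmonics with no DC component. It then shows, again via Funk-Hecke, that the half-space indicator kernel $\Ind(\w^T\x>0)$ annihilates every even harmonic of order $\ge 2$. Your approach, by contrast, never leaves the one-dimensional Funk-Hecke integral and uses nothing beyond the parity of $\arccos$ together with Gegenbauer orthogonality; it is essentially the same machinery the paper deploys for Theorem~\ref{thm:noodds}, transplanted from a single ReLU unit to the full kernel $K^\infty$. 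The paper's factorization buys a structural explanation of \emph{why} the kernel is blind to these harmonics (the blindness is inherited from the indicator part of each ReLU feature), whereas your argument is shorter and purely computational.
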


\begin{proof}
Consider the vector function $\z(\w,\x)=\Ind(\w^T\x>0)\x$ and note that $K^\infty(\x_i,\x_j) = \int_{\Sphere^d} \z^T(\w,\x_i) \z(\w,\x_j) d\w$. Let $y(\x)$ be an odd order harmonic of frequency $k>1$. The application of $\z$ to $y$ takes the form
\begin{equation} \label{eq:oddharmonics}
    \int_{\Sphere^d} \z(\w,\x) y(\x) d\x = \int_{\Sphere^d} \Ind(\w^T\x>0) \g(\x) d\x,
\end{equation}
where $\g(\x)=y(\x)\x$. $\g(\x)$ is a $(d+1)$-vector whose $l^{\text{th}}$ coordinate is $g^l(\x)=x^l y(x)$. We first note that $g^l(\x)$ has no DC component. This is because $g^l$ is the product of two harmonics, the scaled first order harmonic, $x^l$, and the odd harmonic $y(\x)$ (with $k>1$), so their inner product vanishes.

Next we will show that the kernel $\Ind(\w^T\x>0)$ annihilates the even harmonics, for $k>1$.  Note that the odd/even harmonics can be written as a sum of monomials of odd/even degrees.   Since $g$ is the sum of even harmonics (the product of $x^l$ and an odd harmonic) this will imply that \eqref{eq:oddharmonics} vanishes.
Using the Funk-Hecke theorem, the even coefficients of the kernel (with $k>1$) are
\begin{eqnarray}
    r_k^d & = & Vol(\Sphere^{d-1})\int_{-1}^1 \Ind(t>0) P_{k,d}(t)(1-t^2)^{\frac{d-2}{2}}dt \\ & = & Vol(\Sphere^{d-1})\int_0^1 P_{k,d}(t)(1-t^2)^{\frac{d-2}{2}}dt = \frac{Vol(\Sphere^{d-1})}{2}\int_{-1}^1 P_{k,d}(t)(1-t^2)^{\frac{d-2}{2}}dt = 0. \nonumber
\end{eqnarray}
When we align the kernel with the zonal harmonic, $\w^T\x = t$, justifying the second equality.  The third equality is due to the symmetry of the even harmonics, and the last equality is because the harmonics of $k>0$ are zero mean.
\end{proof}

\begin{figure}[tb]
\centering
\includegraphics[height=3cm]{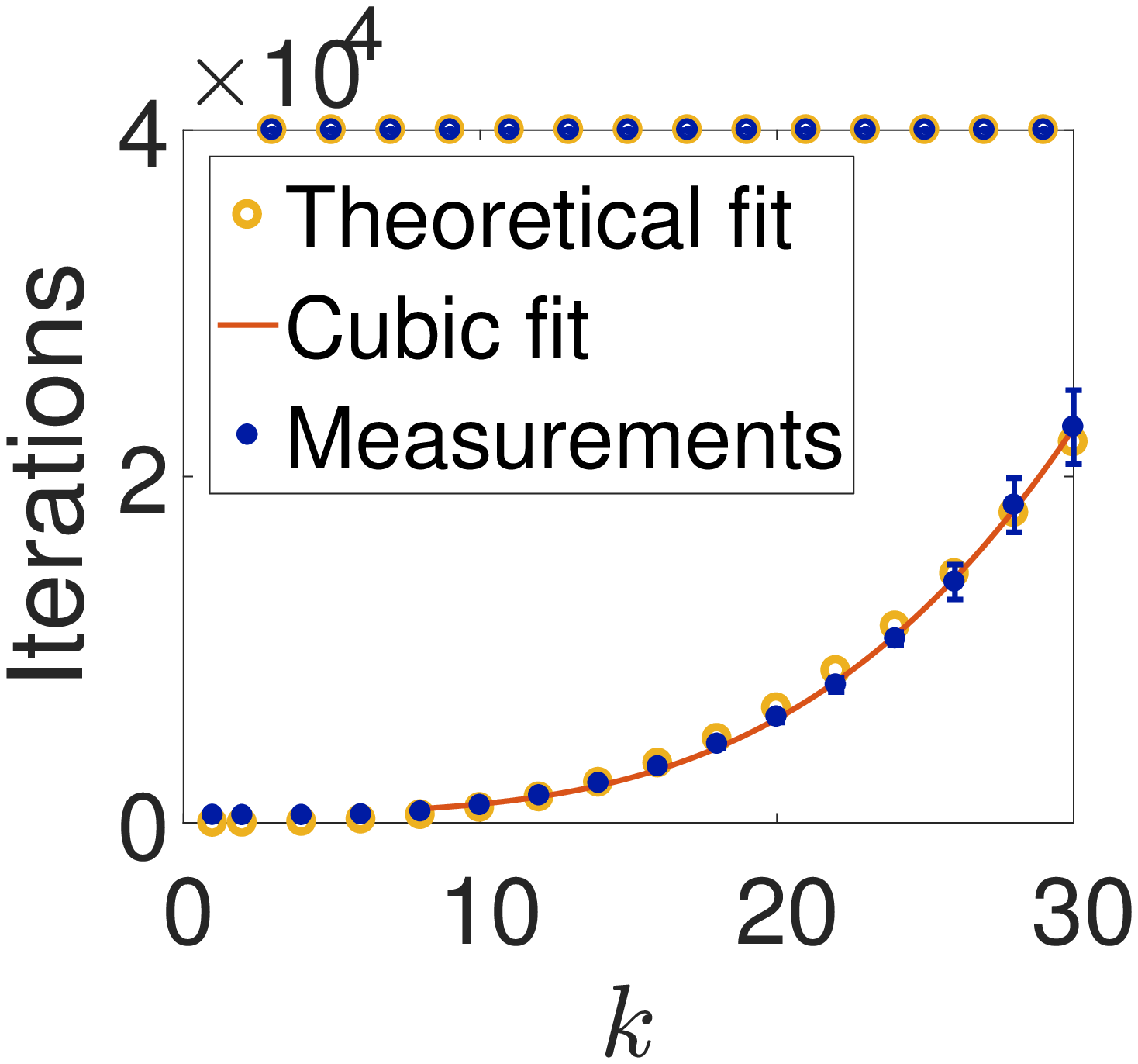}~
\includegraphics[height=3cm]{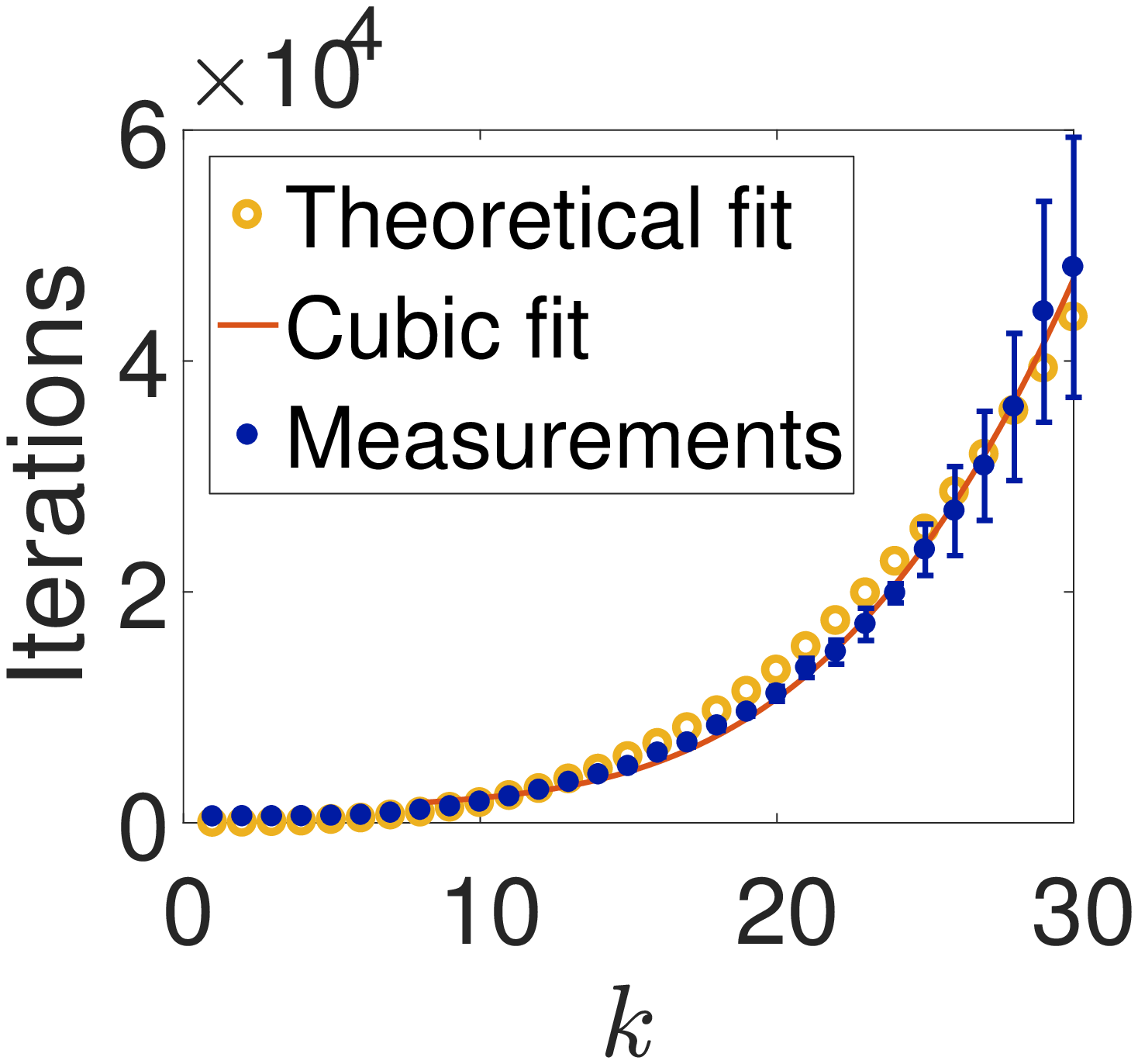}~
\includegraphics[height=2.8cm]{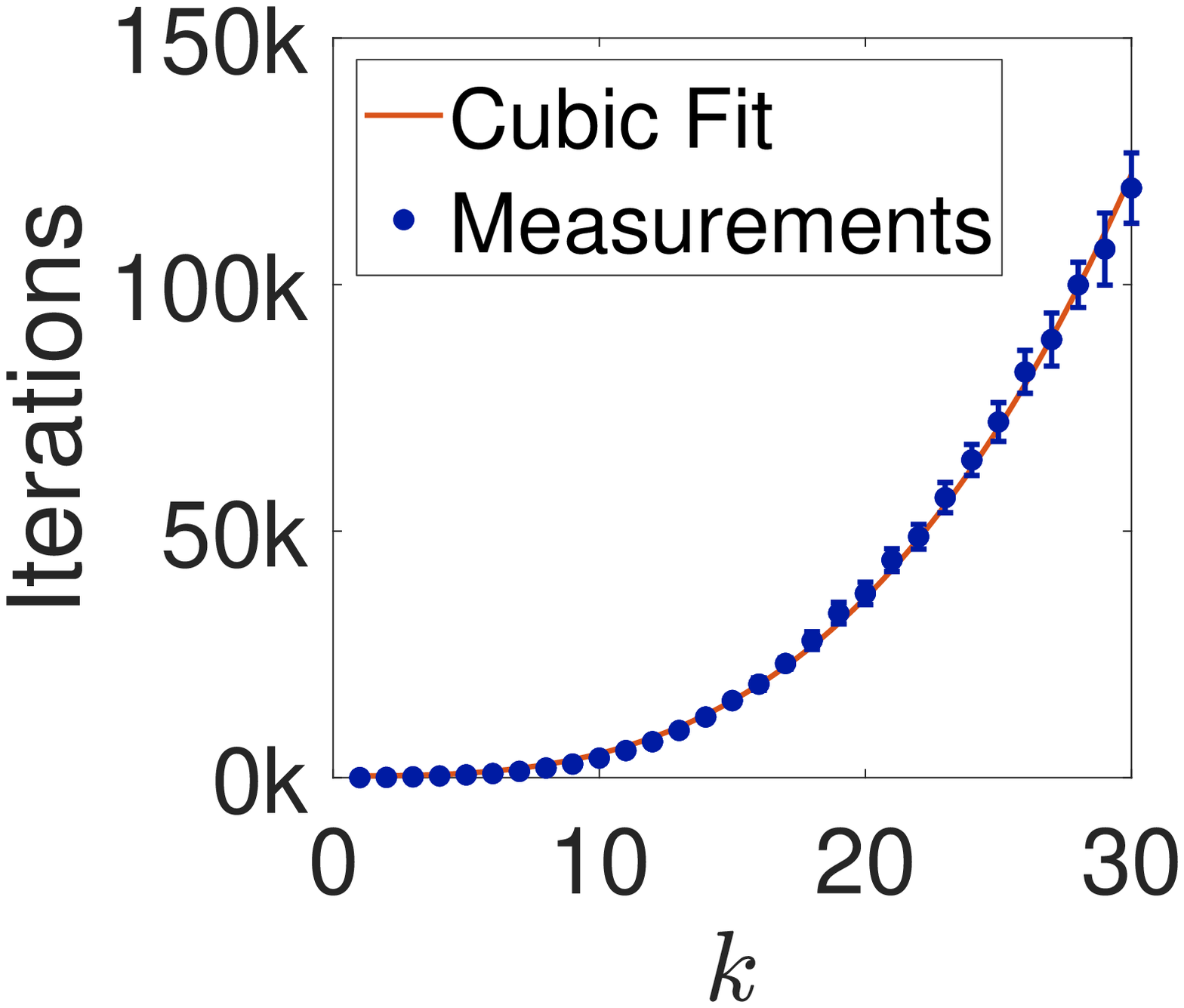}~
\includegraphics[height=2.7cm]{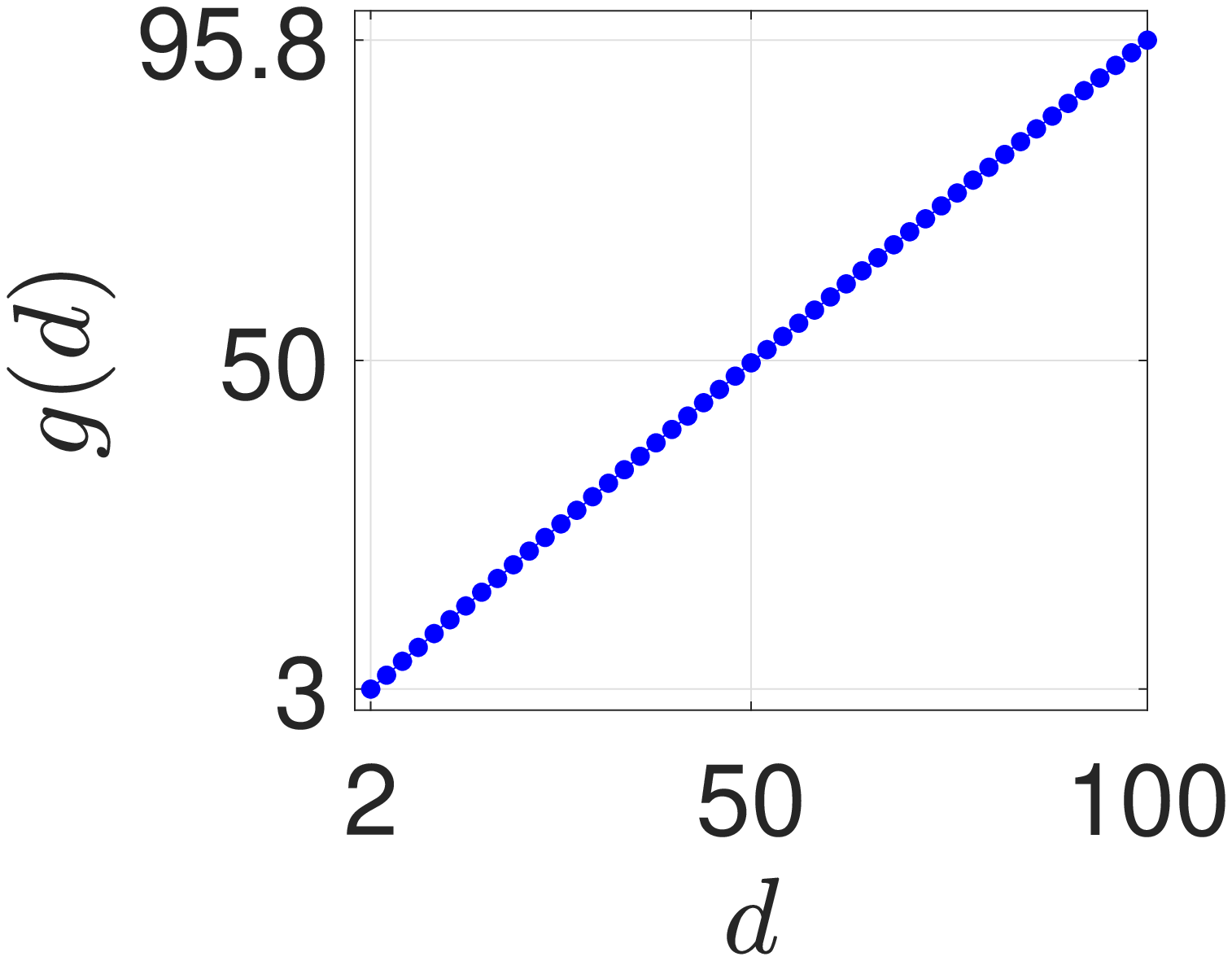}
\caption{\small Convergence times as a function of frequency for data in $\Sphere^2$. Left: no bias ($m=16000$, $n=1001$, $\kappa=1$, and $\eta=0.01$; training odd frequencies was stopped after 40K iterations with no significant reduction of error). Left-center: with bias (same parameters). Right-center: deep residual network (10 hidden layers with $m=256$, $n=5000$, $\eta=0.001$, weight initialization as in \cite{he2015}, bias - uniform). The data lies on a 2D sphere embedded in $\Real^{30}$ at a random rotation.
Growth estimates from left, $O(k^{2.74}), O(k^{2.87}), O(k^{3.13})$.
Right: Convergence exponent as a function of dimension. $g(d)=\lim_{k \rightarrow \infty}-\frac{\log c_k^d}{\log k}$ estimated by calculating the coefficients up to $k=1000$, indicating that the coefficients decay roughly as $1/k^d$.}
\label{fig:speed2d}
\end{figure}

Next we compute the eigenvalues of both $K^\infty$ and $\bar K^\infty$ (for simplicity we show only the case of even $d$, see supplementary material for the calculations).  We find for networks without bias:
\begin{equation}  \label{eq:evnobias}
    a_k^d =  \left\{ \begin{array}{ll}
    C_1(d,0) \frac{1}{d 2^{d+1}}{{d \choose \frac{d}{2}}} & k=0\\
    %C_1(d,k) \sum_{q=\lceil \frac{k}{2} \rceil}^{k+\frac{d-2}{2}} C_2(q,d,k) \frac{1}{2(2q-k+2)} & k=1\\
    C_1(d,1) \sum_{q=1}^{d} C_2(q,d,1) \frac{1}{2(2q+1)} & k=1\\
    C_1(d,k) \sum_{q=\lceil \frac{k}{2} \rceil}^{k+\frac{d-2}{2}} C_2(q,d,k) \frac{1}{2(2q-k+2)} \left(1 - \frac{1}{2^{2q-k+2}}{2q-k+2 \choose \frac{2q-k+2}{2}} \right) & k \ge 2 \text{~~even} \\
    0 & k \ge 2 \text{~~odd,}
    \end{array}\right.
\end{equation}
with
\[
C_1(d,k) = \frac{\pi^\frac{d}{2}}{(\frac{d}{2})} \frac{(-1)^k}{2^k} \frac{1}{\Gamma(k+\frac{d}{2})},
~~~~~~
C_2(q,d,k) = (-1)^q {k+\frac{d-2}{2} \choose q} \frac{(2q)!}{(2q-k)!}.
\]
% where $\Gamma$ denotes the Euler Gamma function.

Adding bias to the network, the eigenvalues for $\bar K^\infty$ are:
{\small
\begin{equation}
\label{eq:evwithbias}
    c_k^d = \left\{
    \begin{array}{ll}
    \frac{1}{2}C_1(d,0) \left( \frac{1}{d 2^{d+1}}{{d \choose \frac{d}{2}}} +
    \frac{2^{d-1}}{d{d-1 \choose \frac{d}{2}}} - \frac{1}{2} \sum_{q=0}^{\frac{d-2}{2}} (-1)^q {\frac{d-2}{2} \choose q} \frac{1}{2q+1} \right) & k=0\\
    %\frac{1}{2}C_1(d,k)\sum_{q = \lceil \frac{k}{2} \rceil}^{k+\frac{d-2}{2}} C_2(q,d,k)
    %\left( \frac{1}{2(2q-k+2)} +
    %\frac{1}{2(2q-k+1)} \left(1 - \frac{1}{2^{2q-k+1}}{2q-k+1 \choose \frac{2q-k+1}{2}} \right) \right) & \text{$k = 1$}\\
    \frac{1}{2}C_1(d,1)\sum_{q = \lceil \frac{k}{2} \rceil}^{k+\frac{d-2}{2}} C_2(q,d,1)
    \left( \frac{1}{2(2q+1)} +
    \frac{1}{4q} \left(1 - \frac{1}{2^{2q}}{2q \choose q} \right) \right) & \text{$k = 1$}\\
    \frac{1}{2}C_1(d,k)\sum_{q = \lceil \frac{k}{2} \rceil}^{k+\frac{d-2}{2}} C_2(q,d,k)
    \left( \frac{-1}{2(2q-k+1)} + \frac{1}{2(2q-k+2)} \left(1 - \frac{1}{2^{2q-k+2}}{2q-k+2 \choose \frac{2q-k+2}{2}} \right) \right) &\text{$k \ge 2$ even} \\
    \frac{1}{2}C_1(d,k)\sum_{q = \lceil \frac{k}{2} \rceil}^{k+\frac{d-2}{2}} C_2(q,d,k)
   \left(
    \frac{1}{2(2q-k+1)} \left(1 - \frac{1}{2^{2q-k+1}}{2q-k+1 \choose \frac{2q-k+1}{2}} \right) \right) & \text{$k \ge 2$ odd.}
    \end{array} \right.
\end{equation}}

We trained two layer networks with and without bias, as well as a deeper network, on data representing pure spherical harmonics in $\Sphere^2$. Convergence times are plotted in Figure~\ref{fig:speed2d}. These times increase roughly as $k^3$, matching our predictions in \eqref{eq:evnobias} and \eqref{eq:evwithbias}. We further estimated numerically the anticipated convergence times for data of higher dimension. As the figure shows (right panel), convergence times are expected to grow roughly as $k^d$. We note that this is similar to the bound derived in \cite{rahaman2018spectral} under quite different assumptions.

\comment{

\textbf{Outline}
\begin{itemize}
\item H infinity is a convolution -- We consider uniform case.
\item Eigenvectors are fourier
\item Calculate the eigenvalues
\item Note that for k>2, odd eigenvalues are zero\\ i.  Discuss relationship with Du’s results that lambda0 > 0.
\item Prove that converge time is 1/lambda.
\item Experiments.
   \begin{itemize}
   \item Plot first and last eigenvectors. (Ronen)
   \item For Du’s net, show (quadratic) curve for even frequencies + lack of convergence for odd frequencies (Yoni has pretty much done)
    \item Same for general two layer network, to be included in the supplementary.  (Yoni)
\end{itemize}
\item Introduce bias
\begin{itemize}
    \item Show it’s still a convolution
    \item Prove Du’s theorem still holds
    \item Derive expression for eigenvalues
\end{itemize}
\item Odd frequencies are not zero.
\item Experiments
\begin{itemize}
    \item Du network, showing odd and even frequencies ~quadratic. (Yoni)
    \item Typical deep network that is ~ quadratic (Shira)
    \item But say we’ve tried a bunch of different real architectures and high frequencies always converge more slowly, but the rate depends on the architecture.  But it always seems to be at least quadratic.
\end{itemize}
\end{itemize}
\section{Higher dimensions  -- David}
\begin{itemize}
\item Background
    i.Funk-Hecke theorem, Geigenbauer polynomials
\item Compute eigenvalues
    i. With and without bias
\item Asymptotic results
\item Experiments in 2D
    i.  Without bias Du network, odd frequencies don’t converge (Yoni)
    ii. With bias, higher frequencies converge more slowly.
1. Could just show deep networks.  (Yoni and Shira)
\end{itemize}

}

\section{Discussion}
We have developed a quantitative understanding of the speed at which neural networks learn functions of different frequencies.  This shows that they learn high frequency functions much more slowly than low frequency functions.  Our analysis addresses networks that are heavily overparameterized, but our experiments suggest that these results apply to real neural networks.

This analysis allows us to understand gradient descent as a frequency based regularization.  Essentially, networks first fit low frequency components of a target function, then they fit high frequency components.  This suggests that early stopping regularizes by selecting smoother functions.  It also suggests that when a network can represent many functions that would fit the training data, gradient descent causes the network to fit the smoothest function, as measured by the power spectrum of the function.  In signal processing, it is commonly the case that the noise contains much larger high frequency components than the signal.  Hence smoothing reduces the noise while preserving most of the signal.  Gradient descent may perform a similar type of smoothing in neural networks.

% \dwj{We could remove this paragraph.}
%Our results also suggest two potential avenues of future research.  First, we have noticed that changing the hyperparameters of a network can affect the rate at which high frequency functions are learned.  This may affect both the training time and the generalization behavior of networks.  Second, Figure \ref{fig:killer}-right shows that networks can interpolate data in surprising ways.  It would be interesting to explore whether this plays a role in the existence of adversarial examples \cite{szegedy2013intriguing}.

%\subsubsection*{Acknowledgments}
{\small \noindent\textbf{Acknowledgments}.
The authors thank Adam Klivans, Boaz Nadler, and Uri Shaham for helpful discussions. This material is based upon work supported by the National Science Foundation under Grant No. DMS1439786 while the authors were in residence at the Institute for Computational and Experimental
Research in Mathematics in Providence, RI, during the Computer Vision program. This research is
supported by the National Science Foundation under grant no. IIS-1526234.
}

{\small
\bibliographystyle{plain}
\bibliography{arxiv}
}

\appendix
\section*{Appendix}

\section{Cross entropy loss}

While this is outside the scope of the theoretical results in the paper, we tested the convergence rate of a network with a single hidden layer with the cross entropy loss. We used a binary classification task. To construct our target classes, for every integer $k>0$ we produced data on the 1D circle according to the function $\cos(k\theta)$, and then thresholded it, assigning class 1 if $\cos(k\theta) > 2/3$, -1 if $\cos(k\theta)<2/3$ and omitted points for which $|\cos(k\theta)| \le 2/3$. As with the MSE loss, here too we see a near quadratic convergence rate, see Figure~\ref{fig:cross-entropy}.

\begin{figure}[h]
\centering
\includegraphics[height=3cm]{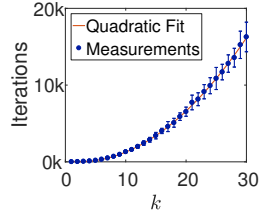}
\caption{\small Number of iterations to convergence as a function of target frequency with the cross entropy loss. A deep residual network is used with 10 hidden layers including bias, $m=256$, $\eta=0.05$, $n=1001$. Weight initialization as in [13], bias - uniform. Leading exponent is estimated as $O(K^{2.34})$.}
\label{fig:cross-entropy}
\end{figure}

\section{Eigenvalues of $H^\infty$ with $d>1$}
Using the Funk-Hecke theorem, we can find the eigenvalues of $H^\infty$ in the continuous limit by integrating the product of the convolution kernel with spherical harmonics.  We first collect together a number of formulas and integrals that will be useful.  We then show how to use the Funk-Hecke theorem to formulate the relevant integrals, and finally compute the results.

\subsection{Useful integrals and equations}

$\int_0^\pi \cos^n \theta d\theta$ is $\pi$ for $n=0$ and 0 for $n=1$. For $n>1$ we use integration by parts
\begin{equation}
    \int_0^\pi \cos^n \theta d\theta = \left. \frac{cos^{n-1}\theta \sin\theta}{n} \right|_0^\pi +
    \frac{n-1}{n} \int_0^\pi \cos^{n-2}\theta d\theta.
\end{equation}
The first term vanishes and we obtain
\begin{equation}  \label{eq:intcn}
    \int_0^\pi \cos^n\theta d\theta
    =\left\{
    \begin{array}{ll}
    \pi \frac{n-1}{n}\frac{n-3}{n-2}...\frac{1}{2} = \frac{\pi}{2^{n}}{{n \choose \frac{n}{2}}} ~~& n \text{~is even.}\\
    0 &  n \text{~is odd}
    \end{array} \right.
\end{equation}

$\int_0^\pi \sin^n\theta d\theta$ is $\pi$ for $n=0$ and 2 for $n=1$. For $n>1$ we integrate by parts
\begin{equation}
    \int_0^\pi \sin^n \theta d\theta = \left. \frac{-\sin^{n-1} \theta \cos \theta}{n} \right|_0^\pi + \frac{n-1}{n} \int_0^\pi \sin^{n-2}\theta d\theta.
\end{equation}
The first term vanishes, and we obtain
\begin{equation}  \label{eq:snt}
    \int_0^\pi \sin^n \theta d\theta = \left\{
    \begin{array}{ll}
      \pi\frac{n-1}{n}\frac{n-3}{n-2}...\frac{1}{2} =
      \frac{\pi}{2^{n}}{{n \choose \frac{n}{2}}} & n \text{~is even.}\\
      2 \frac{n-1}{n}\frac{n-3}{n-2}...\frac{2}{3} =
      \frac{2^{n+1}}{(n+1){n \choose \frac{n+1}{2}}} ~~& n \text{~is odd}
    \end{array} \right.
\end{equation}

Next we wish to compute $\int_0^\pi \theta cos^n \theta \sin \theta d\theta$ for $n \ge 1$. Integrating by parts
\begin{equation}
    \int_0^\pi \theta cos^n \theta \sin \theta d\theta = \left. - \frac{\theta \cos^{n+1}\theta}{n+1} \right|_0^\pi + \int_0^\pi \frac{\cos^{n+1}\theta}{n+1}d\theta.
\end{equation}
Using \eqref{eq:intcn} this we obtain
\begin{eqnarray}
    \int_0^\pi \theta cos^n \theta \sin \theta d\theta &=& \frac{(-1)^n \pi}{n+1} + \left\{
    \begin{array}{ll}
    0 &  n \text{~is even}\\
      \frac{\pi}{n+1}\frac{n}{n+1}\frac{n-2}{n-1}...\frac{1}{2} ~~& n \text{~is odd}
    \end{array} \right. \nonumber\\ \label{eq:tcns}
    &=& \left\{ \begin{array}{ll}
    \frac{\pi}{n+1} & n \text{~is even.}\\
    \frac{\pi}{n+1} \left(-1 + \frac{1}{2^{n+1}}{n+1 \choose \frac{n+1}{2}} \right) & n \text{~is odd} \end{array} \right.
    \label{eq:inttcostsint}
\end{eqnarray}

Next
\begin{equation}
    \int_0^\pi \theta\cos\theta\sin^n\theta d\theta = \left.  \frac{\theta\sin^{n+1}\theta}{n+1} \right|_0^\pi - \int_0^\pi \frac{\sin^{n+1}\theta}{n+1} d\theta
\end{equation}
The first term vanishes and we obtain from \eqref{eq:snt}
\begin{equation}  \label{eq:tcsn}
    \int_0^\pi \theta\cos\theta\sin^n\theta d\theta = \left\{
    \begin{array}{ll}
      -\frac{2^{n+2}}{(n+1)(n+2){n+1 \choose \frac{n+2}{2}}} ~~& n \text{~is even}\\
      -\frac{\pi}{(n+1)2^{n+1}}{{n+1 \choose \frac{n+1}{2}}} & n \text{~is odd.}
      \end{array} \right.
\end{equation}

Other useful equations
\begin{equation}
\label{eq:expand}
    (1-t^2)^p = \sum_{q=0}^p (-1)^q {p \choose q} t^{2q}
\end{equation}
and its $k$'th derivative,
\begin{equation} \label{eq:dpoly}
    \frac{d^k}{dt^k} (1-t^2)^p = \sum_{q = \lceil \frac{k}{2} \rceil}^p C_2(q,d,k) t^{2q-k}
\end{equation}
where we denote
\begin{equation}  \label{eq:c2}
    C_2(q,d,k) = (-1)^q {p \choose q} \frac{(2q)!}{(2q-k)!}
\end{equation}

\begin{equation}  \label{eq:inttn}
    \int_{-1}^1 t^n dt = \left. \frac{t^{n+1}}{n+1} \right|_{-1}^1 = \frac{1-(-1)^{n+1}}{n+1} = \left\{ \begin{array}{ll}
        \frac{2}{n+1} & n \text{~is even}\\
        0 & n \text{~is odd}
    \end{array} \right.
\end{equation}

\begin{equation}  \label{eq:inttt2}
    \int_{-1}^1 t(1-t^2)^n dt = 0,
\end{equation}
since this is a product of an odd and even functions.

Finally, using \eqref{eq:inttcostsint} and \eqref{eq:expand},
\begin{eqnarray}  \label{eq:inttsnt}
        \int_{-1}^1 \arccos(t) (1-t^2)^n dt &=& \sum_{q=0}^n (-1)^q {n \choose q} \int_0^\pi \theta \cos^{2q}\theta \sin\theta d\theta \nonumber\\
        &=& \sum_{q=0}^n (-1)^q {n \choose q} \frac{\pi}{2q+1}
\end{eqnarray}

\subsection{The Kernel}

We have
\begin{equation}
H^\infty_{i,j} = \frac{t(\pi-\arccos(t))}{2\pi} = \frac{\cos\theta(\pi - |\theta|)}{2\pi}
\end{equation}
for $\theta$ the angle between $x_i$ and $x_j$ and we use the notation $t=\cos\theta$.  For the case of $x_i$ uniformly sampled on the hypersphere, this amounts to convolution by the kernel:
\begin{equation}
K^\infty = \frac{\pi\cos\theta - \theta\cos\theta}{2\pi}
\end{equation}
The absolute value disappears because on the hypersphere, $\theta$ varies between 0 and $\pi$.

For the bias, the kernel changes to
\begin{equation}
\bar K^\infty = \frac{(t+1)(\pi-\arccos(t))}{4\pi} = \frac{(\cos\theta+1)(\pi - \theta)}{4\pi}
\end{equation}

We can divide the integrals we need to compute into four parts.  We denote:
\begin{align}
    &K_1 = \frac{t}{2} = \frac{\cos\theta}{2} \label{eq:K1}\\
    &K_2 = -\frac{t\arccos(t)}{2\pi} = -\frac{\theta\cos\theta}{2\pi} \label{eq:K2}\\
    &K_3 = \frac{1}{2} \label{eq:K3}\\
    &K_4 = -\frac{\arccos(t)}{2\pi} = -\frac{\theta}{2\pi} \label{eq:K4}
\end{align}
This gives us $K^\infty = K_1 + K_2$.  We denote $K^b = K_3 + K_4$.  This is the new component introduced by bias.  Then we have $\bar{K}^\infty = \frac{1}{2}(K^\infty + K^b) = \frac{1}{2}(K_1 + K_2 + K_3 + K_4)$.  We will use $a_k^d$ to denote the coefficient for frequency $k$ of the harmonic transform of $K^\infty$, in dimension $d$.  We use $b_k^d$ to denote the coefficient of the transform for just the bias term, $K^b$.  And finally, $c_k^d$ denotes the coefficient for the complete kernel with bias, $\bar{K}^\infty$, so that $c_k^d = a_k^d + b_k^d$.

\subsection{Application of the Funk Hecke theorem}

The eigenvalues of $H^\infty$ can be found by projecting the kernel onto the spherical harmonics, that is, by taking their transform. It is only necessary to do this for the zonal harmonics. This is because the kernel is written so that it only has components in the zonal harmonic.  Suppose the dimension of $x_i$ is $d+1$, so it lies on $\Sphere^d$,  and we want to compute the transform for the $k$'th order harmonic.  We have:
\begin{equation}
    a_k^d = Vol(\Sphere^{d-1}) \int_{-1}^1 K^\infty(t) P_{k,d}(t)(1-t^2)^{\frac{d-2}{2}}dt,
\end{equation}
where $Vol(\Sphere^{d-1})$ denotes the volume of $S^{d-1}$, given by
\begin{equation}  \label{eqapp:vol}
    Vol(\Sphere^{d-1}) = \frac{\pi^\frac{d}{2}}{\Gamma(\frac{d}{2} + 1)}
\end{equation}
and $P_{k,d}(t)$ denotes the Gegenbauer polynomial, given by the formula:
\begin{equation}  \label{eqapp:Gegenbauer}
    P_{k,d}(t) = \frac{(-1)^k}{2^k}\frac{\Gamma(\frac{d}{2})}{\Gamma(k+\frac{d}{2})}\frac{1}{(1-t^2)^{\frac{d-2}{2}}}\frac{d^k}{dt^k}(1-t^2)^{k+\frac{d-2}{2}}
\end{equation}
$\Gamma$ is Euler's gamma function whose formulas for integer values of $n$ are:
\begin{eqnarray}
\Gamma(n) & = & (n-1)!\\
\Gamma(n + \frac{1}{2}) & =& (n-\frac{1}{2})(n-\frac{3}{2})...\frac{1}{2}\pi^\frac{1}{2}
\end{eqnarray}

Substituting for these terms we obtain
\begin{eqnarray}
a_k^d & = &
\frac{\pi^\frac{d}{2}}{\Gamma(\frac{d}{2} + 1)} \int_{-1}^1 K^\infty(t) \frac{(-1)^k}{2^k}\frac{\Gamma(\frac{d}{2})}{\Gamma(k+\frac{d}{2})}\frac{1}{(1-t^2)^{\frac{d-2}{2}}}\nonumber\\
&&\left(\frac{d^k}{dt^k}(1-t^2)^{k+\frac{d-2}{2}}\right)
(1-t^2)^\frac{d-2}{2}dt\\
& = &
\frac{\pi^\frac{d}{2}}{\Gamma(\frac{d}{2} + 1)} \int_{-1}^1 K^\infty(t) \frac{(-1)^k}{2^k}\frac{\Gamma(\frac{d}{2})}{\Gamma(k+\frac{d}{2})}\frac{d^k}{dt^k}(1-t^2)^{k+\frac{d-2}{2}}dt\\
& = &
\frac{\pi^\frac{d}{2}}{\Gamma(\frac{d}{2} + 1)} \frac{(-1)^k}{2^k} \frac{\Gamma(\frac{d}{2})}{\Gamma(k+\frac{d}{2})} \int_{-1}^1 K^\infty(t) \frac{d^k}{dt^k}(1-t^2)^{k+\frac{d-2}{2}}dt\\
& = & \label{eq:intwitht}
C_1(d,k) \int_{-1}^1 K^\infty(t) \frac{d^k}{dt^k}(1-t^2)^{k+\frac{d-2}{2}}dt\\
%& = &
%\frac{\pi^\frac{d}{2}}{(\frac{d}{2} + 1)} \frac{(-1)^k}{2^k} \frac{1}{\Gamma(k+\frac{d}{2})} \int_{-1}^1 \frac{\pi\cos\theta - |\theta|\cos\theta}{2\pi} \frac{d^k}{dt^k}(1-t^2)^{k+\frac{d-2}{2}}dt\\
%& = & \label{eq:intwitht}
% C_1(d,k)\int_{-1}^1 \frac{\pi\cos\theta - |\theta|\cos\theta}{2\pi} \frac{d^k}{dt^k}(1-t^2)^{k+\frac{d-2}{2}}dt
\end{eqnarray}
with
\[
C_1(d,k) = \frac{\pi^\frac{d}{2}}{(\frac{d}{2})} \frac{(-1)^k}{2^k} \frac{1}{\Gamma(k+\frac{d}{2})}
\]

To simplify the expressions we obtain, we will assume $d$ is even in what follows.  For the cases with and without bias we first compute the DC component of the parts of the kernels, and then compute the coefficients for $k>0$.

\subsection{Calculating the coefficients: no bias}

\noindent
\underline{$\mathbf{k=0}$}:
\begin{equation}
    a_0^d = C_1(d,k) \int_{-1}^1 K^\infty(t) (1-t^2)^\frac{d-2}{2} dt.
\end{equation}
First we consider $K_1$ \eqref{eq:K1}.  Using \eqref{eq:inttt2} we have
\begin{equation}
    \frac{1}{2} \int_{-1}^1 t(1-t^2)^{\frac{d-2}{2}} dt = 0.
\end{equation}
Next, we consider $K_2$ \eqref{eq:K2}.  Using \eqref{eq:tcsn} we have
\begin{equation}
    -\frac{1}{2\pi}\int_0^\pi \theta\cos\theta\sin^{d-1}\theta d\theta =
      \frac{1}{d 2^{d+1}}{{d \choose \frac{d}{2}}}
\end{equation}
Therefore,
\begin{equation}
    a_0^d = C_1(d,k)
      \frac{1}{d 2^{d+1}}{{d \choose \frac{d}{2}}}
\end{equation}

\noindent \underline{$\mathbf{k > 0}$}:

\begin{equation}
    a_k^d = C_1(d,k) \int_{-1}^1 K^\infty(t) \frac{d^k}{dt^k}(1-t^2)^p dt
\end{equation}
where we denote $p=k+\frac{d-2}{2}$, noting that $p \ge k$. Using \eqref{eq:dpoly}
\begin{equation} \label{eq:fullnobias}
    a_k^d = C_1(d,k) \sum_{q=\lceil \frac{k}{2} \rceil}^p C_2(q,d,k) \int_{-1}^1 K^\infty(t) t^{2q-k} dt
\end{equation}
Considering $K_1$, and using \eqref{eq:inttn}
\begin{equation}  \label{eq:part1}
    \frac{1}{2}\int_{-1}^1 t^{2q-k+1} dt = \left\{ \begin{array}{ll}
        0 & k \text{~is even} \\
        \frac{1}{2q-k+2} & k \text{~is odd}
    \end{array} \right.
\end{equation}
Considering $K_2$, and using \eqref{eq:inttcostsint}
\begin{equation}  \label{eq:part2}
   \frac{1}{2\pi} \int_0^\pi \theta \cos^{2q-k+1}\theta \sin\theta d\theta = \left\{ \begin{array}{ll} \frac{1}{2(2q-k+2)} \left(-1  + \frac{1}{2^{2q-k+2}}{2q-k+2 \choose \frac{2q-k+2}{2}} \right)  & k \text{~is even} \\
    \frac{1}{2(2q-k+2)} & k \text{~is odd.} \end{array} \right.
\end{equation}
%Since $K=\frac{t}{2}-\frac{\theta\cos\theta}{2\pi}$, we multiply \eqref{eq:part1} by 0.5 and \eqref{eq:part2} by $\frac{1}{2\pi}$, then take the difference and sum for $q$ (as per \eqref{eq:dpoly}),  obtaining,
Combining equations \eqref{eq:fullnobias}, \eqref{eq:part1}, and \eqref{eq:part2} we obtain:
\begin{equation}  \label{eq:nobias}
    a_k^d = C_1(d,k) \sum_{q=\lceil \frac{k}{2} \rceil}^p C_2(q,d,k) \left\{ \begin{array}{ll}
        \frac{1}{2(2q-k+2)} \left(1 - \frac{1}{2^{2q-k+2}}{2q-k+2 \choose \frac{2q-k+2}{2}} \right) & k \text{~is even} \\
    \frac{1}{2(2q-k+2)} & k \text{~is odd.}
    \end{array}\right.
\end{equation}
As is proven in Thm.~3 in the paper, the coefficients for the odd frequencies in \eqref{eq:nobias} (with the exception of $k=1$) vanish.

\subsection{Coefficients with bias}

Denote the harmonic coefficients of $K^b = K_3 + K_4$ by $b_k^d$ then
\begin{eqnarray}
    b_k^d &=& Vol(S^{d-1}) \int_{-1}^1 K^b(t) P_{k,d}(t)(1-t^2)^\frac{d-2}{2} dt\\
    &=& \frac{1}{2\pi} C_1(d,k) \int_{-1}^1 (\pi - \arccos{(t)}) \frac{d^k}{dt^k}(1-t^2)^{p} dt
\end{eqnarray}

\noindent \underline{$\mathbf{k = 0}$}:

Considering $K_3$, and using \eqref{eq:snt}
\begin{equation}  \label{eq:b0-part1}
    \frac{1}{2}\int_{-1}^1 (1-t^2)^{\frac{d-2}{2}} dt = \frac{1}{2}\int_0^\pi \sin^{d-1} \theta d\theta =  \frac{2^{d-1}}{d{d-1 \choose \frac{d}{2}}}
\end{equation}
Considering $K_4$, and using \eqref{eq:inttsnt},
\begin{equation}  \label{eq:b0-part2}
    \frac{1}{2\pi}\int_{-1}^1 \arccos(t) (1-t^2)^{\frac{d-2}{2}} dt = \frac{1}{2}\sum_{q=0}^{\frac{d-2}{2}} (-1)^q {\frac{d-2}{2} \choose q} \frac{1}{2q+1}
\end{equation}
Combining these we get:
\begin{equation}
    b_0^d = \frac{1}{2} C_1(d,k) \left( \frac{2^{d-1}}{d{d-1 \choose \frac{d}{2}}} - \frac{1}{2} \sum_{q=0}^{\frac{d-2}{2}} (-1)^q {\frac{d-2}{2} \choose q} \frac{1}{2q+1} \right)
\end{equation}

\noindent \underline{$\mathbf{k > 0}$}:

The term associated with $K_3$ vanishes, since ($p>k-1$)
\begin{eqnarray}
    \frac{1}{2} C_1(d,k) \int_{-1}^1 \frac{d^k}{dt^k}(1-t^2)^{p} dt =
    \left. \frac{d^{k-1}}{dt^{k-1}}(1-t^2)^{p} \right|_{-1}^1 = 0
\end{eqnarray}
Therefore,
\begin{eqnarray}
    b_k^d &=& -\frac{1}{2\pi} C_1(d,k) \sum_{q=\lceil\frac{k}{2}\rceil}^p C_2(q,d,k) \int_{-1}^1 \arccos(t) t^{2q-k} dt
\end{eqnarray}
where $p=k+\frac{d-2}{2}$. Replacing $t=\cos\theta$ and using \eqref{eq:inttcostsint}
\begin{equation}
    \int_0^\pi \theta \cos^{2q-k}\theta \sin\theta d\theta = \left\{ \begin{array}{ll}
    \frac{\pi}{2q-k+1} & k \mathrm{~is~even}\\
    \frac{\pi}{2q-k+1} \left(-1 + \frac{1}{2^{2q-k+1}}{2q-k+1 \choose \frac{2q-k+1}{2}} \right) & k \mathrm{~is~odd}
    \end{array}
    \right.
\end{equation}
Putting all this together
\begin{eqnarray}
    b_k^d &=& \left\{ \begin{array}{ll}
     -C_1(d,k) \sum_{q=\lceil\frac{k}{2}\rceil}^p \frac{C_2(q,d,k)}{2(2q-k+1)} & k \text{~is even}\\
     -C_1(d,k) \sum_{q=\lceil\frac{k}{2}\rceil}^p \frac{C_2(q,d,k)}{2(2q-k+1)} \left(-1 + \frac{1}{2^{2q-k+1}}{2q-k+1 \choose \frac{2q-k+1}{2}} \right) & k \mathrm{~is~odd}
    \end{array} \right.
\end{eqnarray}
The final coefficients are given by
\begin{equation}
    c_k^d = \frac{1}{2} (a_k^d + b_k^d)
\end{equation}
where $a_k^d$ is given in \eqref{eq:nobias}, resulting in
\begin{equation}
    c_k^d =
     \frac{1}{2} C_1(d,k)\sum_{q = \lceil \frac{k}{2} \rceil}^p C_2(q,d,k)
     \left\{
    \begin{array}{ll}
   -\frac{1}{2(2q-k+1)}+\frac{1}{2(2q-k+2)} \left(1 - \frac{1}{2^{2q-k+2}}{2q-k+2 \choose \frac{2q-k+2}{2}} \right) ~~&\text{~k is even} \\
    ~~\frac{1}{2(2q-k+2)} +
    \frac{1}{2(2q-k+1)} \left(1 - \frac{1}{2^{2q-k+1}}{2q-k+1 \choose \frac{2q-k+1}{2}} \right) & \text{~k is odd}
    \end{array} \right.
\end{equation}

\comment{

\section{Convergence rate with Bias}

Below we extend \cite{du2018gradient,arora2019fine}'s proofs to allow for bias. As we show, the bias affects a few of the scaling constants, but otherwise does not change the proofs. Below we highlight the changes and refer the reader to the original proofs.

In general, for the proofs we replace every occurance of a point $\x_i$ by its homogeneous coorinate $\bar \x_i=(\x_i^T,1)^T$ and note that $\|\bar\x_i\|=\sqrt{2}$. We append the bias variables to the weights, so $\bar\w_r=(\w_r^T,b_r)^T$ and $\bar W=[\bar\w_1,...,\bar\w_m]$, and note that $W$ is initialized with a gaussian distribution ${\cal N}(0,\kappa^2 I)$ (and $\kappa=1$ in \cite{du2018gradient}'s Thm. 3.2) and $\bias$ is initialized to zero. We accordingly define $\bar\Ind_{ij}=1$ if $\w_i^T\x_j+b_i \ge 0$ and zero otherwise, and modify the (vector) elements of $\bar Z$ to include $a_i\bar\Ind_{ij}\bar\x_j$, so the size of $\bar Z$ is $(d+2)m \times n$. The network predictions at time $t$ are denoted $\bar\uu(t)=\bar Z\bar\w$ and the $\bar H^\infty$ is defined through the kernel $\bar K^\infty=\frac{1}{2\pi}\bar\x_i^T \bar\x_j(\pi-\arccos(\bar\x_i^T \bar\x_j)$. We will finally use $t$ for iteration number, instead of $k$ to avoid confusion with frequency.

\subsection{Extending \cite{arora2019fine}'s Theorem 4.1}

We next extend \cite{arora2019fine}'s Theorem 4.1.
\begin{theorem}
Let $\bar H^\infty=\sum_{i=1}^n \bar\lambda_i\bar\vv_i\bar\vv_i^T$, where $\bar\vv_1,...,\bar\vv_n$ are orthogonal eigenvectors of $\bar H^\infty$. Suppose $\bar\lambda_0=\bar\lambda_{\min}(\bar H^\infty)>0$, $\kappa=O(\frac{\epsilon \delta}{\sqrt{n}})$, $m=\omega(\frac{n^7}{\lambda_0^4\kappa^2\delta^4\epsilon^2})$ and $\eta=O(\frac{\bar\lambda_0}{n^2})$. Then with probability at least $1-\delta$ over the random initialization, for all $t=0,1,2...$ we have
\begin{equation}
    \|\y-\bar\u(t)\|_2 = \sqrt{ \sum_{i=1}^n \left(1-\eta\bar\lambda_i\right)^{2k} (\bar\vv_i^T\y)^2 } \pm \epsilon
\end{equation}
\end{theorem}

Lemma \textbf{C.1} in \cite{arora2019fine} remains unchanged, except that the bound changes by a factor of $\sqrt{2}$, so
\begin{equation}
    \|\bar\w_r(t)-\bar\w_r(0)\| \le \frac{4\sqrt{2n}\|\y-\bar\uu(0)\|_2}{\sqrt{m}\bar\lambda_0}.
\end{equation}
Accordingly, the proof of this lemma changes since $\|\bar\x_i\|=\sqrt{2}$, so $\|\bar\w_r(t+1)-\bar\w_r(t)\|_2 \le \frac{\eta\sqrt{2n}}{\sqrt{m}}\|\y-\bar\uu(t)\|_2$, resulting in the expression above.

The statement in Lemma \textbf{C.2} in \cite{arora2019fine} remains unchanged. For the proof, we let $R=\frac{4\sqrt{2n}\|\y-\bar\uu(0)\|_2}{\sqrt{m}\bar\lambda_0}$. Eq.~(16) changes by a factor of 2, so $|\bar H_{ij}(t)-\bar H_{ij}(0)| \le \frac{2}{m}\sum_{r=1}^m \left( \bar\Ind\{A_{r,i}\}+\bar\Ind\{A_{r,j}\}\right)$. Notice that, since $b_r(0)=0$ the distribution for $\bar\w_r(0)^T\bar\x_i=\w_r(0)\x_i+b_r(0)$ is also ${\cal N}(0,\kappa^2)$. The factor 2 in (16) carries over to the expectation, so $\E[|\bar H_{ij}(t)-\bar H_{ij}(0)|_F] \le \frac{8n^2R}{\sqrt{2\pi}\kappa}$. Applying Markov's inequality yields $\|\bar H(k)-\bar H(0) \le \frac{8n^2R}{\sqrt{2\pi}\kappa\delta}=\frac{23n^{5/2}\|\y-\bar\u(0)\|_2}{\sqrt{\pi m}\lambda_0\kappa\delta}$. The first part of the lemma follows from this and from $\y-\bar\u(0)\|_2=O(\sqrt{n/\delta}$ (due to extension of Theorem 3.1 \rb{Check this reference}). For the second part we have $\|\bar Z(k)-\bar Z(0)\|_F^2 \le \frac{8n^{3/2}\|\y-\bar\u(0)\|_2}{\sqrt{\pi m}\lambda_0\kappa\delta}$.

The statement in Lemma \textbf{C.2} in \cite{arora2019fine} too remains unchanged. For the proof, we note that $\bat H_{ij}(0)$ is the average of $m$ i.i.d. random variables bounded in $[0,2]$ with expectation $\bar H^\infty_{ij}$. Therefore, Hoeffding's inequality yields $|\bar H_{ij}(0)-\bar H^\infty_{ij}| \le \sqrt{\frac{2\log(2/\delta')}{m}}$. Letting $\delta'=\delta/n^2$ and applying the union bound yield $\|\bar H(0)-\bar H^\infty\|_F^2 \le \frac{2n^2\log(2n^2/\delta)}{m}$.

Finally, for the proof of the extension of Thm. 4.1in \cite{arora2019fine}, plugging (17) and the expression for $R$ into $\E\left[\bar S_i\right] \le \frac{8\sqrt{mn}\|\y-\bar\u(0)\|_2}{\sqrt{\pi}\kappa\lambda_0}$. Then, (21) and (23) change by a factor of 2, with $\epsilon_i(t) \le \sqrt{\frac{2}{m}}\sum_{r \in \bar S_i} \|\bar\w_r(t+1)-\bar\w_r(t)\|_2 \le \frac{2\eta\sqrt{n}|\bar S_i|}{m}\|\bar\uu(t)-\y\|_2$ and $\epsilon_i'(t) \le \frac{2\eta\sqrt{n}|\bar S_i|}{m}\|\bar\uu(t)-\y\|_2$. These lead to the bound $\|\mathbf{\epsilon}(t)\|_2 \le \sum_{i=1}^n|\epsilon_i(t)+\epsilon_i'(t)| \le \frac{4\eta\sqrt{n}|\bar S_i|}{m}\|\bar\uu(t)-\y\|_2 = O\left(\frac{\eta n^{5/2}}{\sqrt{m}\kappa\bar\lambda_0\delta^{3/2}}\right)\|\bar\uu(t)-\y\|_2$. Later, after (27) we note that $I-\eta\bar H^\infty$ is positive definite because $\|\bar H^\infty\|_2 \le \mathrm{tr}[\bar H^\infty]=n$. \rb{Check this!}. The rest of the proof remains unchanged.

\subsection{Extending \cite{arora2019fine}'s Theorem 5.1}

\cite{arora2019fine}'s Theorem 5.1 entails a factor of $\sqrt{2}$:
\begin{theorem}
Fix an error parameter $\epsilon>0$ and failure probability $\delta \in (0,1)$. Suppose our data $S=\{(\bar\x_i,y_i\}_{i=1}^n$ are i.i.d.\ samples from a $(\bar\lambda_0,\delta/3,n)$-non-degenerate distribution ${\cal D}$, and $\kappa=O(\frac{\epsilon\bar\lambda_0\delta}{\sqrt{n}})$, $m \ge \kappa^{-2}\mathrm{poly}(n,\bar\lambda_0^{-1},\delta^{-1},\epsilon^{-1})$. Consider any loss function $\ell:\Real\times\Real\rightarrow[0,1]$ that is 1-Lipschitz in the first argument such that $\ell(y,y)=0$. Then with probability at least $1-\delta$ over the random initialization and the training samples, the two layer neural network (with bias) $f_{\bar W(t),\aw}$ trained by GD for $t \ge \Omega(\frac{1}{\eta\bar\lambda_0})\log\frac{1}{\delta\epsilon}$ iterations has population loss $L_{\cal D}(f_{\bar W(t),\aw})=\Ex_{(\bar \x,y)\sim{\cal D}}[\ell(f_{\bar W(t),\aw}(\x),y)]$ bounded as
\begin{equation}
    L_{\cal D}(f_{\bar W(t),\aw}) \le 2\sqrt{\frac{\y^T(H^\infty)^{-1}\y}{n}} + 3\sqrt{\frac{\log(6/\delta)}{2n}} + \epsilon.
\end{equation}
\end{theorem}

The first change in the proof of Lemma 5.3 occurs at (36), since $\|\bar Z\|_F \le \sqrt{2n}$. The $\sqrt{2}$ factor however is removed by the $O$ notation.

For Lemma 5.4 we note that $\|\bar Z(0)\|_F^2=\frac{1}{m}\sum_{r=1}^m(\sum_{i=1}^n\Ind_{r,i}(0)\|\x_i\|^2=2\|Z(0)\|_F^2$. Therefore, by Hoeffding's inequality, with probability at least $1-\delta/2$ we have
\begin{equation}
    \|\bar Z(0)\|_F^2 \le 2n \left( \frac{1}{2}+\sqrt{\frac{\log\frac{2}{\delta}}{2m}}\right).
\end{equation}
Consequently,
\begin{eqnarray}
    {\cal R}_S\left({\cal F}_{R,B}^{\bar W(0),\aw}\right) &\le& \frac{B}{n}\left(\sqrt{n}+\sqrt{2n\sqrt{\frac{\log\frac{2}{\delta}}{2m}}} \right) + \frac{2R}{n\sqrt{m}}mn\left(\frac{\sqrt{2}R}{\sqrt{\pi}\kappa}+\sqrt{\frac{\log\frac{2}{\delta}}{2m}} \right)\\
    &=& \frac{B}{\sqrt{n}}\left(1+\left(\frac{4\log\frac{2}{\delta}}{m}\right)^{1/4}\right) + \frac{2\sqrt{2}R^2\sqrt{m}}{\sqrt{\pi}\kappa} + R\sqrt{2\log\frac{2}{\delta}}
\end{eqnarray}

For the proof of the Theorem, plugging in the the bias adjusted lemma 5.4 into (iii) yields
\begin{equation}
    {\cal R}_S\left({\cal F}_{R,B}^{\bar W(0),\aw}\right) \le \frac{\y^T(H^\infty)^{-1}\y}{n} + \frac{\epsilon}{4},
\end{equation}
yielding the $\sqrt{2}$ factor in the theorem statement.

}

\end{document}